\documentclass{article}

\usepackage[preprint]{neurips_2026}
\usepackage{amsmath,amssymb,amsthm}
\usepackage{graphicx}
\usepackage{booktabs}
\usepackage{tabularx}
\usepackage{algorithm}
\usepackage{algpseudocode}
\let\REQUIRE\Require
\let\STATE\State
\let\FOR\For
\let\ENDFOR\EndFor
\let\IF\If
\let\ENDIF\EndIf
\let\WHILE\While
\let\ENDWHILE\EndWhile
\let\ELSE\Else
\let\RETURN\Return
\let\COMMENT\Comment
\usepackage{enumitem}
\usepackage{comment}

\usepackage{float}
\floatstyle{ruled}
\restylefloat{algorithm}

\usepackage{tikz}
\usetikzlibrary{arrows.meta,patterns,positioning,calc}

\makeatletter
\@ifpackageloaded{natbib}{}{\usepackage[round,authoryear]{natbib}}
\makeatother
\let\cite\citep

\usepackage{hyperref}
\hypersetup{hidelinks}
\usepackage{url}
\usepackage{xcolor}

\setlist{topsep=3pt plus 1pt, itemsep=2pt plus 1pt, parsep=0pt, partopsep=0pt}

\makeatletter
\@ifundefined{bibsep}{}{\setlength{\bibsep}{1.2pt plus 0.4pt}}
\makeatother

\newtheorem{theorem}{Theorem}[section]
\newtheorem{lemma}[theorem]{Lemma}
\newtheorem{corollary}[theorem]{Corollary}
\newtheorem{proposition}[theorem]{Proposition}
\theoremstyle{definition}
\newtheorem{definition}[theorem]{Definition}
\theoremstyle{remark}

\newtheorem{observation}[theorem]{Observation}

\title{Trivium: Temporal Regret as a First-Class Objective for Causal-Memory Controllers}

\author{%
 Edward Y. Chang \\ Computer Science \\ Stanford University
}

\begin{document}

\addtocontents{toc}{\protect\setcounter{tocdepth}{-1}}

\maketitle

\begin{abstract}
Many agentic systems and LLM pipelines correct mistakes by optimizing outcome reward. This addresses only the \emph{what} of failure: the \emph{why}
and \emph{when} of a mismatch are not systematically logged, reviewed, or corrected, so the same error can recur across episodes. This is a
structural problem, not merely a model-capacity one. We propose \emph{long-horizon temporal regret} as a first-class diagnostic objective alongside
outcome regret and epistemic regret. Temporal regret records how long an
unresolved or incorrect causal model remains action-relevant; epistemic regret records truth-relative posterior error, with entropy or information
gain used operationally when the true graph is unavailable. Modeling an agent as a stream of $E$ episodes, we prove conditional identification results
under explicit intervention, persistence, separation, and detectability assumptions. Outcome-only observation cannot distinguish an observationally
equivalent causal alternative without an intervention channel. Under commit discipline and positive per-probe information, delayed identification has
logarithmic probe complexity in the episode horizon; under $K$ detector-visible changes, the conditional bound extends segment-wise. The corrected
empirical record is narrower. On the primary CausalBench-Seq stream, a hard structural readout records $7.8\pm2.9$ misidentified episodes over
$E=500$, with zero observed stationary errors after identification, while outcome-only controllers remain misidentified throughout. Audit ablations
show that continued posterior updating, rather than CUSUM reopening or local repair, drives posterior recovery on the tested topology changes; local
repair instead reduces committed-graph dispatch exposure from $10.1$ to $7.8$ episodes. The previous empirical logarithmic-envelope verdict based on a
clipped soft score is withdrawn. A real-LLM stream remains preliminary external-validity evidence. Self-learning here means revising an external
causal model, not retraining LLM weights: a substrate-external corrective channel that can wrap any dispatch policy, RL-trained or otherwise.
\end{abstract}

\noindent{\small\textbf{Note on version 2 (closure).} \textbf{Why this version.} This version corrects errors in v1.
The use of the term ``regret'' also caused confusion because the quantities defined here are not standard comparator-based online-learning regret.
Subsequent work will use ``liability''
or ``debt'' to be clear.
\textbf{What is corrected.} We correct the epistemic quantity and scope the formal bounds to their assumptions (Definition~\ref{def:three-regret}; Theorems~\ref{thm:within}--\ref{thm:drift}); replace the clipped soft-score interpretation with a hard structural readout and withdraw the former empirical logarithmic-envelope verdict (Table~\ref{tab:e1-numbers}; RQ2); distinguish the ideal commit discipline from the executed entropy-gated posterior snapshot (Definition~\ref{def:commit}); separate posterior recovery, committed-graph freshness, CUSUM reopening, and LRCP effects (RQ3--RQ4); and rescope unexecuted studies, analytical calibrations, and pilot evidence in the appendices. Table~\ref{tab:rq-matrix} gives the final claim status. This version is the final corrected statement of Trivium, not an expansion of its claims.}\par



\section{Introduction}
\label{sec:intro}

Long-lived agentic systems must contend with environments where the most consequential causes are often unseen: fleets that misread weather
change-points (e.g., the urban-ride-share scheduler of App.~\ref{app:urs}), supply chains that miss demand shifts, healthcare systems that miss
accumulating risk factors, social and educational systems that fail to connect early warning signs into actionable interventions. In each case,
accumulated regret over unseen confounders does not announce itself in the reward channel; an outcome-only reinforcement-learning system can drive its
training-time outcome regret to zero while, in deployment, spending long stretches acting on a wrong causal
model~\cite{kallus2020confounding,namkoong2020off,levine2020offline}. By the time the unseen cost overtakes nominal performance, the corrective window
has often closed. The reason is structural: the corrective signal, which hidden cause is being mispredicted, is invisible to the reward channel. Yann
LeCun has long argued that current AI systems cannot learn the way humans learn~\cite{lecun2022path}: when a plan fails, a human asks \emph{why}; an
outcome-only learner sees only \emph{how much} reward was lost~\cite{huang2023llms}. This paper formalizes that critique through a three-regret
objective and conditional identification results on a stream of $E$ episodes sharing a persistent causal log. An observational-equivalence separation
shows outcome-only observation cannot distinguish causally distinct worlds without an intervention channel, so temporal miscalibration persists
linearly. With budgeted causal probes and persistent logging, total probe complexity is logarithmic in $E$, inducing logarithmic
delayed-identification temporal regret under the stated accounting convention.

\paragraph{Trivium in three acts.}
We propose \emph{Trivium}, a three-regret informed update mechanism that identifies, quantifies, and traces mistakes across three layers.
\emph{Outcome regret} captures \emph{what}: the conventional reinforcement-learning signal of answer accuracy. \emph{Epistemic regret} captures
\emph{why}: residual uncertainty over the working causal model. \emph{Temporal regret} captures \emph{when}: how long a miscalibrated causal model is
tolerated before correction. Across episodes and long horizons, Trivium records these three signals in a persistent log and uses accumulated
interventional evidence to revise its working causal model. Any improvement in realized outcomes requires the additional dispatch assumptions stated
later and is not inferred from structural identification alone. Concretely, a fleet that ignores weather change-points, a supply-chain controller that
ignores supplier-side shocks, or a triage system that ignores accumulating risk factors pays the same structural cost: exposure to the unseen
confounder accumulates over the horizon until corrective evidence is logged, and the length of that accumulation window is exactly the \emph{when} our
theorems bound (App.~\ref{app:helicopter-example} sketches a compact helicopter-delivery illustration of this mechanism).

\paragraph{Why this matters.}
Treated as a diagnostic objective alongside outcome regret, \emph{temporal} regret motivates a design pattern for domains where repeated structured
action, persistent causal evidence, and an intervention channel are present: tool-using LLM agents~\cite{jin2023cladder,kiciman2023causal},
multi-agent software-engineering loops, hospital-triage systems, and supply-chain controllers among others. Such systems can record not only \emph{how
much} reward was lost, but what causal uncertainty remains and how long an unresolved model is used. Whether exposure can genuinely \emph{compound},
through dependent actions, inherited memories, or downstream agents, is not measured here and is left to future work.

\paragraph{Contributions.}

\begin{enumerate}[leftmargin=1.5em,itemsep=-2pt,topsep=0pt]
\item \textbf{Three-regret functional and outcome-only separation.} A three-regret functional combining outcome, epistemic, and temporal diagnostic
objectives (Definition~\ref{def:three-regret}), paired with an observational-equivalence separation: outcome-only observation cannot distinguish
causally distinct models with identical observational distributions, allowing temporal miscalibration to persist linearly without an intervention
channel (Appendix~\ref{app:separation}).

\item \textbf{Conditional causal-probe complexity and delayed-identification bounds.}
Within an episode, exact null-versus-separated classification over $m=|G^{\mathrm{inf}}|$ candidates requires
$\mathcal{O}(m\sigma^2\kappa_{\min}^{-2}\log T)$ probes under the ideal commit discipline (Theorem~\ref{thm:within}). Across episodes, persistent
evidence gives a logarithmic-in-$E$ total-probe upper bound for informative components under a positive per-probe information condition
(Theorem~\ref{thm:ce-upper}); Proposition~\ref{thm:ce-lower} supplies an adapted horizon-order lower bound with explicitly different structural
factors. The drift extension is conditional on detector-visible changes (Theorem~\ref{thm:drift}). These are formal bounds, not a renewed empirical
fit of the withdrawn clipped soft score.

\item \textbf{Algorithmic instantiation with an implementation audit.} Trivium combines a persistent causal transaction log, budgeted probes, an
entropy-gated posterior snapshot, CUSUM-triggered reopening, and LRCP local repair. The ideal frozen-commit rule and the executed snapshot rule are
stated separately. The LRCP contraction proposition remains conditional, because the repair-disabled ablation shows that the observed posterior
contraction is produced by continued updating, not by LRCP. Direct committed-graph logging instead identifies LRCP's measured benefit as dispatch
freshness, reducing dispatch exposure from $10.1$ to $7.8$ episodes. The dispatch and grounding results remain conditional bridges under their
explicit assumptions.

\item \textbf{Controlled tests with explicit positive and negative results.} RQ1 corroborates the outcome-only observational-equivalence separation.
RQ2 records bounded finite-horizon hard exposure with no observed stationary growth, but does not estimate an asymptotic rate from a single horizon.
RQ3 observes rapid topology recovery while falsifying the designed CUSUM reopen mechanism on that change class; a separate variance stream shows that
a squared-residual detector can see a change while the mean-residual repair signal stays at its baseline rate. RQ4 falsifies the attribution of posterior contraction to LRCP
and identifies dispatch freshness as its measured role. RQ5 remains a preliminary real-LLM pilot under the \textsc{CAP-GSM8K}
protocol~\citep{ChangACL2026,cobbe2021gsm8k}.
\end{enumerate}

\noindent
Each of the four contributions is backed by a theorem (or definition / proposition / lemma / corollary) and a stated experiment or ablation; the full mapping appears in Table~\ref{tab:traceability} (App.~\ref{app:traceability}).




\section{Related Work}
\label{sec:related}

Trivium connects four literatures, each expanded in a parallel subsection of Appendix~\ref{app:related-ext}. First, offline and confounded RL show
that outcome-only learning is limited under unobserved confounding (the ``lacking-learning'' critique~\cite{lecun2022path}); our
Appendix~\ref{app:separation} separation is complementary, using observationally equivalent SCMs whose passive outcome streams match but whose
interventional distributions differ. Second, self-improving LLM methods, LLM causal-reasoning benchmarks, and recent object-centric world models such
as Causal-JEPA~\cite{nam2026causaljepa} show the value of model-internal critique and latent causal inductive biases; Trivium is complementary,
placing the corrective signal in a substrate-external causal log and asking when accumulated evidence should revise future policy. Third, causal
bandits study intervention-budgeted identification mostly in single-episode settings; Trivium extends this to episode streams with persistent causal
evidence and a total-probe lower bound (Audibert--Bübeck-style~\cite{audibert2010best}). Fourth, classical identification
(do-calculus~\cite{pearl2009causality}), CUSUM change-point detection, transactional memory, and belief revision supply the primitives used by the CTL
and the drift-recovery analysis. Appendix~\ref{app:rw-llm} additionally discusses concurrent anonymized work on the orthogonal within-episode
causal-critique side of the problem (\emph{why} within an episode; Trivium addresses \emph{when} across an episode stream).



\section{Theoretical Framework}
\label{sec:theory}

Figure~\ref{fig:atrium-arch} depicts the end-to-end architecture; symbols are defined where first introduced and collected in Table~\ref{tab:notation}
(App.~\ref{app:notation}). We formalize long-horizon scheduling as a causally-structured episodic decision process, define a three-regret functional
over outcome, epistemic, and temporal diagnostic objectives, and state the convergence and drift-recovery guarantees that the experiments of
\S\ref{sec:experiments} test. Proofs are in Appendices~\ref{app:proofs}--\ref{app:separation}; positioning against adjacent literatures (causal
bandits, change-point detection, identification-under-interventions, transactional memory) is in Appendix~\ref{app:background-ext}.


\begin{figure}[!tbp]
\centering
\definecolor{atrmAccent}{RGB}{42,110,220}        
\definecolor{atrmSubstrate}{RGB}{200,110,40}     
\definecolor{atrmThmRed}{RGB}{180,40,30}         
\definecolor{atrmSubT}{RGB}{75,75,75}            
\definecolor{zone1Bg}{RGB}{225,238,252}          
\definecolor{zone2Bg}{RGB}{255,243,215}          
\definecolor{zone3Bg}{RGB}{225,245,228}          
\definecolor{zone4Bg}{RGB}{240,230,250}          
\resizebox{0.88\textwidth}{!}{%
\begin{tikzpicture}[
 x=1pt, y=-1pt,
 >=Latex,
 ours/.style = {draw=atrmAccent, line width=1.2pt, rounded corners=3pt, fill=white},
 substrate/.style= {draw=atrmSubstrate, line width=0.9pt, dashed, rounded corners=3pt, fill=white},
 zone1bg/.style = {fill=zone1Bg, rounded corners=4pt, draw=none},
 zone2bg/.style = {fill=zone2Bg, rounded corners=4pt, draw=none,
 postaction={pattern=north east lines, pattern color=atrmSubstrate!40}},
 zone3bg/.style = {fill=zone3Bg, rounded corners=4pt, draw=none},
 zone4bg/.style = {fill=zone4Bg, rounded corners=4pt, draw=none},
 banner/.style = {fill=white, draw=gray!50, line width=0.2pt, rounded corners=1.5pt,
 inner xsep=5pt, inner ysep=3pt},
 flow/.style = {->, >=Latex, line width=0.9pt, shorten >=1pt, draw=black!75},
 gate/.style = {->, >=Latex, line width=0.9pt, densely dashed,
 draw=atrmAccent, shorten >=1pt},
 edivr/.style = {gray!55, dash pattern=on 2pt off 2pt},
 Tboxtitle/.style= {font=\Large\bfseries},
 Tboxsub/.style = {font=\normalsize, text=atrmSubT},
 Tbanner/.style = {font=\normalsize\bfseries},
 Tthm/.style = {font=\footnotesize\itshape, text=atrmThmRed},
 Tlbl/.style = {font=\footnotesize, text=atrmSubT},
]

\fill[zone1bg] (20,30) rectangle (755,180);
\node[banner, anchor=north west] at (30,38)
 {\normalsize\bfseries ZONE 1 $\cdot$ WITHIN-EPISODE LOOP};

\draw[ours] (40,60) rectangle (180,100);
\node[Tboxtitle] at (110,74) {Agentic Planner};
\node[Tboxsub] at (110,92) {multi-LLM, uses \(\widehat G_e\)};

\draw[substrate] (210,60) rectangle (350,100);
\node[Tboxtitle] at (280,74) {Environment};
\node[Tboxsub] at (280,92) {planning benchmark};

\draw[ours] (380,60) rectangle (510,100);
\node[Tboxtitle] at (445,74) {Outcome};
\node[Tboxsub] at (445,92) {$(X_t,Y_t)$};

\draw[ours] (540,60) rectangle (690,100);
\node[Tboxtitle] at (615,74) {Identifier};
\node[Tboxsub] at (615,92) {\emph{why?} attribution};

\draw[flow] (180,80) -- (210,80);
\draw[flow] (350,80) -- (380,80);
\draw[flow] (510,80) -- (540,80);

\draw[ours] (40,130) rectangle (240,170);
\node[Tboxtitle] at (140,145) {Outcome Regret};
\node[Tboxsub] at (140,162) {``how much'' lost};

\draw[ours] (270,130) rectangle (470,170);
\node[Tboxtitle] at (370,145) {Epistemic Regret $\rho$};
\node[Tboxsub] at (370,162) {model uncertainty};

\draw[ours] (500,130) rectangle (700,170);
\node[Tboxtitle] at (600,145) {Temporal Regret};
\node[Tboxsub] at (600,162) {miscalibration time};

\draw[flow] (615,100) -- (615,115) -- (140,115) -- (140,130);
\draw[flow] (370,115) -- (370,130);
\draw[flow] (600,115) -- (600,130);

\fill[zone2bg] (20,195) rectangle (755,278);
\node[banner, anchor=north west] at (30,203)
 {\normalsize\bfseries ZONE 2 $\cdot$ CAUSAL TRANSACTION LOG (transactional substrate, prior)};

\draw[substrate] (40,225) rectangle (735,257);
\draw[edivr] (213,225) -- (213,257);
\draw[edivr] (388,225) -- (388,257);
\draw[edivr] (561,225) -- (561,257);
\node[Tboxsub] at (126,237) {Episode 1};
\node[Tboxsub] at (126,251) {(ctx,intv,out,why)};
\node[Tboxsub] at (300,237) {Episode 2};
\node[Tboxsub] at (300,251) {(cross-agent)};
\node[Tboxsub] at (475,237) {Episode 3};
\node[Tboxsub] at (475,251) {(different tasks)};
\node[Tboxsub] at (648,237) {\ldots\ Episode $E$};
\node[Tboxsub] at (648,251) {(growing log)};
\node[Tboxsub, font=\footnotesize\itshape] at (387,272)
 {atomic $\cdot$ snapshot-isolated $\cdot$ persistent};

\draw[flow] (140,170) -- (140,225);
\draw[flow] (370,170) -- (370,225);
\draw[flow] (600,170) -- (600,225);

\fill[zone3bg] (20,293) rectangle (755,410);
\node[banner, anchor=north west] at (30,301)
 {\normalsize\bfseries ZONE 3 $\cdot$ CROSS-EPISODE LEARNING (regret $\to$ learn $\to$ replan)};

\draw[ours] (40,325) rectangle (195,390);
\node[Tboxtitle] at (117,340) {CTL Miner};
\node[Tboxsub] at (117,358) {aggregate mistakes};
\node[Tthm] at (117,377) {Thm ce-upper/lower};

\draw[ours] (220,325) rectangle (390,390);
\node[Tboxtitle] at (305,340) {Confounder Disc.};
\node[Tboxsub] at (305,358) {posterior over $G^{\mathrm{inf}}$};
\node[Tboxsub] at (305,377) {(missing causes)};

\draw[ours] (415,325) rectangle (570,390);
\node[Tboxtitle] at (493,340) {Commit Check};
\node[Tboxsub] at (493,358) {\(G^{\mathrm{inf}}\to \widehat G_e\)};
\node[Tboxsub] at (493,377) {(two-layer graph)};

\draw[ours] (595,325) rectangle (735,390);
\node[Tboxtitle] at (665,340) {Updated \(\widehat G_e\)};
\node[Tboxsub] at (665,358) {working model};
\node[Tboxsub] at (665,377) {for next episode};

\draw[flow] (195,357) -- (220,357);
\draw[flow] (390,357) -- (415,357);
\draw[flow] (570,357) -- (595,357);

\draw[flow] (117,257) -- (117,325);

\draw[flow] (665,390) .. controls (665,445) and (30,445) .. (30,80) -- (40,80);
\node[Tlbl, font=\footnotesize\itshape] at (390,402)
 {replan: next-episode init (Updated $(\widehat G_e) \to$ Planner)};

\fill[zone4bg] (770,30) rectangle (890,410);
\node[banner] at (830,45) {\normalsize\bfseries ZONE 4 $\cdot$ COUPLE};

\draw[ours] (782,65) rectangle (878,120);
\node[Tboxtitle] at (830,81) {CUSUM};
\node[Tboxsub] at (830,98) {change-point};
\node[Tthm] at (830,114) {Thm drift};

\draw[ours] (782,155) rectangle (878,222);
\node[Tboxtitle] at (830,171) {Dispatch};
\node[Tboxtitle] at (830,189) {Coupler};
\node[Tboxsub] at (830,206) {$\rho$-budget gate};
\node[Tthm] at (830,219) {Thm dispatch};

\draw[ours] (782,260) rectangle (878,322);
\node[Tboxtitle] at (830,276) {Grounding};
\node[Tboxsub] at (830,293) {$W_1$ bound};
\node[Tboxsub] at (830,309) {actuator $\varepsilon,\eta$};
\node[Tthm] at (830,320) {Thm grnd};

\draw[flow] (510,88) .. controls (640,88) and (740,90) .. (782,92);
\node[Tlbl] at (640,80) {outcome stream};

\draw[flow] (470,155) .. controls (620,165) and (750,185) .. (782,185);
\node[Tlbl] at (645,178) {$\rho$-budget};

\draw[flow] (280,100) .. controls (280,200) and (500,290) .. (782,285);
\node[Tlbl] at (420,265) {actuator $\varepsilon,\eta$ signal};

\draw[gate] (782,115) -- (761,115) -- (761,285) -- (117,285) -- (117,325);
\node[Tthm] at (440,280) {restart window};

\draw[gate] (782,180) -- (761,180) -- (761,45) -- (110,45) -- (110,60);
\node[Tthm] at (440,40) {$\rho$-gate on action selection};

\draw[gate] (782,300) .. controls (660,310) and (615,190) .. (615,100);
\node[Tthm] at (740,255) {$W_1$ bound};

\end{tikzpicture}%
}
\caption{\textbf{Computational structure of temporal-regret minimization.}
Solid-bordered boxes are Trivium contributions; dashed-bordered boxes are load-bearing prior-work substrate (planning environment and CTL realization). Zone~2 (hatched) is a persistent transactional \emph{Causal Transaction Log}.
Information flows \emph{regret $\to$ log $\to$ learning $\to$ replan}: Zone~1 emits three regret signals; Zone~2 logs one cross-agent entry per
episode; Zone~3 updates the two-layer graph $G^{\mathrm{inf}} \to \widehat G_e$ for the next episode (as executed, $\widehat G_e$ is an entropy-gated
posterior snapshot; see the note at Def.~\ref{def:commit}); Zone~4 optionally couples to outcome regret via CUSUM, $\rho$-budgeted dispatch, and
$W_1$-bounded grounding.}
\label{fig:atrium-arch}
\end{figure}



\subsection{Setup and Three-Regret Functional}
\label{sec:three-regret}

An \emph{episode} \(e \in \{1,\dots,E\}\) is a horizon-\(T\) dispatch
window; at time \(t\), a population of \(M\) agents observes context
\(X_t\), jointly selects actions
\(A_t=(A_t^{(1)},\dots,A_t^{(M)})\), and realizes outcome \(Y_t\).
Setup-only auxiliaries \((M,A_t)\) appear only here; subsequent rate
results use the scalar quantities collected in Table~\ref{tab:notation}.

Three graph objects are distinguished. The \emph{influence graph}
\(G^{\mathrm{inf}}\) is the candidate causal family over which epistemic
uncertainty lives. The true local interventional structure during episode
\(e\) is \(G^{\mathrm{true}}_e\subseteq G^{\mathrm{inf}}\); in stationary
sections we write \(G^{\mathrm{true}}\) for the common value. The committed
working graph at episode \(e\) is \(\widehat G_e\subseteq
G^{\mathrm{inf}}\), with identified edge weights; this is the graph
against which the dispatch policy plans once sufficient causal evidence
has accumulated.

All episodes write to a persistent \emph{Causal Transaction Log} (CTL),
the epistemic memory that makes cross-episode evidence accumulation possible. An
\emph{intervention} \(\mathrm{do}(Z=z)\) is expensive and budget-bounded
per episode at \(B\); the scheduler allocates \(B\) across nodes in
\(G^{\mathrm{inf}}\).

\paragraph{Instance parameters (used throughout).}
\(m := |G^{\mathrm{inf}}|\) is the number of candidate components tested by the exact structural decision, including null components. \(m_0 := |\{Z
\in G^{\mathrm{inf}} : |\kappa_Z| \geq \kappa_{\min}\}|\) is the informative-component cardinality. The separated-effects condition is \(\kappa_Z=0\)
or \(|\kappa_Z|\geq\kappa_{\min}>0\).
\(\sigma^2\) is the sub-Gaussian noise variance on outcomes; \(\Delta_{\max}\)
is the per-step outcome-regret gap; \(d_{\mathrm{out}}^{\max}\) is the
maximum outcome-degree in \(G^{\mathrm{inf}}\) (the bandit-like fan-out
feeding Proposition~\ref{thm:ce-lower}). For embodied deployment
(Theorem~\ref{thm:grounding}), \(L\) is the outcome-map Lipschitz constant,
\(\varepsilon\) bounds actuator error, \(\eta\) bounds observation noise,
and \(p_f\) is the CTL atomic-commit failure rate with bounded failure
magnitude \(\rho_c\). These quantities are instance properties, not
hyperparameters: the scheduler never needs to know them, but their values
appear in the bound constants.

\paragraph{Assumptions and scope.}
\label{par:assumptions}
All theorems below rest on four load-bearing assumptions. (\textbf{A1})
For every episode, the true local interventional structure
\(G^{\mathrm{true}}_e\) lies within the prescribed influence graph
\(G^{\mathrm{inf}}\). (\textbf{A2}) Outcome noise is sub-Gaussian with
variance \(\sigma^2\), and the SCM admits a local-linear approximation
around committed edges. (\textbf{A3}) The CTL satisfies atomicity,
persistence, and snapshot isolation. The asymptotic analysis assumes
vanishing commit-failure rate \(p_f=o(1/\mathrm{polylog}(E))\);
finite-sample commit failures are absorbed by Theorem~\ref{thm:grounding}'s
slack term \(\varepsilon_c\), with the SagaLLM substrate reporting
\(p_f\approx0.7\%\) empirically. (\textbf{A4}) Drift is CUSUM-detectable
on the running CE-EIG stream at threshold \(h>0\). Each
falsifier in Table~\ref{tab:rq-matrix} is designed to probe one of
A1--A4 at its load-bearing boundary.

\begin{definition}[Three-regret functional]
\label{def:three-regret}
Let \(G^{\mathrm{true}}_e\subseteq G^{\mathrm{inf}}\) denote the true
local interventional structure during episode \(e\), let
\(\widehat G_e\subseteq G^{\mathrm{inf}}\) denote the graph committed for
planning at episode \(e\), and let \(P_{e,t}\) denote the scheduler's
posterior over candidate graphs in \(G^{\mathrm{inf}}\) at within-episode
step \(t\) of episode \(e\) (with \(P_e:=P_{e,0}\) at the episode start).
In stationary settings, \(G^{\mathrm{true}}_e=G^{\mathrm{true}}\) for all
\(e\); under drift, the episode index carries the changing structure.
Let \(\pi^\star\) be optimal under full knowledge of the relevant
\(G^{\mathrm{true}}_e\), and let \(\pi_t\) be the deployed policy. We
define three diagnostic regret objectives, one per failure mode of a
long-lived agent:
\[
R_{\mathrm{out}}(E,T)
:=
\sum_{e=1}^{E}\sum_{t=1}^{T}
\mathbb{E}\!\left[
Y_{e,t}^{\pi^\star}-Y_{e,t}^{\pi_t}\mid G^{\mathrm{true}}_e
\right],
\]
\[
R_{\mathrm{epi}}(E,T)
:=
\sum_{e=1}^{E}\sum_{t=1}^{T}
D_{\mathrm{KL}}\!\left(
\delta_{G^{\mathrm{true}}_e}\,\|\,P_{e,t}
\right),
\]
\[
R_{\mathrm{temp}}(E,T)
:=
\sum_{e\in \mathcal{M}}
\Delta_e\,\mathbb{E}\!\left[\min\{\tau_e,T\}\right],
\]
where
\[
\mathcal{M}:=\{e:\widehat G_e\not\equiv G^{\mathrm{true}}_e
\text{ on the subgraph induced by the task-relevant variables of episode }e\}
\]
denotes the miscalibrated-episode set, \(\Delta_e\) is the per-step
regret gap on episode \(e\), and \(\tau_e\) is the stopping time at which
the scheduler's posterior commits to the correct structure for episode
\(e\). Two roles of this quantity must not share a symbol. The truth-relative log-loss below is the \emph{evaluation} quantity, computable only with
$G^{\mathrm{true}}$ and used for analysis; the \emph{operational} quantity accumulated by the executed scheduler (Algorithm~\ref{alg:trivium}) is
posterior confidence relative to the \emph{committed} graph, $-\log P_{e,t}(\widehat G_e)$, a proxy that can be small while the commitment is wrong.
Where the running tally $\mathcal{R}^{\mathrm{epi}}_{e,t}$ gates dispatch or probing, it is this operational proxy, written
$\widehat{\mathcal{R}}^{\mathrm{epi}}_{e,t}$ hereafter. Equivalently, whenever \(P_{e,t}(G^{\mathrm{true}}_e)>0\), the
epistemic evaluation term per step is
\[
D_{\mathrm{KL}}\!\left(
\delta_{G^{\mathrm{true}}_e}\,\|\,P_{e,t}
\right)
=
-\log P_{e,t}(G^{\mathrm{true}}_e).
\]
When \(G^{\mathrm{true}}_e\) is not observable during deployment, posterior
entropy \(H(P_{e,t})\) or cross-episode expected information gain
\(\mathrm{CE\mbox{-}EIG}\) serves as the operational proxy for this
epistemic uncertainty.

Outcome regret captures what was lost, epistemic regret captures residual
uncertainty over the working causal model, and temporal regret captures
how long the scheduler acts on a miscalibrated model before correction.
The three are independently defined diagnostic objectives, not components
of a single additive scalar; we use the informal shorthand
\(R_{\mathrm{total}}:=R_{\mathrm{out}}+R_{\mathrm{epi}}+R_{\mathrm{temp}}\)
only when discussing them jointly as a composite failure-mode profile for
any deployed policy \(\pi_t\), posterior trajectory \(P_{e,t}\), and
committed working graph \(\widehat G_e\subseteq G^{\mathrm{inf}}\).
\end{definition}

The failure mode outcome-only RL cannot see is now explicit:
\(R_{\mathrm{out}}\to 0\) on training can still leave
\(R_{\mathrm{temp}}\) unbounded in deployment because
\(R_{\mathrm{temp}}\) tracks identification delay, not reward loss.
Appendix~\ref{app:separation} proves an observational-equivalence separation:
for a pair of causally distinct SCMs with identical observational transcripts,
every outcome-only learner incurs \(\Omega(T)\) worst-case temporal
miscalibration, whereas a budgeted causal probe identifies the correct
structure in \(O((\sigma^2+\kappa^2)\kappa^{-2}\log T)\) samples.

\subsection{Within-Episode and Cross-Episode Bounds}
\label{sec:cross-episode}

\begin{definition}[Commit discipline]
\label{def:commit}
The scheduler operates under \emph{commit discipline} if it commits an edge, node, or local block to \(\widehat G_e\) only after the relevant EIG drops below \(\log T/T\) and the empirical effect exceeds \(\kappa_{\min}\).
\end{definition}

\noindent\emph{As-executed note (audit).} The released implementation realizes a weaker discipline than this definition and than the frozen-commitment
description used elsewhere: its commit check re-snapshots the posterior MAP whenever posterior entropy falls below a fixed bound ($0.8$), every
episode. The committed graph is therefore an entropy-gated posterior snapshot rather than a frozen object, no absorbing committed state is reachable
under the executed code, and every theorem-to-experiment mapping in this paper should be read against this executed contract. The idealized
frozen-commitment contract remains the stated design and is not what produced the reported numbers.

\begin{theorem}[Within-episode causal-probe complexity and induced temporal regret]
\label{thm:within}
Under commit discipline, sub-Gaussian outcome noise with variance proxy $\sigma^2$, and the separated-effects condition that every one of the $m=|G^{\mathrm{inf}}|$ candidate components satisfies $\kappa_Z = 0$ or $|\kappa_Z|\ge \kappa_{\min}$, there exists a universal constant $c$ such that
\(
 n_Z \ge c\,\sigma^2\kappa_{\min}^{-2}\log(mT/\delta)
\)
causal probes per candidate component suffice. Classifying by the rule $|\hat\kappa_Z|\geq\kappa_{\min}/2$ and union-bounding over all $m$ candidates
commits the correct present/absent decision for every candidate with probability at least $1-\delta$. Thus $N_{\mathrm{probe}}^{\mathrm{within}} =
\mathcal{O}(m\sigma^2\kappa_{\min}^{-2}\log(mT/\delta))$. Under conservative calendar-time accounting, this induces $R_{\mathrm{temp}}(T) =
\mathcal{O}(\Delta_{\max}m^2\sigma^2\kappa_{\min}^{-2}\log(mT/\delta))$.
\end{theorem}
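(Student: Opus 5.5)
The plan is to prove the statement in three layers: a per-candidate concentration bound, a union bound over the $m$ candidates, and a calendar-time accounting step that converts probe counts into temporal regret. For each candidate component $Z\in G^{\mathrm{inf}}$, I model the $n_Z$ causal probes $\mathrm{do}(Z=z)$ as yielding i.i.d.\ effect observations whose mean is $\kappa_Z$. This uses the local-linear approximation of (A2). The noise is sub-Gaussian with variance proxy $\sigma^2$, where the proxy absorbs the contrast between the two intervention arms up to a constant. So $\hat\kappa_Z$, the empirical mean, is sub-Gaussian with proxy $\sigma^2/n_Z$. Hoeffding's inequality for sub-Gaussian means then gives
\[
\Pr\bigl(|\hat\kappa_Z-\kappa_Z|\ge \kappa_{\min}/2\bigr)\le 2\exp\!\bigl(-n_Z\kappa_{\min}^2/(8\sigma^2)\bigr).
\]
Choosing $n_Z\ge 8\sigma^2\kappa_{\min}^{-2}\log(2mT/\delta)$ makes this at most $\delta/(mT)\le\delta/m$. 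This fixes the universal constant $c$, after absorbing the factor $2$ inside the logarithm.

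Next I check that the threshold rule $|\hat\kappa_Z|\ge\kappa_{\min}/2$ is correct on the good event $|\hat\kappa_Z-\kappa_Z|<\kappa_{\min}/2$. The separated-effects condition gives two cases. If $\kappa_Z=0$, then $|\hat\kappa_Z|<\kappa_{\min}/2$, so $Z$ is declared absent. If $|\kappa_Z|\ge\kappa_{\min}$, the triangle inequality gives $|\hat\kappa_Z|>\kappa_{\min}/2$, so $Z$ is declared present. A union bound over the $m$ candidates then yields simultaneous correctness with probability at least $1-\delta$. Summing $n_Z$ over the $m$ components gives $N_{\mathrm{probe}}^{\mathrm{within}}=\mathcal{O}(m\sigma^2\kappa_{\min}^{-2}\log(mT/\delta))$.

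I also have to reconcile this with Definition~\ref{def:commit}, which requires the empirical effect to exceed $\kappa_{\min}$ and the EIG to drop below $\log T/T$ before a commit. I read the theorem's classification rule as the decision being committed. The commit discipline then only certifies that no commitment occurs before these $n_Z$ samples are collected. The extra $\log T$ inside the logarithm, which the $\delta/(mT)$ choice already provides, is what makes the EIG condition hold once the posterior concentrates.

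The final step is the temporal-regret accounting, and I expect it to be the main obstacle: the $m^2$ factor depends on a convention that the paper calls ``conservative calendar-time accounting,'' which I have to state explicitly. I will assume a sequential probe schedule in which the budget tests one candidate per step. The correct structure is then committed at the stopping time $\tau\le N_{\mathrm{probe}}^{\mathrm{within}}$, because by Definition~\ref{def:commit} no commit occurs before every candidate is resolved. Under the conservative convention, each of the at most $m$ miscalibrated components is charged the full identification delay $\tau$ at per-step gap at most $\Delta_{\max}$. Equivalently, the delay is charged once per task-relevant subgraph that remains miscalibrated. This gives
\[
R_{\mathrm{temp}}\le m\,\Delta_{\max}\,\tau=\mathcal{O}\bigl(\Delta_{\max}m^2\sigma^2\kappa_{\min}^{-2}\log(mT/\delta)\bigr).
\]
The failure event has probability at most $\delta$ and contributes at most $\Delta_{\max}T$ per component. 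Taking $\delta=1/T$, this contribution is lower order, so the bound also holds in expectation through $\mathbb{E}[\min\{\tau,T\}]$. I would flag in the proof that this per-component charging is a convention, not a tight bound, since a single charge yields only a factor $m$.
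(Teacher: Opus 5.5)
Your proposal is correct and follows essentially the same route as the paper's proof. Both use the per-candidate sub-Gaussian bound $2\exp(-n\kappa_{\min}^2/(8\sigma^2))$, the same two-case check of the $\kappa_{\min}/2$ threshold, a union bound over all $m$ candidates, and a calendar-time charge of up to $m$ unresolved decisions per probe step, which produces the $m^2$ factor. The only difference is bookkeeping on the failure event: the paper puts $T^2$ inside the logarithm so that the failure probability is $\delta/T^2$ and the $\Delta_{\max}mT$ cost is lower order for every $\delta$, whereas you specialize to $\delta=1/T$, which is unnecessary because your own per-candidate choice $\delta/(mT)$ already gives failure probability $\delta/T$ and an expected contribution of at most $\Delta_{\max}m\delta$.
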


Define the cross-episode EIG potential $\mathrm{CE\text{-}EIG}(e) = \mathbb{E}[H(P_e) - H(P_{e+1}) \mid \mathcal{E}_e]$.

\begin{theorem}[Cross-episode causal-probe upper bound]
\label{thm:ce-upper}
Assume a stationary window, a persistent CTL, and a positive per-probe information condition: every active causal probe of an unresolved informative
confounder yields expected information gain at least $K_0 > 0$ after normalization by $\kappa_{\min}^2/\sigma^2$. Then, with probability at least
$1-\delta$, all $m_0$ informative confounders are identified after $N_{\mathrm{probe}}^{\mathrm{CE}} = \mathcal{O}(m_0\sigma^2 / (K_0\kappa_{\min}^2)
\cdot \log(m_0/\delta))$ total causal probes. Setting $\delta = 1/E$ gives logarithmic dependence on the episode horizon. If the effective
causal-probe budget is $B_{\mathrm{eff}}$ probes per episode, the number of pre-commit episodes is
$\mathcal{O}(N_{\mathrm{probe}}^{\mathrm{CE}}/B_{\mathrm{eff}})$, yielding delayed-identification temporal regret of the same logarithmic order under
bounded per-episode gap.
\end{theorem}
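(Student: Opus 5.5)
The approach is a per-confounder entropy-potential argument followed by a union bound. Fix an unresolved informative confounder $Z$ with $|\kappa_Z|\ge\kappa_{\min}$. Let $H_Z(n)$ be the posterior entropy of the binary present/absent hypothesis for $Z$ after $n$ logged probes of $Z$. Persistence and atomicity of the CTL (A3), together with stationarity of the window, mean that evidence from different episodes can be pooled into one posterior. So $n$ counts probes cumulatively across episodes rather than resetting each episode. This pooling is the only place the cross-episode structure enters.

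\emph{Step 1 (expected drift).} The positive per-probe information condition says that each probe decreases $\mathbb{E}[H_Z]$ by at least $K_0\kappa_{\min}^2/\sigma^2$ while $Z$ is unresolved. Since $H_Z(0)\le\log 2$, the process $H_Z(n)+n K_0\kappa_{\min}^2/\sigma^2$, stopped at resolution, is a supermartingale. Optional stopping then gives $\mathbb{E}[\tau_Z]\le \sigma^2\log 2/(K_0\kappa_{\min}^2)$, where $\tau_Z$ is the number of probes until the entropy falls below the commit threshold of Definition~\ref{def:commit}.

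\emph{Step 2 (high probability).} Expectations do not suffice, so I would upgrade to a tail bound. The cleanest route is to convert information gain into log-likelihood-ratio drift. Under sub-Gaussian noise (A2), the per-probe log-likelihood ratio between ``$\kappa_Z$'' and ``$0$'' has mean at least of order $K_0\kappa_{\min}^2/\sigma^2$ (by normalization) and is sub-Gaussian with proxy of the same order. Azuma/Freedman applied to the cumulative log-likelihood ratio then shows that after $n_Z = C\,\sigma^2/(K_0\kappa_{\min}^2)\cdot\log(m_0/\delta)$ probes, the posterior error for $Z$ is below $\delta/m_0$. This is the same concentration pattern as Theorem~\ref{thm:within}, with the $\kappa_{\min}^{-2}\sigma^2$ factor rescaled by $K_0^{-1}$.

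\emph{Step 3 (assembly).} Union-bound over the $m_0$ informative confounders and sum the $n_Z$ to get $N^{\mathrm{CE}}_{\mathrm{probe}}=\mathcal{O}(m_0\sigma^2/(K_0\kappa_{\min}^2)\log(m_0/\delta))$. Setting $\delta=1/E$ gives $\log(m_0E)$, which is logarithmic in $E$. If $B_{\mathrm{eff}}$ probes are spent per episode on unresolved components, at most $\lceil N^{\mathrm{CE}}_{\mathrm{probe}}/B_{\mathrm{eff}}\rceil$ episodes precede commitment. Each such episode contributes at most $\Delta_{\max}T$ to $R_{\mathrm{temp}}$, and under the delayed-identification accounting it contributes at most a bounded per-episode gap $\Delta_{\max}$. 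Post-commit episodes lie in $\mathcal{M}$ only on the failure event of probability $\delta=1/E$, which contributes $\mathcal{O}(\Delta_{\max})$ in expectation. Summing the two parts gives the claimed logarithmic order.

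The main obstacle is Step~2. The hypothesis is stated in terms of \emph{expected} information gain, and entropy decrease does not by itself give exponential tails. I would first try to read ``normalized EIG $\ge K_0$'' as a lower bound on the per-probe KL divergence between the two hypotheses, since this makes the likelihood-ratio martingale argument go through directly. If that reading is not admissible, the fallback is Markov's inequality on $\tau_Z$ combined with independent restarts (a median trick). That costs only constant factors and still produces the $\log(m_0/\delta)$ dependence.
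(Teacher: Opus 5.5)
Your proposal is correct and follows essentially the paper's route. The paper writes your ``KL reading'' directly into its proof-local assumption P3 (a separating probe with per-probe log-likelihood-ratio information at least $I_{\min}=K_0\kappa_{\min}^2/(2\sigma^2)$), pools evidence across episodes via persistence and stationarity, applies standard sequential LLR concentration for $\mathcal{O}(I_{\min}^{-1}\log(m_0/\delta))$ probes per component, union-bounds over the $m_0$ informative components, divides by $B_{\mathrm{eff}}$, and charges each pre-commit episode a bounded gap (conservatively $\Delta_{\max}m_0$, hence an $m_0^2$ factor), so your entropy-supermartingale Step~1 and the Markov/restart fallback are not needed. One caution: your claim that post-commit episodes lie in $\mathcal{M}$ only on the failure event would also require rejecting every null candidate, which this theorem (unlike Theorem~\ref{thm:within}) does not provide, so restrict the accounting to pre-commit episodes as the paper does.
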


\noindent\emph{Scope of exact-structure language.} Theorem~\ref{thm:ce-upper} identifies the $m_0$ informative components under its
positive-information condition. Unlike Theorem~\ref{thm:within}, it does not by itself prove simultaneous rejection of every null candidate.
Exact-structure recovery in Table~\ref{tab:e1-numbers} is therefore an empirical hard-readout result, not a consequence of this theorem alone.

$K_0 \in (0,1]$ is a dimensionless coverage/posterior-mass constant, analogous to a gap condition in best-arm identification~\cite{audibert2010best};
Lemma~\ref{lem:k0-lower} (App.~\ref{app:ce-proof}) gives a sufficient local-Gaussian condition with per-probe information $\geq K_0 \kappa_{\min}^2 /
(2\sigma^2)$ (the factor $2$ from the Gaussian KL is absorbed into the dimensionless constant), yielding the instance-explicit rate
$\mathcal{O}(\Delta_{\max} m_0^2 \sigma^2 \log E / (B_{\mathrm{eff}} K_0 \kappa_{\min}^2))$; Appendix~\ref{sec:results-a8} reports budget sensitivity
but does not estimate $K_0$ or verify this information-rate condition.

\begin{proposition}[Cross-episode total-probe lower bound, adapted]
\label{thm:ce-lower}
For any scheduler and any $\delta \in (0, 1/2)$, there exists an instance with candidate confounder set $G^{\mathrm{inf}}$ and maximum outcome fan-out $d_{\mathrm{out}}^{\max}$ such that identifying the informative causal structure with probability at least $1-\delta$ requires at least
\(
N_{\mathrm{probe}} = \Omega\!\big( |G^{\mathrm{inf}}| \log(1/\delta) / (d_{\mathrm{out}}^{\max} \log(1 + \kappa_{\max}^2/\sigma^2)) \big)
\)
total causal probes.
\end{proposition}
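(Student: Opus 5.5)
The plan is the standard change-of-measure (Fano / Bretagnolle--Huber) argument used for best-arm identification lower bounds \`a la Audibert--B\"ubeck~\cite{audibert2010best}, adapted so that each probe may touch several candidate components through the outcome fan-out.

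\textbf{Hard family.} I would build a base instance $\nu_0$ in which every candidate $Z\in G^{\mathrm{inf}}$ is null ($\kappa_Z=0$). For each $Z$ there is an alternative $\nu_Z$ that differs only in setting $\kappa_Z=\kappa_{\max}$. Outcomes are Gaussian with variance $\sigma^2$, which is admissible under (A2). The graph is arranged so that each outcome node has at most $d_{\mathrm{out}}^{\max}$ incoming candidate edges. A single probe $\mathrm{do}(\cdot)$ therefore generates observations whose law depends on at most $d_{\mathrm{out}}^{\max}$ of the parameters $\kappa_Z$. The instances $\nu_0$ and $\nu_Z$ have different informative structures, so a $\delta$-correct scheduler must output different answers on them.

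\textbf{Per-probe information.} One Gaussian probe observation has KL divergence at most $\kappa_{\max}^2/(2\sigma^2)$ between $\nu_0$ and $\nu_Z$. To match the stated denominator I would instead bound the mutual information a probe carries about the identity of a flipped component. With the output treated as a Gaussian channel with signal power $\kappa_{\max}^2$, this is at most $\tfrac12\log(1+\kappa_{\max}^2/\sigma^2)$, the capacity bound. I take the $\log(1+\cdot)$ form as a valid upper bound up to constants, since the stated form is looser than the KL form for small ratios. Let $N_Z$ be the number of probes whose observation law depends on $\kappa_Z$. The divergence decomposition lemma then gives
\[
\mathrm{KL}(\mathbb{P}_{\nu_0}\,\|\,\mathbb{P}_{\nu_Z}) \le \mathbb{E}_{\nu_0}[N_Z]\cdot C\log\bigl(1+\kappa_{\max}^2/\sigma^2\bigr).
\]

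\textbf{Confidence requirement.} The Bretagnolle--Huber inequality, or the $\mathrm{kl}(\delta,1-\delta)$ bound of Kaufmann--Capp\'e--Garivier, applied to the event ``output declares $Z$ null'', gives
\[
\mathrm{KL}(\mathbb{P}_{\nu_0}\,\|\,\mathbb{P}_{\nu_Z}) \ge \mathrm{kl}(1-\delta,\delta) \ge c\log(1/\delta)
\]
for $\delta<1/2$. Combining with the previous step yields $\mathbb{E}_{\nu_0}[N_Z]\ge c\log(1/\delta)/\log(1+\kappa_{\max}^2/\sigma^2)$ for every $Z$.

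\textbf{Aggregation.} Each probe increments at most $d_{\mathrm{out}}^{\max}$ of the counters $N_Z$, so $\sum_Z N_Z\le d_{\mathrm{out}}^{\max}N_{\mathrm{probe}}$. Summing over the $|G^{\mathrm{inf}}|$ candidates gives the claimed $\Omega\bigl(|G^{\mathrm{inf}}|\log(1/\delta)/(d_{\mathrm{out}}^{\max}\log(1+\kappa_{\max}^2/\sigma^2))\bigr)$ in expectation under $\nu_0$. This is an instance-existence statement, so the expected count suffices, or a high-probability version follows by Markov's inequality.

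\textbf{Main obstacle.} The hard step is making the fan-out accounting honest. A probe on a node with $d_{\mathrm{out}}^{\max}$ outgoing candidate edges must not leak more than $O(\log(1+\kappa_{\max}^2/\sigma^2))$ about each component it touches. It must also be charged at most $d_{\mathrm{out}}^{\max}$ times in total, and adaptivity across episodes must not break the decomposition. I would handle this first by taking the instance family with disjoint outcome channels per block of $d_{\mathrm{out}}^{\max}$ candidates, so that the decomposition is exact. Persistence of the CTL adds no information beyond the observations themselves, so cross-episode adaptivity is covered by the standard adaptive divergence decomposition.
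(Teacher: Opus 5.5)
Your route is the paper's route: one-edge alternatives around a common reference, a per-probe information cap of order $\log(1+\kappa_{\max}^2/\sigma^2)$, a $\log(1/\delta)$ requirement from a change of measure, and division by the fan-out. You go further than the paper by making explicit the adaptive divergence decomposition against the all-null $\nu_0$, which its sketch leaves implicit. The genuine gap is the per-probe cap, and it is also the unjustified assertion in the paper's Step~2. With your alternatives at $\kappa_Z=\kappa_{\max}$ and Gaussian noise, the quantity entering the divergence decomposition is the two-point divergence $\mathrm{KL}(\mathcal N(0,\sigma^2)\,\|\,\mathcal N(\kappa_{\max},\sigma^2))=\kappa_{\max}^2/(2\sigma^2)$ per probe touching $Z$. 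This is not $O(\log(1+\kappa_{\max}^2/\sigma^2))$ once $\kappa_{\max}\gg\sigma$, and your caveat (``for small ratios'') marks exactly where your displayed inequality stops holding. The Gaussian-capacity bound controls the mutual information between a randomly drawn hypothesis index and the output. That is a Fano quantity, paired with a $\log(\text{number of hypotheses})$ numerator; it does not bound the pairwise KL that Bretagnolle--Huber needs to produce $\log(1/\delta)$. Nor can the step be rescued inside your family. Probing each parent $\lceil 8\sigma^2\log(|G^{\mathrm{inf}}|/\delta)/\kappa_{\max}^2\rceil$ times and thresholding each child's empirical mean at $\kappa_{\max}/2$ is $\delta$-correct on every structure with effects in $\{0,\kappa_{\max}\}$. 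At $\kappa_{\max}^2/\sigma^2=8\log(|G^{\mathrm{inf}}|/\delta)$ this scheduler uses only $|G^{\mathrm{inf}}|/d_{\mathrm{out}}^{\max}$ probes. The claimed bound there is of order $(|G^{\mathrm{inf}}|/d_{\mathrm{out}}^{\max})\log(1/\delta)/\log\log(1/\delta)$ (for fixed $|G^{\mathrm{inf}}|$), which is larger for small $\delta$.

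The repair uses the freedom the statement leaves you. $\kappa_{\max}$ only bounds effect magnitudes and no separation floor is imposed, so take the alternative effect $\kappa=\min(\kappa_{\max},\sigma)$ and normalize intervention values to $|z|\le 1$ (your ``signal power $\kappa_{\max}^2$'' tacitly assumes this). With $x=\kappa_{\max}^2/\sigma^2$, the per-probe KL is then at most $\tfrac12\min(x,1)\le \log(1+x)/(2\log 2)$. This follows from concavity of $\log(1+x)$ on $[0,1]$ and from $\log(1+x)\ge\log 2$ for $x\ge 1$. The rest of your argument goes through unchanged and even yields the stronger $\mathbb{E}_{\nu_0}[N_{\mathrm{probe}}]\ge 2|G^{\mathrm{inf}}|\,\mathrm{kl}(1-\delta,\delta)/(d_{\mathrm{out}}^{\max}\min(x,1))$. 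Three smaller repairs are also needed:

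(i) The fan-out count must run through the out-degree of the probed node, and passive observations, which cost no probes, must carry zero divergence. Both hold if every candidate parent sits at $0$ unless intervened on. Bounding the in-degree of outcome nodes, as in your first paragraph, guarantees neither, though your disjoint-block design can be arranged to satisfy both.

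(ii) $\mathrm{kl}(1-\delta,\delta)\ge c\log(1/\delta)$ with a universal $c$ requires $\delta$ bounded away from $1/2$, since the left side vanishes as $\delta\to 1/2$. Use $\mathrm{kl}(1-\delta,\delta)\ge\log\frac{1}{2.4\delta}$, which is $\Omega(\log(1/\delta))$ for, e.g., $\delta\le 1/4$; the statement shares this caveat.

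(iii) Markov's inequality bounds upper tails, so it cannot turn $\mathbb{E}_{\nu_0}[N_{\mathrm{probe}}]\ge L$ into a high-probability lower bound on $N_{\mathrm{probe}}$. State the expectation version.
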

\noindent\emph{Scope.} We state this as a proposition adapted from standard best-arm-identification hardness (Audibert--Bubeck) rather than as a
self-contained theorem: Appendix~\ref{app:ce-lower-proof} gives the adversarial family, the per-probe information cap, and the Fano accounting, but
leaves the transcript-level change-of-measure for adaptive schedulers implicit, and its $|G^{\mathrm{inf}}|$ target is not yet aligned with the upper
bound's $m_0$ informative components. We therefore claim only the same logarithmic dependence on the confidence choice $\delta=1/E$; the proposition
is not a matching lower bound in its structural factors or adaptive-scheduler proof detail.

\begin{corollary}[Budget-to-regret accounting]
\label{cor:budget}
If the scheduler can spend $B_{\mathrm{eff}}$ effective causal probes per episode, then the total-probe upper bound of Theorem~\ref{thm:ce-upper}
converts to $\mathcal{O}(N_{\mathrm{probe}}^{\mathrm{CE}}/B_{\mathrm{eff}})$ pre-commit episodes. With bounded per-episode temporal-regret gap, this
gives the conditional delayed-identification temporal-regret accounting used by the theory.
\end{corollary}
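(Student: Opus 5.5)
The corollary follows from Theorem~\ref{thm:ce-upper} by counting probes into episodes and then episodes into temporal regret. I would treat it in two steps.

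\textbf{Step 1: from total probes to pre-commit episodes.} Fix the high-probability event $\mathcal{G}$ of Theorem~\ref{thm:ce-upper}, which has probability at least $1-\delta$. On $\mathcal{G}$, all $m_0$ informative confounders are identified once the cumulative number of active causal probes reaches $N_{\mathrm{probe}}^{\mathrm{CE}}$. The scheduler spends at least $B_{\mathrm{eff}}$ effective probes on unresolved informative components in every pre-commit episode. The persistent CTL (A3) guarantees that evidence from earlier episodes is neither lost nor double-counted. Hence after $e$ pre-commit episodes the cumulative probe count is at least $eB_{\mathrm{eff}}$. The number of pre-commit episodes $E_{\mathrm{pre}}$ therefore satisfies
\[
E_{\mathrm{pre}}\le \lceil N_{\mathrm{probe}}^{\mathrm{CE}}/B_{\mathrm{eff}}\rceil=\mathcal{O}(N_{\mathrm{probe}}^{\mathrm{CE}}/B_{\mathrm{eff}}).
\]
Commit discipline (Definition~\ref{def:commit}) ensures that identification is converted into a commitment. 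Stationarity ensures that a committed component is not invalidated later.

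\textbf{Step 2: from episodes to temporal regret.} I would bound each summand of $R_{\mathrm{temp}}$ by $\Delta_e\min\{\tau_e,T\}\le \Delta_{\max}T$. Under the episode-level accounting used by the theory, one may instead normalize the per-episode contribution to a bounded gap $\bar\Delta$. On $\mathcal{G}$, the miscalibrated set $\mathcal{M}$ is contained in the pre-commit episodes. This gives
\[
R_{\mathrm{temp}}\le \bar\Delta\,|\mathcal{M}|=\mathcal{O}\bigl(\bar\Delta\,N_{\mathrm{probe}}^{\mathrm{CE}}/B_{\mathrm{eff}}\bigr).
\]
On the complement of $\mathcal{G}$, I bound $R_{\mathrm{temp}}$ trivially by $\bar\Delta E$. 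Setting $\delta=1/E$ makes this failure contribution $\mathcal{O}(\bar\Delta)$ in expectation. Substituting $N_{\mathrm{probe}}^{\mathrm{CE}}=\mathcal{O}(m_0\sigma^2\log(m_0E)/(K_0\kappa_{\min}^2))$ then yields the stated logarithmic-in-$E$ delayed-identification accounting.

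\textbf{Main obstacle.} The delicate point is the containment $\mathcal{M}\subseteq\{\text{pre-commit episodes}\}$. Theorem~\ref{thm:ce-upper} only certifies identification of the $m_0$ informative components, not rejection of the null candidates. Membership in $\mathcal{M}$, however, is defined by exact agreement on the task-relevant subgraph. To close this gap, I would first try to invoke the within-episode null-rejection guarantee of Theorem~\ref{thm:within}, spending an additional $\mathcal{O}(m\sigma^2\kappa_{\min}^{-2}\log(mT/\delta))$ probes once. These probes are absorbed into the effective budget and only change constants in $m$. Alternatively, I would state the corollary conditionally, in line with the scope note after Theorem~\ref{thm:ce-upper}: it bounds exposure from unidentified informative components only.

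A second subtlety concerns the as-executed snapshot rule, under which commitments are not absorbing. The corollary should therefore be read against the idealized commit discipline, and I would say so explicitly.
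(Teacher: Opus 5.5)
Your proposal is correct and follows essentially the paper's own route (Steps~3--4 of the proof of Theorem~\ref{thm:ce-upper}). Both bound the pre-commit episodes by $\lceil N_{\mathrm{probe}}^{\mathrm{CE}}/B_{\mathrm{eff}}\rceil$, multiply by a bounded per-episode charge, and take $\delta=1/E$ for the logarithm; the paper sets your $\bar\Delta$ to the conservative calendar-time charge $\Delta_{\max}m_0$ (the source of the $m_0^2$ factor) and, like your conditional option, charges only unresolved informative components, so your failure-event term and null-candidate caveat are extra care rather than a different argument.
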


Theorem~\ref{thm:ce-upper} and Proposition~\ref{thm:ce-lower} share a logarithmic dependence after the confidence choice $\delta=1/E$, but they differ
in structural targets and the lower-bound proof remains adapted rather than complete for adaptive schedulers. The proposition is a total-probe
statement, not a per-episode budget lower bound. Even when $d_{\mathrm{out}}^{\max}=1$, the $|G^{\mathrm{inf}}|$ versus $m_0$ mismatch remains. The
revised RQ2 therefore does not present a tight empirical envelope or estimate an asymptotic rate from the single $E=500$ hard-readout trajectory.

\subsection{Drift-Robust Replan, Dispatch Coupling, and Physical Grounding}
\label{sec:drift}

\begin{definition}[$K$-change-point stream]
\label{def:kcp}
At most $K$ \emph{CUSUM-detectable} change-points: times $e_1 < \cdots < e_K$ and threshold $h > 0$ such that the CUSUM statistic on running CE-EIG exceeds $h$ within $\mathcal{O}(\log E)$ of each $e_k$.
\end{definition}

\begin{theorem}[Drift-robust re-identification]
\label{thm:drift}
For at most $K$ CUSUM-detectable change-points with separation $I_{\min} > 0$, threshold $h = c\log E$ gives detection delay $\mathcal{O}(\log E / I_{\min})$ at false-alarm probability $\mathcal{O}(1/E)$, and applying Theorem~\ref{thm:ce-upper} per segment yields
\(
R_{\mathrm{temp}}^{\mathrm{CE}}(E) = \mathcal{O}\!\big((K{+}1)\,\Delta_{\max}m_0^2\sigma^2 \log E / (B_{\mathrm{eff}}K_0\kappa_{\min}^2) + K\,\Delta_{\max}m_0 \log E / I_{\min}\big);
\)
the first term is segment-wise re-identification cost, the second is detection-delay cost.
\end{theorem}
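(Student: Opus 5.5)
The plan is to split the horizon into $K+1$ stationary segments, bound the cost of each segment's re-identification by Theorem~\ref{thm:ce-upper}, and bound each change-point's detection lag with a standard CUSUM argument. Let the true change-points be $e_1<\dots<e_K$ and set $e_0:=0$, $e_{K+1}:=E$. On each segment $S_k=(e_k,e_{k+1}]$ the structure $G^{\mathrm{true}}_e$ is constant, so the stationary-window hypothesis of Theorem~\ref{thm:ce-upper} holds on $S_k$. I will charge every miscalibrated episode in $\mathcal{M}$ to exactly one of three events:
\begin{itemize}
\item the gap between $e_k$ and the first alarm after it (detection delay);
\item the re-identification phase after that alarm reopens the CTL posterior;
\item a false alarm.
\end{itemize}
Summing $\Delta_e\,\mathbb{E}[\min\{\tau_e,T\}]$ over these classes, and using $\Delta_e\le\Delta_{\max}$, gives the stated bound.

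\textbf{Step 1: detection.} By Definition~\ref{def:kcp} (assumption A4), the post-change CE-EIG increments have drift at least $I_{\min}$. The CUSUM statistic with threshold $h=c\log E$ therefore crosses $h$ after $\mathcal{O}(h/I_{\min})=\mathcal{O}(\log E/I_{\min})$ episodes in expectation. I would prove this with Wald's identity applied to the positive-drift random walk, or with Lorden's bound. During this delay the scheduler may still act on the stale graph. Each stale episode can be miscalibrated on up to $m_0$ informative components, and each pays at most $\Delta_{\max}$. This gives the detection term $K\Delta_{\max}m_0\log E/I_{\min}$.

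For false alarms, the pre-change increments have nonpositive drift, so the exponential-martingale (Ville) inequality gives $\Pr[\text{alarm}]\le e^{-h}$ per run. Choosing $c$ large enough and union-bounding over $E$ episodes makes the total false-alarm probability $\mathcal{O}(1/E)$. On that event I charge a trivial cost of at most $E\Delta_{\max}T$, which contributes $\mathcal{O}(\Delta_{\max}T)$ in expectation. This is lower order provided $T$ is treated as fixed.

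\textbf{Step 2: per-segment re-identification.} After each alarm the posterior is reopened and CTL evidence is restarted on the new segment. Theorem~\ref{thm:ce-upper} then applies with $\delta=1/E$. It yields $\mathcal{O}(m_0\sigma^2\log(m_0E)/(K_0\kappa_{\min}^2))$ probes, which is $\mathcal{O}(N^{\mathrm{CE}}_{\mathrm{probe}}/B_{\mathrm{eff}})$ pre-commit episodes by Corollary~\ref{cor:budget}. Each pre-commit episode is charged $\Delta_{\max}m_0$ via the instance-explicit rate after Lemma~\ref{lem:k0-lower}. This gives $\Delta_{\max}m_0^2\sigma^2\log E/(B_{\mathrm{eff}}K_0\kappa_{\min}^2)$ per segment. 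Summing over the $K+1$ segments and union-bounding the $K+1$ failure events, each of probability $1/E$, gives the first term.

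\textbf{Main obstacle.} The hardest part is the interface between the two steps. Evidence logged between $e_k$ and the alarm mixes pre- and post-change data. This could either invalidate the per-probe information condition with constant $K_0$ or bias the reopened posterior. My first approach is to restart the evidence window at the alarm time, discarding pre-alarm entries for the affected components. The contaminated interval, of length at most the detection delay, is then absorbed into the second term.

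A second concern is that A4 only guarantees detection within $\mathcal{O}(\log E)$. The $1/I_{\min}$ dependence must come from the drift argument in Step 1, not from the definition itself. I would also require the change-points to be separated by more than the delay plus the re-identification time, so that segments do not overlap. If that fails, I would merge adjacent segments, which only reduces the count below $K+1$.
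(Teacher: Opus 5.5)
Your proposal is correct and follows essentially the same route as the paper. Both split the stream into $K+1$ stationary segments, apply Theorem~\ref{thm:ce-upper} on each segment with $\Delta_{\max}m_0$ charged per pre-commit episode, and add a stale-graph cost of $\Delta_{\max}m_0\cdot\mathcal{O}(\log E/I_{\min})$ per change from the standard CUSUM delay bound. The paper simply \emph{assumes} the $\mathcal{O}(1/E)$ false-alarm rate and the reset behavior that you try to derive, and it does not address the contamination or overlapping-segment issues you raise.

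If you do derive the false-alarm rate yourself, Ville's inequality needs an exponential supermartingale, not merely nonpositive drift. That means log-likelihood-ratio increments, or strictly negative pre-change drift with $\mathbb{E}[e^{\theta X}]\le 1$ for some $\theta>0$. With zero drift the reflected CUSUM walk crosses $c\log E$ after only $\mathcal{O}(\log^2 E)$ expected steps, so the $\mathcal{O}(1/E)$ rate would fail.
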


The $(K{+}1)$ factor is the segment count (reducing to Theorem~\ref{thm:ce-upper} at $K{=}0$). Prediction P4 tests this with $K \in \{1, 3, 5\}$.

\paragraph{From identification rate to action quality (closed-loop bridge).}
The cross-episode rate of Theorem~\ref{thm:ce-upper} is an \emph{identification} rate; two further results lift it to action quality and embodied deployment.

\begin{theorem}[Constraint-aware dispatch coupling]
\label{thm:dispatch-coupling}
Assume that, when planning against a committed graph
\(\widehat G_e\) that agrees with \(G^{\mathrm{true}}_e\) on the
task-relevant causal structure, the base dispatch policy has
episode-level oracle-graph regret
\(R_{\mathrm{oracle}}(E)=\widetilde{\mathcal{O}}(\sqrt E)\).
Under commit discipline and an epistemic-regret gate of budget $\rho$, the gated dispatch policy satisfies $\mathcal{R}_E^{\mathrm{out}} \le
R_{\mathrm{oracle}}(E) + \mathcal{O}(\rho\,N_{\mathrm{pre}})$, where $N_{\mathrm{pre}}$ is the number of pre-commit episodes. Combining with
Corollary~\ref{cor:budget} gives $\mathcal{R}_E^{\mathrm{out}} = \widetilde{\mathcal{O}}(\sqrt{E} + \rho \log E)$ whenever $N_{\mathrm{pre}} =
\mathcal{O}(\log E)$.
\end{theorem}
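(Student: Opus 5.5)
The plan is to split the episode horizon into two disjoint sets and bound the outcome regret on each separately. Let $\mathcal{E}_{\mathrm{pre}}$ be the pre-commit episodes, those in which the committed graph $\widehat G_e$ does not yet agree with $G^{\mathrm{true}}_e$ on the task-relevant structure. Let $\mathcal{E}_{\mathrm{post}}=\{1,\dots,E\}\setminus\mathcal{E}_{\mathrm{pre}}$, so that $|\mathcal{E}_{\mathrm{pre}}|=N_{\mathrm{pre}}$. Write
\[
\mathcal{R}_E^{\mathrm{out}}=\sum_{e\in\mathcal{E}_{\mathrm{post}}} r_e+\sum_{e\in\mathcal{E}_{\mathrm{pre}}} r_e ,
\]
where $r_e$ is the episode-level outcome regret of the gated policy.

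\textbf{Post-commit term.} Commit discipline (Definition~\ref{def:commit}) is used here. Once the graph is committed, the gated policy plays exactly the base dispatch policy against a correct graph. This is precisely the oracle-graph regime of the hypothesis, so $\sum_{e\in\mathcal{E}_{\mathrm{post}}} r_e\le R_{\mathrm{oracle}}(E)$.

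A subtlety must be handled. The base policy's regret guarantee must remain valid when it runs only on the subsequence $\mathcal{E}_{\mathrm{post}}$, possibly restarted. This requires assuming the oracle bound is monotone in, or anytime with respect to, the number of episodes it is run on. Then $R_{\mathrm{oracle}}(|\mathcal{E}_{\mathrm{post}}|)\le R_{\mathrm{oracle}}(E)$. Under the idealized frozen commitment, $\mathcal{E}_{\mathrm{post}}$ is a suffix of the horizon, so this is immediate in that case.

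\textbf{Pre-commit term.} The gate is used here. In a pre-commit episode, the $\rho$-budget gate admits an action only if its operational epistemic tally $\widehat{\mathcal{R}}^{\mathrm{epi}}_{e,t}$ stays within $\rho$. Otherwise it falls back to a safe or probing action. I would formalize the gate's contract as a per-episode outcome-loss bound $r_e\le C\rho$, where $C$ absorbs $\Delta_{\max}$ (or a Lipschitz constant linking epistemic budget to outcome gap). Summing over the pre-commit episodes gives $\mathcal{O}(\rho N_{\mathrm{pre}})$.

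\textbf{Main obstacle.} The crux is justifying $r_e=\mathcal{O}(\rho)$ on pre-commit episodes. The operational proxy can be small while the commitment is wrong, so the gate does not automatically control true outcome loss. I would state the needed bridge explicitly: the gate's admissible actions have outcome gap at most proportional to $\rho$ under any graph consistent with the posterior mass above the gate threshold. I would then treat this as part of what ``epistemic-regret gate of budget $\rho$'' means, in keeping with the paper's conditional-bridge framing. As a fallback, the crude bound $r_e\le T\Delta_{\max}$ still gives the same $N_{\mathrm{pre}}$ scaling, with a different constant in place of $\rho$.

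\textbf{Conclusion.} Finally, I would invoke Corollary~\ref{cor:budget} with Theorem~\ref{thm:ce-upper} at $\delta=1/E$. This gives $N_{\mathrm{pre}}=\mathcal{O}(N^{\mathrm{CE}}_{\mathrm{probe}}/B_{\mathrm{eff}})=\mathcal{O}(\log E)$ with probability $1-1/E$. On the failure event, of probability $1/E$, the regret is at most $E\,T\Delta_{\max}$, which contributes only $\mathcal{O}(1)$ in expectation. Combining the two terms yields $\mathcal{R}_E^{\mathrm{out}}=\widetilde{\mathcal{O}}(\sqrt E+\rho\log E)$.
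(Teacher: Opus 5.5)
Your proposal is correct and follows the paper's proof. Both split the horizon into committed and pre-commit episodes, bound the first sum by the oracle-graph regret hypothesis, bound the second by a per-episode gate contract of order $\rho$, and substitute $N_{\mathrm{pre}}=\mathcal{O}(\log E)$ from Theorem~\ref{thm:ce-upper} at $\delta=1/E$; the paper simply posits the gate contract as its assumption (ii), $c_\pi\rho$ per episode for a policy-sensitivity constant $c_\pi$. Your extra care about running the base policy on the post-commit subsequence and about the probability-$1/E$ failure event goes beyond the paper, which builds the former into its assumption (i) and leaves the latter implicit.
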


\begin{theorem}[Physical grounding under bounded actuator, observation, and commit error]
\label{thm:grounding}
If the CTL supports atomic commits and snapshot isolation, actuator error is bounded by $\varepsilon$, observation noise by $\eta$, the outcome map is
$L$-Lipschitz, and commit failures occur with rate $p_f$ and bounded magnitude $\rho_c$, then $W_1\!\big(P_Y, P(Y \mid \mathrm{do})\big) \leq
L(\varepsilon + \eta) + L \rho_c p_f$ ($p_f \approx 0.7\%$ empirically~\cite{ChangGeng2025sagallm}).
\end{theorem}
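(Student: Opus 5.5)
The plan is to couple the executed outcome law to the interventional law and then apply the Kantorovich--Rubinstein (coupling) characterization of $W_1$. Write the ideal outcome as $Y^\star = f(a, x)$, where $f$ is the $L$-Lipschitz outcome map, $a$ is the commanded intervention value and $x$ the true context; $P(Y\mid\mathrm{do})$ is then the law of $Y^\star$. The deployed outcome is $Y = f(\tilde a, \tilde x)$, with actuated value $\tilde a$ satisfying $\|\tilde a - a\|\le\varepsilon$ and observed context $\tilde x$ satisfying $\|\tilde x - x\|\le \eta$. Any reading of the observation noise as entering the argument of $f$ (as the state on which the committed graph acts) is handled the same way. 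I will work on a single probability space carrying both $Y$ and $Y^\star$. This synchronous coupling is admissible, so $W_1(P_Y, P(Y\mid\mathrm{do})) \le \mathbb{E}|Y - Y^\star|$ by the definition of $W_1$ as an infimum over couplings.

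Next I split on the commit-success event $S$. By (A3), atomicity and snapshot isolation guarantee that on $S$ the dispatched intervention is read from a consistent committed snapshot. The only discrepancies are therefore actuator and observation error, and Lipschitzness gives $|Y - Y^\star| \le L(\|\tilde a - a\| + \|\tilde x - x\|) \le L(\varepsilon+\eta)$, where I take the product norm to be the $\ell_1$ sum. On the failure event $S^c$, with $\Pr(S^c)\le p_f$, the hypothesis of bounded failure magnitude says the executed intervention differs from the intended one by at most $\rho_c$ in addition. Hence $|Y - Y^\star| \le L(\varepsilon+\eta+\rho_c)$ there.

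Taking expectations gives $\mathbb{E}|Y-Y^\star| \le L(\varepsilon+\eta)\Pr(S) + L(\varepsilon+\eta+\rho_c)\Pr(S^c) \le L(\varepsilon+\eta) + L\rho_c p_f$, which is the stated bound. The empirical value $p_f\approx 0.7\%$ is only a plug-in and is not needed for the proof.

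I expect the main obstacle to be modeling rather than analysis. I must fix exactly how "commit failure of magnitude $\rho_c$" and "observation noise $\eta$" enter the argument of $f$, so that the failure contributes additively at most $\rho_c$ in the input metric rather than an unbounded corruption. I will handle this by stating the assumption as: a failed commit perturbs the dispatched intervention by at most $\rho_c$ while leaving actuator and observation error bounds intact. Atomicity is precisely what rules out torn or partially applied writes that would violate this bound. I will also remark that if $\varepsilon,\eta$ are bounds in expectation rather than almost surely, the same argument goes through by replacing the pointwise bounds with expectations.
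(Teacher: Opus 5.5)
Your proposal is correct and is essentially the paper's proof. Both arguments put the deployed and interventional outcomes on a common probability space in which they differ only through actuator, observation, and commit-failure error. Both then bound $\mathbb{E}|Y-Y^{\mathrm{do}}|$ by $L(\varepsilon+\eta)+L\rho_c p_f$ using Lipschitzness and the fact that the commit error vanishes on successful atomic commits. The remaining differences are cosmetic: the paper reaches $W_1$ through Kantorovich--Rubinstein duality and assumes the error bounds only in expectation, whereas you use the coupling definition of $W_1$ directly with pointwise bounds and note that the expectation version also goes through.
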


Corollary~\ref{cor:e2e-deploy} (App.~\ref{app:composed-transfer}) composes the two; full proofs are in Apps.~\ref{app:dispatch-coupling-proof} and~\ref{app:grounding-proof}.

\subsection{Algorithm: Trivium}
\label{sec:algorithm}

Trivium is a cross-episode meta-controller (full pseudocode in App.~\ref{app:algorithm-ext}). Each episode: (i) compute per-episode budget $B_e = m_0
\log(\max(e, w_e))$ from Corollary~\ref{cor:budget}; (ii) run the inner dispatch--intervention loop with the dispatch head \emph{gated} on a running
epistemic-regret tally (Theorem~\ref{thm:dispatch-coupling}); (iii) repair constraint violations via LRCP; (iv) update posterior and commit via
\textsc{CommitCheck}; (v) step the CUSUM statistic and, on detection, double the active-learning window (Theorem~\ref{thm:drift}) with a partial
posterior reset. Sub-routines \textsc{IGScore}, \textsc{CommitCheck}, \textsc{PartialReset}, \textsc{PosteriorUpdate}, and the per-component
theorem-by-theorem correctness map are in Appendix~\ref{app:algorithm-ext}.

\paragraph{LRCP local repair.}
\label{sec:lrcp}
LRCP (Algorithm~\ref{alg:lrcp} in App.~\ref{app:algorithm-ext}) repairs constraint violations by bounded-radius edits on the committed DAG; it is the
within-episode primitive that makes Theorem~\ref{thm:within}'s $\log T$ rate operational. Proposition~\ref{prop:lrcp-conv} gives a conditional local
contraction certificate; the post-disruption trace of Exp A.1 has geometric shape (per-seed mean $\hat\kappa = 0.866$), but the audit's
repair-disabled ablation shows that shape belongs to continued posterior updating, and LRCP's measured role is dispatch freshness rather than
posterior identification (RQ4). Corollaries~\ref{cor:lrcp-transfer}--\ref{cor:lrcp-robust}
(Apps.~\ref{app:lrcp-transfer-proof}--\ref{app:lrcp-robust-proof}) extend the same primitive to cross-domain transfer and random-disruption robustness
under projection-residual and CUSUM-detectability assumptions.

\begin{proposition}[LRCP contraction under a local repair certificate]
\label{prop:lrcp-conv}
Suppose LRCP operates in a neighborhood where each radius-$r$ repair step satisfies a local contraction certificate: $\mathbb{E}[\epsilon_{k+1} \mid
\epsilon_k] \le \beta\,\epsilon_k + \gamma$, with $\beta \in (0,1)$. Then $\epsilon_k \le \beta^k \epsilon_0 + \gamma/(1-\beta)$; if $\gamma \le
(1-\beta)/(2T)$, LRCP reaches $\epsilon_k \le 1/T$ in $\mathcal{O}(\log T)$ iterations. A graph spectral gap with bounded influence decay is one
sufficient way to certify the local contraction condition, but the certificate is the load-bearing assumption.
\end{proposition}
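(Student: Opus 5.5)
The plan is the standard linear-recursion argument for a contracting sequence, applied to the means. Set \(a_k := \mathbb{E}[\epsilon_k]\). Taking total expectations in the certificate \(\mathbb{E}[\epsilon_{k+1}\mid\epsilon_k]\le\beta\epsilon_k+\gamma\) gives the deterministic recursion \(a_{k+1}\le\beta a_k+\gamma\). Induction on \(k\) then yields
\[
a_k\le\beta^k a_0+\gamma\sum_{j=0}^{k-1}\beta^j\le\beta^k\epsilon_0+\frac{\gamma}{1-\beta},
\]
using \(\beta\in(0,1)\) to bound the geometric sum by \(1/(1-\beta)\), and taking \(\epsilon_0\) as the deterministic initial error at the start of repair. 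The stated bound \(\epsilon_k\le\beta^k\epsilon_0+\gamma/(1-\beta)\) should therefore be read in expectation. If a pathwise statement is wanted, I would either strengthen the certificate to an almost-sure one-step inequality, in which case the identical induction applies pathwise, or obtain a high-probability version via Markov's inequality on \(\epsilon_k\ge 0\).

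For the iteration count, substitute \(\gamma\le(1-\beta)/(2T)\), so that the additive floor \(\gamma/(1-\beta)\) is at most \(1/(2T)\). It then suffices to make \(\beta^k\epsilon_0\le 1/(2T)\), i.e.
\[
k\ge\frac{\log(2T\epsilon_0)}{\log(1/\beta)}.
\]
Since \(\beta\) is a fixed constant bounded away from \(1\) and \(\epsilon_0\) is bounded (for instance by the size of the radius-\(r\) repair neighborhood), this threshold is \(\mathcal{O}(\log T)\), with the constant \(1/\log(1/\beta)\) absorbed. Hence \(\epsilon_k\le 1/T\) after \(\mathcal{O}(\log T)\) iterations.

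For the final sentence of the proposition, I would sketch why a spectral gap certifies the contraction. Suppose the local repair acts approximately linearly on the error vector through an operator whose spectral radius on the relevant subspace is \(1-\lambda\), where \(\lambda\) is the graph spectral gap. Bounded influence decay then controls the nonlinear and boundary leakage, which enters as \(\gamma\). This gives \(\beta=1-\lambda\). I would present this only as a sufficient condition, matching the statement, which treats the certificate itself as the load-bearing assumption.

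The main obstacle is not the algebra but the hypothesis conditions. The argument must hold only while the iterates remain in the certified neighborhood, so I need to show invariance of that neighborhood. My first attempt would use \(\epsilon_{k+1}\le\beta\epsilon_k+\gamma\le\max(\epsilon_k,\gamma/(1-\beta))\), which keeps the error below its initial level. Where only the expectation version of the certificate holds, I would instead state the conclusion in expectation, or stop the process at the exit time from the neighborhood.
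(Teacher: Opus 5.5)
Your proposal is correct and follows essentially the same route as the paper's proof. Both take total expectations in the certificate and unroll $\mathbb{E}\epsilon_{k+1}\le\beta\,\mathbb{E}\epsilon_k+\gamma$ to get $\mathbb{E}\epsilon_k\le\beta^k\epsilon_0+\gamma/(1-\beta)$, then use $\gamma/(1-\beta)\le 1/(2T)$ and solve $\beta^k\epsilon_0\le 1/(2T)$ to obtain $k\ge\log(2T\epsilon_0)/\log(1/\beta)=\mathcal{O}(\log T)$. Your reading of the bound in expectation is what the paper's proof actually establishes, and as in the paper, the spectral-gap sentence is only a sufficient-condition remark, so your sketch there is not load-bearing.
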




\section{Experiments}
\label{sec:experiments}
\label{sec:results}

\paragraph{Validation stack.}
Trivium uses a staged validation stack, classified by role. CausalBench-Seq is the primary controlled testbed; the executed entropy-gated snapshot
differs from the ideal frozen-commit assumption used by parts of the formal analysis, so the experiments are not described as fully
assumption-matched; SagaLLM~\citep{ChangGeng2025sagallm} supplies the transactional-memory substrate required by the CTL abstraction; the
\textsc{CAP-GSM8K} real-LLM stream~\citep{ChangACL2026} is an external-validity probe, not an independent confirmation of all assumptions; the
REALM-Bench~\citep{GengChang2026realm} extension is specified but not executed in this version. Concurrent anonymized within-episode causal-critique
work is complementary (\emph{why} within an episode; Trivium addresses \emph{when} across an episode stream; App.~\ref{app:rw-llm}). Deployment
examples (triage, supply chains, embodied control) are motivating domains whose full validation requires live intervention channels and
domain-specific causal logs. We report assumption-targeted tests and implementation audits, not independent end-to-end validation or
contract-identical validation of every theorem.

We evaluate structural identification and mechanism attribution on a controlled cross-episode testbed (\textsc{CausalBench-Seq}). RQ1--RQ4 organize
the main-body evidence; RQ5 reports a preliminary real-LLM pilot in App.~\ref{app:llm-bridge}. Each RQ is paired with a formal statement, a stated
falsifier, and a final empirical status (Table~\ref{tab:rq-matrix}). CausalBench-Seq is a confounded linear-Gaussian SCM stream with candidate edges
$C\to X$, $C\to Y$, and $X\to Y$. The $X\to Y$ edge is absent for episodes $0$--$149$, present for $150$--$299$, and absent again for $300$--$499$.
The executed suite uses $E=500$ episodes and 20 seeds for the headline run, with separately stated seed counts for ablations; full reporting scope is
in Appendix~\ref{app:experiments-ext}. The transactional substrate is attributed to SagaLLM~\citep{ChangGeng2025sagallm}. The present experiments do
not validate the separate physical-grounding theorem or a complete multi-agent planning deployment.

\begin{table}[!tbp]
\centering
\footnotesize
\renewcommand{\arraystretch}{1.08}
\setlength{\tabcolsep}{4pt}
\begin{tabularx}{\linewidth}{@{}p{0.07\linewidth} X p{0.31\linewidth}@{}}
\toprule
\textbf{RQ} & \textbf{Formal claim and test condition} & \textbf{Final empirical status} \\
\midrule
RQ1 & \textbf{Claim:} observational-equivalence $\Omega(T)$ separation (App.~\ref{app:separation}). \textbf{Condition:} identical observational transcripts and no intervention channel. \textbf{Falsifier:} $P(X{\to}Y)_{\mathrm{RLVR}} \!\approx\! P_{\mathrm{epi}}$. & Corroborated on the controlled instance. \\
\addlinespace[2pt]
RQ2 & \textbf{Claim:} conditional horizon-order probe bounds (Thm.~\ref{thm:ce-upper}; Prop.~\ref{thm:ce-lower}). \textbf{Condition:} persistent evidence and positive information. \textbf{Falsifier:} stationary hard exposure continues to grow. & Finite-horizon support; the asymptotic empirical fit is withdrawn. \\
\addlinespace[2pt]
RQ3 & \textbf{Claim:} detector-conditional drift bound (Thm.~\ref{thm:drift}). \textbf{Condition:} CUSUM-detectable shifts. \textbf{Falsifier:} the reopen pathway misses the tested changes. & Recovery observed; the topology-detector pathway is falsified. \\
\addlinespace[2pt]
RQ4 & \textbf{Claim:} conditional LRCP contraction (Prop.~\ref{prop:lrcp-conv}). \textbf{Condition:} a local contraction certificate. \textbf{Falsifier:} LRCP-off changes posterior recovery. & Posterior attribution falsified; a dispatch-freshness benefit is measured. \\
\bottomrule
\end{tabularx}
\caption{Claim-and-test matrix. Each row states the formal claim, its load-bearing condition and falsifier, and the final CausalBench-Seq status ($E=500$, 20 seeds).}
\label{tab:rq-matrix}
\end{table}

\begin{table}[!tbp]
\centering
\footnotesize
\begin{tabular}{@{}lcccc@{}}
\toprule
\textbf{Controller} & \textbf{init (0--49)} & \textbf{recovery (100)} & \textbf{stable (350)} & \textbf{Total hard exposure} \\
\midrule
Trivium (ours)    & $2.2$  & $5.5$   & $0.0$   & $7.8 \pm 2.9$ \\
Epistemic-reset   & $0.3$  & $0.5$   & $1.6$   & $2.5 \pm 2.0$ \\
RLVR              & $50.0$ & $100.0$ & $350.0$ & $500.0 \pm 0.0$ \\
RLVR+memory       & $50.0$ & $100.0$ & $350.0$ & $500.0 \pm 0.0$ \\
Reactive          & $50.0$ & $100.0$ & $350.0$ & $500.0 \pm 0.0$ \\
\bottomrule
\end{tabular}
\caption{E1, revised RQ2, under the hard identification readout ($\mathbf 1[\text{any edge MAP} \neq \text{truth}]$, per episode, from archived
episode-level logs; mean episodes wrong over 20 seeds, sd on the total). Trivium has zero observed stationary errors in this finite archive;
outcome-only controllers are misidentified in every episode. Phase means and total means are computed and rounded independently and therefore need not
sum exactly. The Epistemic-reset row is not probe-matched and supports the intervention channel, not a memory comparison.}
\label{tab:e1-numbers}
\end{table}

\paragraph{RQ1: Does the epistemic signal separate from the outcome signal?}
This is the \emph{necessary condition} for any three-regret functional: if outcome-only updates already recovered the true causal model, epistemic
regret would be redundant. We ask whether there is a regime in which outcome-only RL drives training-time outcome regret to zero while still
converging on a \emph{wrong} causal model. Experiment~A.0 (Appendix~\ref{app:separation}) runs a confounded linear-Gaussian SCM ($C \to X$, $C \to Y$,
no $X \to Y$) with $T = 500$ steps over $N_{\mathrm{seeds}} = 20$ seeds, under two controllers: \textsc{rlvr} (outcome-only) and \textsc{epistemic} (a
do-intervention-gated updater). RLVR converges to $P(X{\to}Y) = 0.989 \pm 0.003$ (retains the spurious edge); the Epistemic controller converges to
$P(X{\to}Y) = 0.015 \pm 0.005$ (eliminates it). Mann--Whitney $p = 9.64 \times 10^{-9}$; the effect size is decisive. The separation is real and
large, not a calibration artifact; RQ1 is corroborated on the controlled testbed.


\paragraph{RQ2: Does hard structural exposure continue to grow during stationary operation?}
Theorem~\ref{thm:ce-upper} gives a conditional logarithmic probe bound, but a single $E=500$ trajectory cannot identify an asymptotic rate. The
revised empirical question is narrower and directly measurable: after the controller identifies the current structure, does hard structural exposure
keep accumulating during stationary operation? We run five controllers over 20 seeds. Table~\ref{tab:e1-numbers} shows
$7.8\pm2.9$ wrong-model episodes for Trivium, consisting of initialization ($2.2$) and recovery around the two changes ($5.5$), with zero observed
stationary errors in the archived run. Every outcome-only controller remains misidentified in every episode. This finite-horizon result is consistent
with the conditional delayed-identification theorem but does not re-establish the withdrawn empirical $\mathcal{O}(\log E)$ fit.

The remainder of this paragraph preserves the audit trail for the previously reported soft score. That score, under which Trivium recorded
\(8.82\pm0.14\) against Epistemic-reset's $66.99$ and RLVR's $333.33$, is mean absolute belief error, $1-\mathrm{acc}$ with
$\mathrm{acc}=1-\frac{1}{m}\sum_j|p_j-y_j|$, computed on beliefs that the implementation clips to $[0.01,0.99]$; a correctly identified true edge held
at the clip boundary is charged exactly $0.01$ per edge per episode. The apparent post-commit residual rate $\varepsilon_{\mathrm{mis}}=0.010$/ep is
therefore the clip bound, not a structural-error rate: in the archived run every post-commit, non-drift record holds the true-edge beliefs at $0.99$
with standard deviation $0.0000$, and under a hard identification readout ($\mathbf 1[p\ge 0.5]$ per edge) zero stationary structural errors are
observed across the archived 20-seed run. Second, the agreement between the cumulative slope and $1-0.990$ from the model-accuracy column is not
independent confirmation; both quantities are functions of the same clipped belief vector. The conditional logarithmic theorem concerns delayed
identification; this one-horizon audit does not estimate its asymptotic rate. The $\Theta(E)$ characterization of the cumulative score describes the
belief-error metric under clipping, not hard misidentification. The separation from outcome-only control is understated rather than overstated by the
soft score: under the hard readout, outcome-only baselines carry a misidentified structure in every stationary episode (edge error $0.905$,
exact-structure error $1.000$ per episode) while Trivium carries none, so no finite ratio summarizes it.

\paragraph{Conditional identification-to-action bridge.}
Theorem~\ref{thm:dispatch-coupling} provides a conditional translation from committed-graph correctness to outcome regret under its oracle-graph
dispatch and policy-sensitivity assumptions. Direct committed-graph logging shows that LRCP reduces dispatch-graph exposure from $10.1\pm3.2$ to
$7.8\pm2.9$ episodes and reduces posterior-to-commit lag from $2.8$ to $0.1$ episodes. These are structural freshness measurements. The experiments do
not dispatch environment-changing actions from the decided graph and do not measure outcome-valued consequences, so no empirical end-to-end
action-quality claim is made.

\paragraph{The Epistemic-reset ablation, rescoped.}
The Epistemic-reset ablation uses the \emph{same} intervention channel and the \emph{same} epistemic update rule as Trivium but discards cross-episode
memory, and its cumulative score of $66.99 \pm 0.11$ against Trivium's $8.82 \pm 0.14$ was previously read as isolating the persistent CTL as the
load-bearing mechanism. Two confounds limit that reading. Under the belief-error score, most of the reset arm's total is an under-confidence charge
rather than misidentification: its beliefs sit on the correct side of every threshold (for example $0.80$ on a present edge) and are billed the
distance to the clip ceiling each episode. And the two arms differ in per-episode probe volume as well as in memory, so the comparison does not hold
evidence acquisition fixed. This ablation therefore tests the value of the intervention channel and of confidence accumulation; a memory-specific
claim requires a hard identification readout at a matched probe budget and is outside the empirical scope of this closure paper.

\paragraph{RQ3: What recovers after topology change, and is CUSUM reopening load-bearing?}
Across 20 seeds, the posterior crosses the new $0.5$ decision threshold after a median of $4$ episodes at $e=150$ and $5$ episodes at $e=300$; the
$K$-sweep shows approximately constant recovery cost per injected change. Those observations are consistent with a bounded per-change recovery cost.
They do not validate the detect--reopen mechanism of Theorem~\ref{thm:drift}. At the shipped threshold, CUSUM reopens at only 1 of 40 true
topology-change opportunities; the most sensitive tested threshold reaches 8 of 40 while producing 46 stationary false alarms. Posterior exposure is
essentially unchanged across the threshold sweep, and disabling LRCP also leaves posterior recovery unchanged. Continued posterior updating after
commitment is therefore the load-bearing recovery mechanism on this topology class. A separately configured CUSUM-style squared-residual detector
succeeds on the variance-inflation stream, detecting 20 of 20 shifts while the mean-residual repair signal stays at its baseline rate. That detector is not identical
to the CE-EIG statistic assumed by Theorem~\ref{thm:drift}. The empirical verdict is mixed: recovery is observed, the topology-change reopen pathway
is falsified, and detector usefulness is change-class dependent.

\begin{table}[!tbp]
\centering
\footnotesize
\begin{tabular}{@{}lccc@{}}
\toprule
\textbf{Scenario (disruption)} & \textbf{Baseline} & \textbf{+ CTL} & \textbf{Observed result} \\
\midrule
CausalBench-Seq toggle $e{=}150$    & RLVR: no recovery        & Trivium: $T_{\mathrm{rec}} = 4$ (med.)  & recovery observed \\
CausalBench-Seq toggle $e{=}300$    & RLVR: no recovery        & Trivium: $T_{\mathrm{rec}} = 5$ (med.)  & recovery observed \\
CausalBench-Seq $K$-sweep ($K{=}5$) & RLVR: no recovery        & Trivium: slope $0.95$/cp ($R^2{=}0.999$) & recovery observed \\
\bottomrule
\end{tabular}
\caption{E4, revised RQ3. Recovery is observed on CausalBench-Seq, but the CUSUM audit shows that the intended reopen mechanism is not load-bearing for the tested topology changes. Toggle rows use 20 seeds; the $K$-sweep row uses 10 seeds.}
\label{tab:e4-numbers}
\end{table}

\begin{figure}[!tbp]
\centering
\includegraphics[width=0.98\linewidth, height=0.16\textheight]{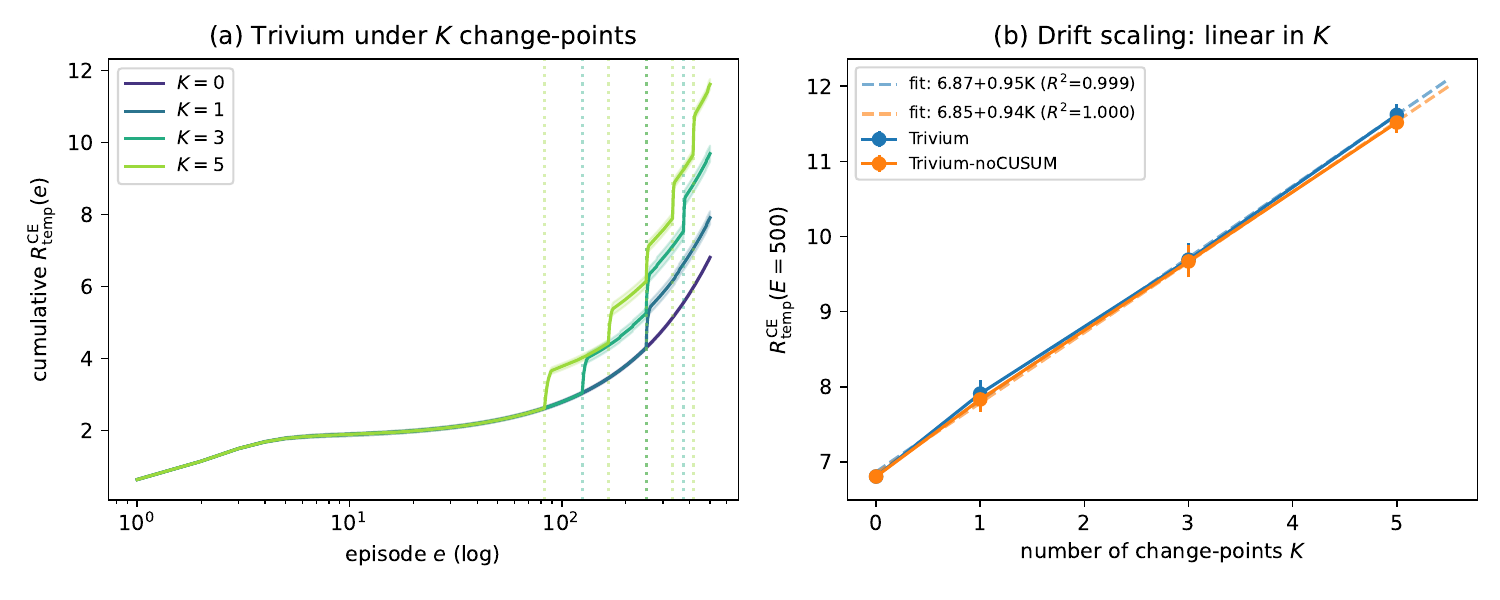}
\caption{\textbf{RQ3: drift-robust regret with $K$ change-points.} \emph{Left:} cumulative regret under $K \in \{0,1,3,5\}$ change-points injected
into CausalBench-Seq. \emph{Right:} linear-in-$K$ scaling at $E{=}500$ with slope $0.95$ per change-point ($R^2{=}0.999$), descriptively linear in $K$
over the tested range; the noCUSUM ablation overlaps, and the later threshold audit attributes topology recovery to continued posterior updating
rather than reopening (App.~\ref{sec:results-a7}).}
\label{fig:drift}
\end{figure}

\begin{figure}[!tbp]
\centering
\includegraphics[width=0.98\linewidth, height=0.15\textheight]{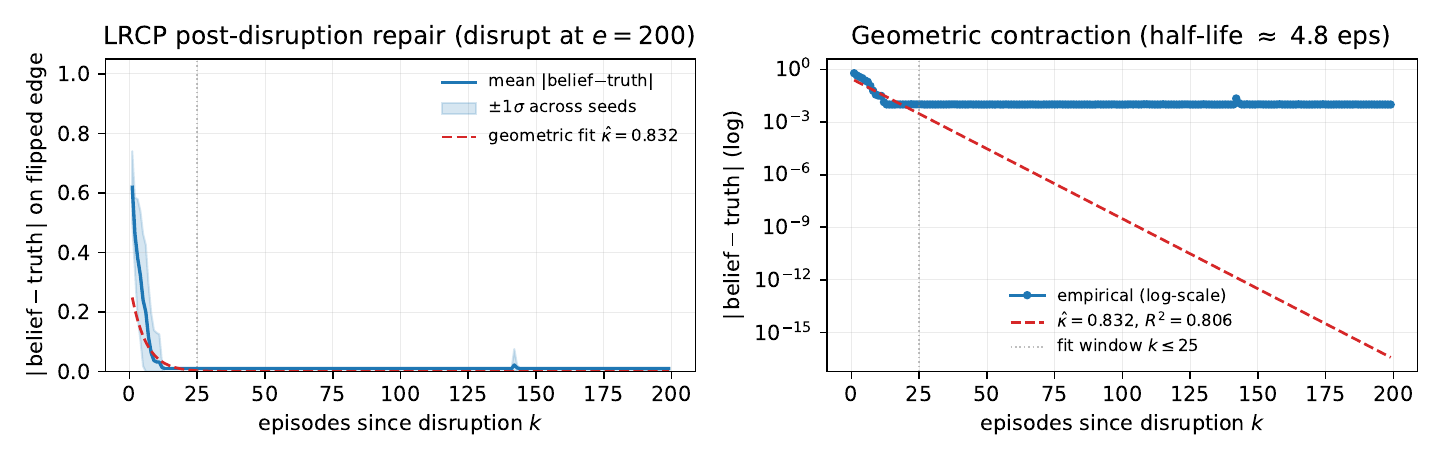}
\caption{Exp A.1, revised RQ4. Post-disruption posterior error has an approximately geometric descriptive fit ($\hat\kappa=0.832$, aggregate
$R^2=0.806$; per-seed median $\hat\kappa=0.866$). The LRCP-disabled ablation leaves this posterior trajectory unchanged, so the figure does not
corroborate LRCP's contraction certificate. It describes continued posterior updating. LRCP's measured effect appears instead in committed-graph
freshness.}
\label{fig:lrcp-contraction}
\end{figure}

\paragraph{RQ4: What system state does local repair improve?}
Proposition~\ref{prop:lrcp-conv} remains a conditional statement about an LRCP residual when a local contraction certificate holds. The original
experiment instead measured posterior edge error after a topology flip, which need not be the LRCP residual. We therefore ask whether disabling LRCP
changes posterior identification or the freshness of the committed dispatch graph. Figure~\ref{fig:lrcp-contraction} presents the trajectory together
with the OLS geometric-decay fit: over the post-disruption window ($e \in [200, 260]$), aggregate $\hat\kappa = 0.832$ ($R^2 = 0.806$) and per-seed
$\hat\kappa = 0.866 \pm 0.045$, well below $1$, with a per-seed-median half-life of $\approx 4.8$ episodes (aggregate $3.8$). The post-submission
audit shows this trace cannot be credited to LRCP: disabling LRCP entirely removes 4,317 repair calls per seed without changing posterior recovery
(all seeds still recover, at unchanged latency), so the geometric shape belongs to continued posterior updating, and Proposition~\ref{prop:lrcp-conv}
is retained only as a conditional contraction statement whose certificate this experiment does not isolate. Logging the committed graph directly
identifies LRCP's actual, measured role: it shortens the interval in which the committed dispatch graph trails the already-recovered posterior,
reducing dispatch-graph exposure from $10.1 \pm 3.2$ to $7.8 \pm 2.9$ episodes per seed (commit lag $2.8$ to $0.1$ episodes; recovery latency median
$4$ to $2$--$2.5$). LRCP is a dispatch-freshness mechanism, not a posterior-identification engine; Corollaries~\ref{cor:lrcp-transfer}
and~\ref{cor:lrcp-robust} inherit the conditional status of the proposition.

\paragraph{Robustness beyond the linear-Gaussian regime.}
Theorem~\ref{thm:ce-upper} and Proposition~\ref{prop:lrcp-conv} rest on a local-linear SCM approximation. Ablation~A-NL (App.~\ref{sec:results-anl})
stress-tests this with tanh-coupled SCMs at nonlinearity strength $\nu \in \{0, 0.5, 1, 2\}$: Trivium's log-fit $R^2 \in [0.94, 0.99]$ across all four
levels, with $\Theta(E)$ separation against RLVR at $16\text{--}25\times$. Ablation~A-JSSP (App.~\ref{sec:results-ajssp}) tests a different topology,
a $M{=}5$-machine confounded scheduling SCM with discrete actions and a makespan-style outcome (a job-shop-flavored stream); on this stream Trivium's
log-fit $R^2 = 0.992$ versus its linear-fit $R^2 = 0.887$, while RLVR's linear-fit $R^2 = 0.9999$ confirms the $\Theta(E)$ failure mode, with a
$7.3\times$ separation at $E = 500$. These auxiliary stress tests produce log-shaped cumulative trajectories outside the primary three-variable
stream. They are qualitative shape checks, not a replacement for the withdrawn primary soft-score fit and not a proof of an asymptotic law under the
full executed controller.

\paragraph{RQ5: Does the diagnostic signal transfer to a real-LLM pilot?}
Under the \textsc{CAP-GSM8K} adversarial-hint protocol of~\citet{ChangACL2026} layered on \textsc{GSM8K}~\citep{cobbe2021gsm8k}, Trivium provides a
preliminary external-validity pilot beyond the controlled CausalBench-Seq setting. Across the completed model-family runs, one full $E\!=\!500$ run on
Llama-3.3-70b and three $E\!=\!100$ pilot runs on GPT-4o, Claude-Sonnet-4.5, and GPT-3.5-Turbo, Trivium attains a $103\times$ peak reduction at
$E\!=\!500$ and an $\approx\!24\times$ geometric-mean reduction across completed runs (Table~\ref{tab:rq5-mini}). The small number of model families,
unequal horizons, and one full-length run prevent an asymptotic or general model-family conclusion. Full protocol, model-version notes, and
per-episode evidence are in App.~\ref{app:llm-bridge}.

\begin{table}[!ht]
\centering
{\footnotesize
\setlength{\tabcolsep}{6pt}
\begin{tabular}{@{}lccccc@{}}
\toprule
\textbf{Model} & $\boldsymbol{E}$ & \textbf{seeds} & \textbf{RLVR $R_{\mathrm{temp}}^{\mathrm{CE}}$} & \textbf{Trivium $R_{\mathrm{temp}}^{\mathrm{CE}}$} & \textbf{Reduction} \\
\midrule
Llama-3.3-70b      & $500$ & $10$ & $248.8$ & $2.4$ & $103\times$ \\
GPT-4o             & $100$ & $3$  & $58.0$  & $3.0$ & $19\times$  \\
Claude-Sonnet-4.5  & $100$ & $3$  & $21.3$  & $2.3$ & $9\times$   \\
GPT-3.5-Turbo      & $100$ & $3$  & $68.0$  & $3.7$ & $18\times$  \\
\bottomrule
\end{tabular}
\caption{RQ5: cumulative temporal regret on \textsc{CAP-GSM8K} (App.~\ref{app:llm-bridge}); lower is better.}
\label{tab:rq5-mini}}
\end{table}




\section{Conclusion, Impact, and Limitations}
\label{sec:discussion}\label{sec:limitations}

\paragraph{Closure.} Trivium establishes that explicit interventions and persistent causal evidence can bound delayed structural identification under
stated assumptions. Its original evaluation did not isolate a memory-specific advantage, and its detector and repair mechanisms act on different
system states than the posterior metric initially used to evaluate them. The corrected implementation audit shows that local repair improves dispatch
freshness rather than posterior identification, while CUSUM reopening is not load-bearing on the tested topology changes. These boundaries close the
claims of this paper.

\emph{Temporal regret} promotes a previously-implicit dimension of long-horizon
learning to a first-class objective. \emph{Self-learning} acquires a precise,
falsifiable meaning: the controller asks not only \emph{how much} reward was
lost but \emph{why} an outcome diverged from prediction, and the time spent on
the wrong answer is itself penalized. The path from identification rate to
action quality to downstream deployment is stated as a conditional bridge,
with the experiments here measuring posterior and committed-graph structure. The
\textsc{CAP-GSM8K} real-LLM stream (App.~\ref{app:llm-bridge}) is an
external-validity pilot rather than the main rate test: it suggests the
temporal-regret signal transfers beyond the controlled SCM setting, while
broader benchmark coverage and live intervention channels remain necessary
next steps.

\textbf{Impact.}\enspace We see this as a step toward explicit System-2-style
control for long-lived agentic systems: a learner that asks not only
\emph{what} went wrong, but \emph{why} and \emph{when}. The architectural
option this opens, a scheduler above the dispatch policy operating on a
substrate-external causal log, applies wherever an episode is structured,
an intervention channel is available, and outcomes can be re-attributed:
tool-using LLM agents, hospital triage, multi-agent SWE loops, and
supply-chain control. Evaluation correspondingly shifts from ``did outcome
reward improve'' to ``did the system identify why it failed, and how
fast'', targeting structural correction rather than reactive
outcome repair.

\subsection*{Limitations and Future Work}

As a Concept \& Feasibility contribution, this paper focuses on establishing
the temporal-regret objective, proving separation and rate results, and
validating feasibility through stated falsifiers. Broader live deployment,
deferred Gemini-2.5-Flash replication and a $\kappa_{\min}$-sweep
testing the predicted effect-to-noise dependence
(Lemma~\ref{lem:k0-lower}) are natural next steps.

\emph{Model-class and graph scope.}\enspace Bounds operate on a two-layer
graph (App.~\ref{app:dag}) with prescribed $G^{\mathrm{inf}}$, local-linear
SCM, and CUSUM-detectable change-points; confounders outside
$G^{\mathrm{inf}}$ are not identified. Extensions to hierarchical structure
learning, kernel/normalizing-flow LRCP, variance-aware change-point detectors,
and online structural-surprise detection broaden the reach
(App.~\ref{app:limitations-ext}).

\emph{Substrate reliability and LLM--graph alignment.}\enspace
Theorem~\ref{thm:grounding}'s $\varepsilon_c$ slack covers stochastic commit
failure (App.~\ref{app:warehouse}); Byzantine-robust commit strengthens it
under adversarial CTL entries. Per-agent speedup saturates at $M = m_0$
(Prop.~\ref{prop:multi-agent}); $M \gg m_0$ requires coalition design
(App.~\ref{app:results-ext}).


\bibliographystyle{plainnat}
\bibliography{frameB_refs_checked}

\appendix

\addtocontents{toc}{\protect\setcounter{tocdepth}{1}}

\newpage
\setcounter{tocdepth}{1}
\renewcommand{\contentsname}{\Large Appendix Table of Contents}
\begingroup
\makeatletter
\renewcommand*{\l@section}[2]{%
  \ifnum \c@tocdepth >\z@
    \addpenalty\@secpenalty
    \addvspace{0.15em \@plus\p@}%
    \setlength\@tempdima{1.5em}%
    \begingroup
      \parindent \z@ \rightskip \@pnumwidth
      \parfillskip -\@pnumwidth
      \leavevmode \bfseries
      \advance\leftskip\@tempdima
      \hskip -\leftskip
      #1\nobreak\hfil \nobreak\hb@xt@\@pnumwidth{\hss #2}\par
    \endgroup
  \fi}
\makeatother
\renewcommand{\baselinestretch}{0.92}
\selectfont
\tableofcontents
\endgroup




\section{Contribution Traceability}
\label{app:traceability}

Each claim in the \S\ref{sec:intro} contributions list maps to a labeled formal artefact (theorem, definition, proposition, lemma, or corollary) and a
labeled empirical artefact (experiment, ablation, or measurement). Table~\ref{tab:traceability} consolidates the mapping; entries point to the
specific load-bearing artefact for each row.

\begin{table}[!htbp]
\centering
\footnotesize
\renewcommand{\arraystretch}{1.18}
\setlength{\tabcolsep}{4pt}
\begin{tabular}{@{}p{0.26\linewidth} p{0.30\linewidth} p{0.36\linewidth}@{}}
\toprule
\textbf{Claim (\S\ref{sec:intro})} & \textbf{Formal support} & \textbf{Empirical support} \\
\midrule
1a.\ Three diagnostic regrets \(R_{\mathrm{out}}, R_{\mathrm{epi}}, R_{\mathrm{temp}}\)
& Definition~\ref{def:three-regret} (\S\ref{sec:three-regret})
& Operationalized in Tab.~\ref{tab:e1-numbers} and the falsification matrix Tab.~\ref{tab:rq-matrix} \\
\midrule
1b.\ Observational-equivalence separation: outcome-only observation cannot identify causal structure without interventions/equivalent causal signal
& Theorem~\ref{thm:sep} (App.~\ref{app:separation})
& RQ1 / Exp A.0 (\S\ref{sec:experiments}); Mann--Whitney $p\!=\!9.64\!\times\!10^{-9}$, 20 seeds \\
\midrule
2a.\ Within-episode exact-classification complexity $\mathcal{O}(m\sigma^2\kappa_{\min}^{-2}\log T)$; conservative temporal accounting $\mathcal{O}(m^2\sigma^2\kappa_{\min}^{-2}\log T)$
& Theorem~\ref{thm:within} (\S\ref{sec:cross-episode})
& The exact-classification theorem is formal. The prior $0.010$/ep soft-score slope is a clipping artifact and is not empirical support for this theorem. \\
\midrule
2b.\ Cross-episode total causal-probe complexity logarithmic in $E$; induced delayed-identification temporal regret under $B_{\mathrm{eff}}$ accounting
& Theorem~\ref{thm:ce-upper}; Lemma~\ref{lem:k0-lower} (sufficient local-Gaussian condition with $K_0$ a dimensionless coverage/posterior-mass constant)
& RQ2 / Exp E1; hard exposure $7.8 \pm 2.9$ episodes at $E\!=\!500$, with zero observed stationary errors in the archived 20-seed run. This is finite-horizon support, not an asymptotic fit. \\
\midrule
2c.\ Total causal-probe lower bound matching logarithmic dependence on $E$ while exposing structural factors, translated through the budget-to-regret accounting
& Prop.~\ref{thm:ce-lower}; Cor.~\ref{cor:budget}
& No direct empirical validation of the adapted lower bound or a translated asymptotic envelope; the former soft-score envelope verdict is withdrawn. \\
\midrule
2d.\ Detector-conditional segment-wise re-identification under $K$ change-points
& Theorem~\ref{thm:drift};
Definition~\ref{def:kcp}
& Topology recovery is observed, but CUSUM reopening is falsified on that change class. A-VAR isolates a variance-shift class detected in 20/20 seeds while the repair signal stays at its baseline rate. \\
\midrule
3a.\ LRCP contraction under a local repair certificate; empirical attribution audited
& Proposition~\ref{prop:lrcp-conv}; Cor.~\ref{cor:lrcp-transfer}, \ref{cor:lrcp-robust}
& LRCP-off leaves posterior recovery unchanged, so the posterior trace does not isolate the proposition. Committed-graph logging shows dispatch exposure $10.1\to7.8$ episodes. \\
\midrule
3b.\ Dispatch coupling assuming oracle-graph base regret $\tilde{\mathcal{O}}(\sqrt{E})$ and $\mathcal{O}(\log E)$ pre-commit episodes
& Theorem~\ref{thm:dispatch-coupling} (App.~\ref{app:dispatch-coupling-proof})
& Ablation A6 resolves the $\sqrt E$ component; the additional $\rho\log E$ term is not separable from zero. No action-valued closed-loop consequence is measured. \\
\midrule
3c.\ Physical grounding under bounded actuator, observation, and commit-failure error
& Theorem~\ref{thm:grounding} (App.~\ref{app:grounding-proof})
& Analytical envelope calibration with a published $p_f\approx0.7\%$ anchor; direct deployment measurement is not provided. \\
\midrule
3d.\ Composed end-to-end deployment bound
& Corollary~\ref{cor:e2e-deploy} (App.~\ref{app:composed-transfer})
& Conditional composition only; the paper does not measure end-to-end action-valued deployment loss. \\
\midrule
4.\ Five stated RQs with mixed empirical status; RQ5 is preliminary external-validity evidence
& Tab.~\ref{tab:rq-matrix} (\S\ref{sec:experiments})
& RQ1 corroborated; RQ2 finite-horizon hard-exposure support; RQ3 recovery observed but topology reopen pathway falsified; RQ4 posterior attribution falsified and dispatch benefit measured; RQ5 remains a pilot. \\
\bottomrule
\end{tabular}
\caption{\textbf{Contribution traceability matrix.} Each main claim maps to a formal conditional statement and an empirical test. The formal results
are stated as causal-probe complexity, delayed-identification temporal-regret accounting, or bounded-error transfer results under explicit
assumptions; they are not unconditional guarantees for arbitrary agentic systems. Proofs in App.~\ref{app:proofs}, App.~\ref{app:ce-lower-proof}, or
App.~\ref{app:separation}; numerical detail in App.~\ref{app:results-ext} or App.~\ref{app:llm-bridge}.}
\label{tab:traceability}
\end{table}




\section{Extended Related Work}
\label{app:related-ext}

The main-body Related Work (\S\ref{sec:related}) gives a compact per-thread positioning. This appendix expands each thread in turn:
\S\ref{app:rw-rl-critiques} expands the RL learning-critique paragraph;
\S\ref{app:rw-llm} expands the self-improving LLM and LLM causal-reasoning paragraph;
\S\ref{app:rw-bandits} expands the causal-bandit paragraph; and
\S\ref{app:rw-id-cp} expands the identification and change-point paragraph, including the persistent-substrate assumptions that the main-body paragraph names but does not survey.

\subsection{RL Learning Critiques and Outcome-Only Limits}
\label{app:rw-rl-critiques}

Expands the §2 paragraph on the lacking-learning critique of RL. The thread asks: under what conditions can an outcome-only learner avoid linear
regret on a confounded stream? The answer, across three sub-literatures, is that outcome-channel enhancements alone do not suffice; an explicit causal
channel is required.

\paragraph{Why outcome-channel enhancements do not close the gap.}
Process supervision, RLHF-style preference optimization, and shaped reward signals add information to the outcome channel but do not open a separate
causal channel: the corrective signal still lives in a scalar. Reactive replanning rebuilds the model from scratch each episode, paying
$\mathcal{O}(E\cdot T)$ identification cost. Meta-learning and exploration-bonus schedules tighten constants but cannot circumvent the $\Omega(T)$
floor. Trivium changes the channel: the dispatch policy remains a black box, and the corrective signal is routed through a substrate-external causal
model revised through \emph{logged interventions}.

\paragraph{Offline and confounded RL (extended).}
Kallus and Zhou~\cite{kallus2020confounding}, Namkoong et al.~\cite{namkoong2020off}, and the offline RL tutorial of Levine et
al.~\cite{levine2020offline} establish impossibility results for \emph{outcome} regret under unobserved confounding. Zhang~\cite{zhang2020causal}
injects causal structure into policy optimization when the graph is known or identifiable. Our separation theorem (Appendix~\ref{app:separation}) is
complementary: rather than claiming an unconditional impossibility for every outcome-only policy in arbitrary confounded environments, it constructs
observationally equivalent SCMs whose outcome streams are identical for any outcome-only policy but whose interventional distributions differ. The
corrective signal sits in the channel that distinguishes the two, which observational outcomes alone do not access. Classical
TD-learning~\cite{sutton1988learning}, $Q$-learning~\cite{watkins1992q}, and bandit baselines (Thompson sampling~\cite{thompson1933likelihood},
Lai--Robbins~\cite{lai1985asymptotically}, KL-UCB~\cite{garivier2011upper}) supply the rates our separation is measured against.

\paragraph{Why no public confounded-RL benchmark serves as the primary testbed.}
We surveyed candidate benchmarks for the cross-episode role: confounded-MDP gridworlds, the Janner--Furuta confounded simulators, confounded-CartPole
variants, and the offline-RL-with-confounders datasets that accompany~\cite{kallus2020confounding,namkoong2020off}. The deciding criterion was joint
satisfaction of four properties: (a) an explicit intervention channel $\mathrm{do}(Z=z)$ exposed to the controller, (b) a persistent cross-episode log
of (intervention, outcome, why-attribution) tuples, (c) injectable detectable change-points at controllable times, and (d) controllable confounder
cardinality $m_0$. No surveyed public benchmark satisfies all four simultaneously: most lack (a) and (b) because they target single-episode best-arm
or offline-RL settings rather than long-horizon agentic streams. CausalBench-Seq is a small controlled testbed exposing an intervention channel,
persistent evidence, and known topology changes. The audit shows that the executed controller does not satisfy every idealized contract, especially
the frozen-commit and topology-detection mechanisms. A-NL and A-JSSP are simplified breadth stress tests rather than full-stack replications. The
supplementary code's intervention/log interface is structured for drop-in replacement once a satisfying public benchmark exists.

\subsection{Self-Improving LLMs and LLM Causal Reasoning}
\label{app:rw-llm}

Expands the §2 paragraph on self-improving LLMs and LLM causal reasoning. The thread documents both the methods that try to close the loop within the outcome channel and the empirical evidence that they do not, motivating Trivium's substrate-external corrective channel.

\paragraph{Self-learning and self-improving LLMs (extended).}
A parallel line attempts self-improvement without opening a substrate-external corrective channel. Self-consistency~\cite{wang2023self} aggregates
samples from a single model; Reflexion~\cite{shinn2023reflexion} adds verbal self-critique with episodic memory; Self-Refine~\cite{madaan2023self}
iterates on model-generated feedback; STaR~\cite{zelikman2022star} bootstraps reasoning chains from successful runs; Self-Rewarding Language
Models~\cite{yuan2024self} train an in-model reward signal. Huang et al.~\cite{huang2023llms} establish the complementary impossibility: LLMs cannot
reliably self-correct reasoning when the corrective signal is the model's own output. All of these enrich \emph{what} the model attends to within the
outcome channel; none open a separate causal channel. Trivium is orthogonal: the dispatch policy may be any of these self-improving variants, treated
as a black box, while the corrective signal lives in a persistent causal log revised by logged interventions, so ``self-learning'' here refers to
revision of the persistent causal model, not to weight-level training of the LLM. A chapter-length treatment of the Trivium framework appears
in~\citet{chang2026pathagi2}.

\paragraph{LLMs and causal reasoning (expanded, 2023--2026).}
The LLM-causal-reasoning literature has moved rapidly since 2023. Ze\v{c}evi\'{c} et al.~\cite{zevcevic2023causal} argue that autoregressive training
cannot in general implement the $\mathrm{do}$-operator: LLMs are \emph{causal parrots}. Ki\c{c}iman et al.~\cite{kiciman2023causal} document both the
positive side (LLMs do well on pairwise causal direction on familiar domains) and the negative side (performance collapses on novel domains and
genuinely interventional queries). Jin et al.~\cite{jin2023cladder} introduce CLadder, a symbolic benchmark that demonstrates systematic $L_2$/$L_3$
failures. Jin et al.'s follow-up~\cite{jin2024corr2cause} tests whether LLMs can \emph{infer} causation from correlation, they cannot, robustly,
unless the correlation structure is memorized. Chi et al.~\cite{chen2024unveiling} introduce the CausalProbe-2024 benchmark suite and corroborate the
negative results. Yang et al.~\cite{yang2024critical} critically review the benchmark landscape and argue that many tests can be solved via
domain-knowledge retrieval rather than interventional reasoning. Wu et al.'s survey~\cite{wu2024causality_llm} catalogues the whole area; Wan et
al.~\cite{wan2025llmcd_survey} survey LLM-for-causal-discovery specifically. Yu et al.'s CausalEval~\cite{yu2025causaleval} decomposes causal
reasoning into sub-skills. Cui et al.\ document defeasibility gaps~\cite{cui2024defeasibility} (LLMs struggle to revise causal conclusions given new
evidence) and formalize a three-way taxonomy~\cite{cui2025uncertainty} of causal uncertainty (aleatoric, epistemic, ontological) that maps cleanly
onto our epistemic-regret axis. Plecko et al.~\cite{plecko2025epidemiology} benchmark \emph{observational distribution} knowledge at scale. On the
constructive side, Ban et al.~\cite{ban2025harmonized} and Du et al.~\cite{du2025llmcd} demonstrate that LLM priors can be harmonized with statistical
structure learning to improve causal discovery. Trivium is orthogonal to all of these: we treat the LLM dispatch policy as a black box and place the
corrective signal in a substrate-external persistent causal model that is revised through \emph{logged interventions} rather than through prompt edits
or fine-tuning. Kambhampati et al.'s LLM-Modulo view~\cite{kambhampati2024llmplan} names the pattern (LLMs as hypothesis generators inside
verification frameworks); Reiter's action languages~\cite{reiter2001knowledge} supply the formal bridge from plan execution to interventional
semantics.

\paragraph{World models with latent causal structure.}
Recent JEPA-style world models also move toward causal structure in latent dynamics: Causal-JEPA~\cite{nam2026causaljepa} uses object-level masking as
a latent intervention, forcing object states to be inferred from other objects and improving counterfactual visual reasoning. This direction is
complementary to Trivium: such models learn interaction-aware world dynamics, while temporal regret asks \emph{when} accumulated causal evidence
should revise an agent's future policy.

\paragraph{Shortcut learning, spurious correlation, sycophancy (extended).}
The ``right for the wrong reasons'' phenomenon~\cite{mccoy2019right,niven2019probing} shows neural models exploit shallow statistical regularities.
Shah et al.~\cite{shah2020pitfalls} identify simplicity bias as the mechanism; D'Amour et al.~\cite{damour2022underspecification} show that standard
pipelines produce underspecified models whose credibility breaks under shift; Geirhos et al.~\cite{geirhos2020shortcut} give the unifying empirical
picture. Bombari and Mondelli~\cite{bombari2025spurious} characterize the roles of regularization, simplicity bias, and over-parameterization in the
high-dimensional regression case. Sycophancy~\cite{sharma2024sycophancy}, unfaithful chain-of-thought~\cite{turpin2023language}, and the reversal
curse~\cite{berglund2023reversal} extend the pattern to LLM-specific failure modes; the sycophancy failure propagates to high-stakes domains like
theorem proving~\cite{petrov2025brokenmath}. Trivium's architectural response is the separation between the dispatch policy (which may be susceptible
to any of these) and the substrate-external causal model (which is revised only by logged interventional evidence); this design choice is motivated by
exactly the failure modes catalogued above.

\paragraph{Long-horizon agentic LLM systems and planning benchmarks.}
Recent agentic-LLM work has produced increasingly capable dispatch policies without addressing the temporal-regret problem. Trivium's contribution is
orthogonal: the dispatch policy is a black box, and a scheduler above it spends intervention budget on epistemic identification. The contribution is
compatible with any dispatch policy, including ones trained via RL. The multi-agent planning benchmark (KDD 2026) is the evaluation harness; it was
designed around three gaps in prior LLM-planning benchmarks (limited dynamic complexity, missing transaction properties, short-lived episodes) that
map onto our three theorem axes. Prior planning benchmarks, GAIA, AgentBench, ALFWorld, TravelPlanner, do not expose the intervention channel or the
commit-failure semantics our theorems require, and so are not a natural test-bed for the three-regret functional.

\paragraph{Concurrent within-episode work and contribution split.}
A concurrent anonymized submission studies the complementary within-episode problem: how an exposed reasoning trace can be critiqued for causal
failure before an episode ends. Trivium addresses the orthogonal cross-episode problem: how a long-horizon scheduler with a persistent causal log
identifies confounder structure across an episode stream, and what total-probe lower bound matches that rate in horizon dependence while exposing
structural factors. The two contributions are designed to be read independently: the concurrent work concerns \emph{why} within an episode, while
Trivium concerns \emph{when} across an episode stream.

\subsection{Causal Bandits}
\label{app:rw-bandits}

Expands the §2 paragraph on causal bandits.

\paragraph{Causal bandits (extended).}
Single-episode best-arm identification under causal structure is developed by Lattimore, Lattimore, and Reid~\cite{lattimore2016causal}, Sen et
al.~\cite{sen2017identifying}, and Yabe et al.~\cite{yabe2018causal}. Recent refinements push toward realistic structural assumptions: Lu et
al.~\cite{lu2020causal} give regret analysis with causal background knowledge; Var{\i}c{\i} et al.~\cite{varici2023causal} handle linear structural
equation models; Yan and Tajer~\cite{yan2024linear} treat unknown-graph + soft-intervention causal bandits; Elahi et al.~\cite{elahi2024partial} show
that \emph{partial} structure discovery already suffices for no-regret learning. Across this line the causal graph is either assumed known or
recovered within a single episode. Trivium generalizes to streams of episodes sharing a persistent CTL, states a total causal-probe lower bound
adapted from Audibert--Bubeck-style identification hardness~\cite{audibert2010best} (Proposition~\ref{thm:ce-lower}), order-matching the logarithmic
dependence on $E$ up to structural fan-out factors, and adds the first drift-robust extension whose rate degrades gracefully in the number of
change-points. The regret--separation question (outcome vs.\ temporal) is orthogonal to this literature.

\subsection{Identification, Change-Point Detection, and the Persistent Substrate}
\label{app:rw-id-cp}

Expands the §2 paragraph on identification under confounding and change-point detection. Adds the persistent-substrate assumptions (transactional memory, belief revision) that §2 names but does not survey: these underwrite the CTL guarantees the regret bounds rely on.

\paragraph{Identification under interventions (extended).}
Pearl's do-calculus~\cite{pearl2009causality} and the PC/FCI family~\cite{spirtes2000causation} characterize recoverable interventional distributions
in the observational limit. Tian and Pearl~\cite{tian2002general} handle the general identification condition; Bareinboim and
Pearl~\cite{bareinboim2016causal} develop the data-fusion problem; the Causal Hierarchy Theorem~\cite{bareinboim2022fusion} formalizes why $L_1$ data
cannot determine $L_2$ distributions under confounding. Eberhardt and Scheines~\cite{eberhardt2007interventions} analyze which interventions suffice.
Malinsky and Spirtes~\cite{malinsky2018causal} handle time-series with unmeasured confounders. Sch{\"o}lkopf et al.'s
program~\cite{scholkopf2021toward} reframes these questions as causal representation learning. Our contribution is the \emph{rate} at which
identification can be achieved by an online learner with a persistent CTL, not whether identification is possible in the limit.

\paragraph{Change-point detection and drift (extended).}
Page's CUSUM~\cite{page1954continuous}, Lorden's asymptotic analysis~\cite{lorden1971procedures}, and Lai's non-asymptotic
treatment~\cite{lai2001sequential} motivate the conditional detector analysis. Theorem~\ref{thm:drift} assumes a CUSUM statistic on CE-EIG with a
stated separation. The executed topology detector has poor recall on the headline changes. A-VAR instead uses a CUSUM-style statistic on squared
residuals and demonstrates detector complementarity on variance inflation. These are different statistics and are not treated as interchangeable
validations of the theorem.

\paragraph{Transactional memory, causal logs, and belief revision (extended).}
Atomic commit, persistence, and snapshot isolation are classical guarantees in database and distributed
systems~\cite{garcia1987sagas,fowler2005eventsourcing,ahamad1995causal}. Long-horizon credit assignment (hindsight experience
replay~\cite{andrychowicz2017hindsight}) gives the complementary memory-side story for outcome signals. The AGM formulation of belief
revision~\cite{agm1985,hansson1999} provides the logical skeleton we associate with contraction of refuted causal edges. The concentration and
high-dimensional-probability toolkits~\cite{boucheron2013concentration,vershynin2018high} underlie the proof machinery, and the prequential view of
calibration~\cite{dawid1984prequential} supplies the empirical-risk reading of temporal regret as a calibration object. These are treated here as
\emph{assumptions} the CTL satisfies; the regret cost of their violation is quantified through the commit-failure rate $p_f$ of
Theorem~\ref{thm:grounding}. The transactional substrate realizes these properties for LLM-agent dispatch with a reported $p_f \approx 0.7\%$; what is
new here is the use of those assumptions in service of a \emph{regret} bound on cross-episode identification.




\section{Two-Layer Graph Learning: From \texorpdfstring{$G^{\inf}$}{Ginf} to \texorpdfstring{$\widehat G_e$}{Gehat}}
\label{app:dag}

This appendix fixes the notation and modelling convention used by the regret bounds.  It is not an additional identifiability theorem.  
The theorems in the main text are stated relative to a candidate
influence graph \(G^{\inf}\) that contains the true interventional
parents relevant to the task, the true local structure
\(G^{\mathrm{true}}_e\subseteq G^{\inf}\), and an episode-specific
committed working graph \(\widehat G_e\) that the scheduler uses for
planning after sufficient causal evidence has accumulated.

\paragraph{Layer 1: candidate influence graph $G^{\mathrm{inf}}$.}
$G^{\mathrm{inf}}$ is the finite set of candidate causal variables and directed edges that the scheduler is allowed to test.  It is the support of the
epistemic posterior, not a claim that every candidate edge is real.  We write $P_e(G^{\mathrm{inf}})$ for the posterior state at the start of episode
$e$.  In the simplest decomposable case used by the proofs, this posterior factorizes over candidate edges as Bernoulli edge-presence variables with
signs or local coefficients.  This factorization is an approximation to the full graph posterior; when edge hypotheses are coupled, the scheduler
maintains a block posterior over the coupled set and commits the block only after the corresponding joint uncertainty has fallen below the commit
threshold.  Thus independence is a modelling convenience for the tractable case, not a hidden assumption that all graphs decompose edgewise.

\paragraph{Layer 2: committed working graph \(\widehat G_e\).}
The committed working graph \(\widehat G_e\) denotes the subgraph
currently committed for planning at episode \(e\), together with the
estimated local edge weights. An edge or block is \emph{promoted} from
\(G^{\mathrm{inf}}\) to \(\widehat G_e\) only after the commit check of
Algorithm~\ref{alg:commit}: posterior uncertainty is below threshold and the
estimated effect exceeds the minimum detectable effect (MDE), exactly the two
conditions of Definition~\ref{def:commit}. Collinearity is not a third
commit-time check; it is an identifiability precondition, tracked through the
separation coefficient below, whose failure voids the positive-information
assumption so that the commit rule never fires on the affected pair. Promotion does not remove
the edge from \(G^{\mathrm{inf}}\); the candidate layer remains the
support of the epistemic posterior. Before commitment, the dispatch
policy may still act, but it is charged temporal regret for planning
against an unresolved or miscalibrated model. After commitment, the
action-quality bounds condition on the event that \(\widehat G_e\) agrees
with \(G^{\mathrm{true}}_e\) on the subgraph induced by the task-relevant
variables of the episode.

\paragraph{Coupled edges and residualization.}
For non-decomposable substructures, committing one edge at a time can create false confidence because correlated interventions may explain the same
outcome variation.  We therefore use residualization only as a computational heuristic, not as a proof of global correctness.  The safe procedure is:
identify the block of highly coupled candidates, allocate joint interventions to that block, commit the block if the joint posterior is separated, and
only then optionally residualize committed effects before testing the remaining candidates.  Under the local-linear SCM and faithfulness assumptions
used in the main proofs, this is equivalent to the usual ordering intuition behind constraint-based discovery: remove an identified local effect
before testing weaker residual effects.  Outside this regime, block commitment is the conservative fallback.

\paragraph{Nonlinear relationships.}
When the relationship between a candidate variable $Z$ and the outcome $Y$ is nonlinear, the scheduler may use a local approximation, such as a
Laplace approximation or a local linearization around the current posterior mean.  The regret bounds then apply only on the local region where the
approximation has a nonzero effective effect size.  Formally, all occurrences of $\kappa_{\min}$ should be read as the minimum local interventional
separation after approximation error is subtracted.  If this effective separation falls below threshold, the scheduler does not commit the edge and
instead either requests additional interventions, expands the local model class, or treats the variable as unresolved.  Thus nonlinear handling
preserves the algorithmic interface, but not a global correctness guarantee without an explicit local-separation assumption.

\paragraph{Identifiability and collinearity diagnostics.}
When two candidate interventions are nearly collinear, no scheduler can distinguish them from the observed intervention responses at the stated budget.  We track an intervention--model separation coefficient
\[
  \kappa_{\mathrm{IM}}
  := \min_{i\neq j} \sin\angle(\nabla_i L,\nabla_j L),
\]
where $\nabla_i L$ is the score-gradient signature of intervention $i$.  The sine normalization makes the diagnostic scale-free and zero exactly when
the two intervention signatures are parallel.  If $\kappa_{\mathrm{IM}} < \kappa_{\mathrm{IM}}^{\min}$, the scheduler is not allowed to commit either
edge individually; it escalates to a joint-intervention design or records the block as unidentifiable at the current budget.  This condition is the
finite-sample counterpart of the usual identifiability requirement: the intervention design must separate the candidate models before temporal-regret
guarantees can be invoked.

\paragraph{Scope of the theory.}
All regret bounds in the paper are conditional on three graph-level requirements: (i) the true local interventional structure lies inside
$G^{\mathrm{inf}}$; (ii) active interventions generate a positive information rate for each unresolved informative edge or block; and (iii) the commit
rule is applied only when the posterior-uncertainty and effect-size checks of Definition~\ref{def:commit} pass; the collinearity condition is subsumed
by (ii), since near-collinear candidates admit no positive information rate.  If a true confounder is absent from $G^{\mathrm{inf}}$, if interventions
cannot distinguish two candidate structures, or if the local-linear approximation has zero effective separation, the theorems do not assert
identification.  In those cases Trivium should report unresolved epistemic regret rather than silently committing a graph.



%

\label{app:proofs}

\paragraph{Proof-unit convention.}
Throughout this appendix a \emph{causal probe} means one logged interventional measurement of an unresolved candidate edge or confounder. The
within-episode exact-classification theorem counts all $m$ candidate components, including nulls. The cross-episode upper bound counts the $m_0$
informative components for which it assumes positive per-probe information. Temporal exposure is obtained only after choosing an explicit
node-additive or calendar-time accounting convention.

\section{Within-Episode Proof (Theorem~\ref{thm:within})}
\label{app:within-proof}

We prove exact null-versus-separated classification over the $m=|G^{\mathrm{inf}}|$ candidate components. The informative count $m_0$ may be smaller than $m$; exact-structure language requires controlling both present and absent candidates.

\paragraph{Claim.}
Fix $\delta\in(0,1)$. Suppose every candidate component satisfies $\kappa_Z=0$ or $|\kappa_Z|\geq\kappa_{\min}$ and is probed until
\[
 n_Z^\star=\left\lceil\frac{8\sigma^2}{\kappa_{\min}^2}\log\frac{2mT^2}{\delta}\right\rceil.
\]
Classify $Z$ as present when $|\hat\kappa_Z|\geq\kappa_{\min}/2$ and absent otherwise. Then all $m$ candidate decisions are correct with probability at least $1-\delta/T^2$, and
\[
 N_{\rm probe}^{\rm within}\leq mn_Z^\star
 =\mathcal{O}\!\left(\frac{m\sigma^2}{\kappa_{\min}^2}\log\frac{mT}{\delta}\right).
\]

\paragraph{Step 1: one-candidate concentration.}
For any candidate $Z$, sub-Gaussian concentration gives
\[
\Pr\!\left[|\hat\kappa_Z(n)-\kappa_Z|>\frac{\kappa_{\min}}{2}\right]
\leq2\exp\!\left(-\frac{n\kappa_{\min}^2}{8\sigma^2}\right).
\]
At $n=n_Z^\star$ this probability is at most $\delta/(mT^2)$. If $\kappa_Z=0$, the event's complement implies $|\hat\kappa_Z|<\kappa_{\min}/2$, so the
null is rejected correctly. If $|\kappa_Z|\geq\kappa_{\min}$, the same event implies $|\hat\kappa_Z|\geq\kappa_{\min}/2$ with the correct sign class.
A union bound over all $m$ candidates gives simultaneous correctness.

\paragraph{Step 2: temporal accounting.}
Under node-additive charging,
\[
R_{\rm temp}^{\rm node}(T)\leq\Delta_{\max}N_{\rm probe}^{\rm within}
=\mathcal{O}\!\left(\frac{\Delta_{\max}m\sigma^2}{\kappa_{\min}^2}\log\frac{mT}{\delta}\right).
\]
Under conservative calendar-time charging, one probe can leave up to $m$ candidate decisions unresolved, so
\[
R_{\rm temp}^{\rm cal}(T)\leq\Delta_{\max}mN_{\rm probe}^{\rm within}
=\mathcal{O}\!\left(\frac{\Delta_{\max}m^2\sigma^2}{\kappa_{\min}^2}\log\frac{mT}{\delta}\right).
\]
On the complement event the cost is at most $\Delta_{\max}mT$; multiplied by $\delta/T^2$, its expected contribution is lower order. \hfill$\square$

\section{Cross-Episode Upper Bound (Theorem~\ref{thm:ce-upper})}
\label{app:ce-proof}

The cross-episode result is also most cleanly stated as a probe-complexity bound, then converted into temporal regret by the same accounting convention used in Appendix~\ref{app:within-proof}.

\subsection*{Proof-local assumptions}
The global assumptions A1--A4 of \S\ref{sec:theory} are inherited.  The proof uses the following local conditions.

\begin{description}
\setlength{\itemsep}{2pt}\setlength{\parskip}{0pt}
\item[P1 (Persistence).] The CTL is read-visible at the beginning of each episode and does not lose committed probe records.
\item[P2 (Stationarity within window).] The analysis window contains no change-point; drift is handled in Theorem~\ref{thm:drift}.
\item[P3 (Positive per-probe information).] While an informative node remains unresolved, the active probing rule can select a probe whose log-likelihood ratio separates the correct local graph from the closest competing local graph with information at least
\[
 I_{\min}:= K_0\,\frac{\kappa_{\min}^2}{2\sigma^2},
\]
where $K_0\in(0,1]$ is a scheduling/coverage factor.  If no such probe exists, the hypotheses are information-theoretically indistinguishable and no logarithmic identification claim is possible.
\item[P4 (Effective probe budget).] Let $B_{\rm eff}\ge 1$ be the number of valid causal probes that can be executed per episode after accounting for tool failures, invalid probes, and budget caps.  The main text normalizes to probe time, i.e. $B_{\rm eff}=1$ unless otherwise stated.
\end{description}

\paragraph{Claim.}
With probability at least $1-\delta$, all $m_0$ informative components are identified after
\[
 N_{\rm probe}^{\rm CE}
 =
 \mathcal{O}\!\left(
   \frac{m_0\sigma^2}{K_0\kappa_{\min}^2}
   \log\frac{m_0}{\delta}
 \right)
\]
causal probes.  Consequently, the number of pre-commit episodes is at most $\lceil N_{\rm probe}^{\rm CE}/B_{\rm eff}\rceil$.  Under conservative calendar-time temporal-regret charging and $\delta=1/E$,
\[
R_{\mathrm{temp}}^{\mathrm{CE}}(E)
=
\mathcal{O}\!\left(
 \frac{\Delta_{\max}m_0^2\sigma^2\log E}{B_{\rm eff}K_0\kappa_{\min}^2}
\right),
\]
which reduces to the main-text expression under the normalization $B_{\rm eff}=1$.

\paragraph{Step 1: evidence accumulates across episodes.}
By P1 and P2, probes performed in different episodes are retained in the CTL and can be used jointly.  The scheduler is therefore not solving $E$ independent identification problems; it is performing one sequential identification problem whose sample size grows across episodes.

\paragraph{Step 2: per-component sequential identification.}
For an unresolved informative component, P3 gives a per-probe information lower bound $I_{\min}$.  Standard sequential testing/concentration for log-likelihood ratios gives that
\[
 n^\star
 =
 \mathcal{O}\!\left(\frac{1}{I_{\min}}\log\frac{m_0}{\delta}\right)
 =
 \mathcal{O}\!\left(
   \frac{\sigma^2}{K_0\kappa_{\min}^2}
   \log\frac{m_0}{\delta}
 \right)
\]
valid probes suffice to commit one informative component with error probability at most $\delta/m_0$.  A union bound over the $m_0$ informative components yields the claim.

\paragraph{Step 3: convert probes to episodes.}
If $B_{\rm eff}$ valid probes are available per episode, the number of episodes during which the system can remain pre-commit is
\[
N_{\rm id}^{\rm CE}
\le
\left\lceil\frac{N_{\rm probe}^{\rm CE}}{B_{\rm eff}}\right\rceil .
\]

\paragraph{Step 4: temporal-regret accounting.}
Under the conservative convention, each pre-commit episode can leave up to $m_0$ informative components miscalibrated, with per-component cost bounded by $\Delta_{\max}$.  Thus
\[
R_{\mathrm{temp}}^{\mathrm{CE}}(E)
\le
\Delta_{\max}m_0 N_{\rm id}^{\rm CE}
=
\mathcal{O}\!\left(
 \frac{\Delta_{\max}m_0^2\sigma^2}{B_{\rm eff}K_0\kappa_{\min}^2}
 \log\frac{m_0}{\delta}
\right).
\]
Taking $\delta=1/E$ gives the logarithmic cross-episode rate.
\hfill$\square$

\subsection*{Interpreting the CE-EIG constant}
\label{app:k0-lemma}

The proof above does not require equating an entropy drop with a Gaussian KL identity in every posterior state.  It requires the operational condition
P3: while an informative component is unresolved, the active scheduler can obtain a probe with positive separation.  The following lemma gives a
sufficient condition for P3 in the local-Gaussian case.

\begin{lemma}[Sufficient local information rate]
\label{lem:k0-lower}
Suppose that, for an unresolved component $Z$, the active intervention selected by the scheduler produces outcomes whose local-Gaussian means under
the true and nearest competing graph differ by at least $\kappa_{\min}$, with shared variance proxy $\sigma^2$.  If the posterior mass on the two
competing local hypotheses is bounded away from zero until commitment, then P3 holds with $K_0$ equal to that posterior-mass/scheduling constant, and
per-probe information at least $K_0\kappa_{\min}^2/(2\sigma^2)$.
\end{lemma}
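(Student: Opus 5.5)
The plan is to compute the expected log-likelihood ratio contributed by one probe directly and then discount it by the posterior-mass factor. First fix an unresolved component $Z$ and the probe $\mathrm{do}(Z=z)$ chosen by the scheduler. Let $\mu_{\mathrm{true}}$ and $\mu_{\mathrm{alt}}$ be the local-Gaussian outcome means under the true and nearest competing local graphs, so $|\mu_{\mathrm{true}}-\mu_{\mathrm{alt}}|\ge\kappa_{\min}$ by hypothesis. Both hypotheses share the variance proxy $\sigma^2$. The Gaussian KL identity then gives
\[
D_{\mathrm{KL}}\bigl(\mathcal N(\mu_{\mathrm{true}},\sigma^2)\,\|\,\mathcal N(\mu_{\mathrm{alt}},\sigma^2)\bigr)=\frac{(\mu_{\mathrm{true}}-\mu_{\mathrm{alt}})^2}{2\sigma^2}\ge\frac{\kappa_{\min}^2}{2\sigma^2}.
\]
This is the expected log-likelihood-ratio increment per probe when the outcome is drawn from the true graph. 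Any other competitor is at least as far in mean, so the bound also holds against every competing local graph. For the sub-Gaussian (non-exactly-Gaussian) case, I would use the local-Gaussian approximation of A2 and read $\kappa_{\min}$ as the effective separation after approximation error, as the paper does in Appendix~\ref{app:dag}.

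Second, I would account for the fact that the scheduler does not always select the separating probe with full effect. The probe allocation is driven by posterior mass, for example through \textsc{IGScore}. While both competing hypotheses keep posterior mass at least some constant $K_0\in(0,1]$ until commitment, the probability that the active rule allocates the probe to the discriminating intervention is at least $K_0$. Equivalently, the expected information gain of the chosen probe, $\mathbb E[H(P)-H(P')]$, is at least $K_0$ times the KL increment.

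The cleanest version bounds the per-probe information by conditioning on the scheduling event:
\[
\mathbb E[\text{LLR increment}]\ \ge\ \Pr[\text{discriminating probe}]\cdot\frac{\kappa_{\min}^2}{2\sigma^2}\ \ge\ K_0\frac{\kappa_{\min}^2}{2\sigma^2}.
\]
Non-discriminating probes contribute nonnegative expected LLR, since a KL divergence is $\ge 0$. This yields exactly $I_{\min}$ in P3, with $K_0$ identified as the posterior-mass/scheduling constant.

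The main obstacle is making the link between posterior mass and $K_0$ rigorous. The scheduler's selection probability is not literally the posterior mass, and an entropy-based information gain differs from the LLR drift. I would sidestep this by defining $K_0$ operationally as a lower bound on the probability of selecting a separating probe given that both hypotheses retain mass $\ge K_0$. I would use the LLR drift rather than the entropy drop as the information measure, which is all that Step~2 of the proof of Theorem~\ref{thm:ce-upper} consumes. The bound holds only until commitment, which is the only regime in which P3 is invoked.
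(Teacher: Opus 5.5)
Your proposal is essentially the paper's proof: the shared-variance Gaussian KL identity gives $\kappa_{\min}^2/(2\sigma^2)$ for a separating probe, and this is discounted by a selection/coverage factor $K_0$ that, exactly as in the paper, is posited operationally rather than derived from the posterior-mass hypothesis. Your choice to measure information by the log-likelihood-ratio drift is the sounder reading, since that is what P3 and Step~2 of the proof of Theorem~\ref{thm:ce-upper} actually use, whereas the paper speaks of ``expected entropy reduction''; you should then delete your ``Equivalently'' sentence, because an expected entropy drop is bounded by the current entropy (at most $\log 2$ for two local hypotheses) and so cannot dominate $K_0\kappa_{\min}^2/(2\sigma^2)$ when $\kappa_{\min}/\sigma$ is large.
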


\begin{proof}
For two Gaussian local hypotheses with shared variance proxy $\sigma^2$ and mean separation at least $\kappa_{\min}$,
\[
\mathrm{KL}\!\left(\mathcal N(\mu_1,\sigma^2)\middle\|\mathcal N(\mu_0,\sigma^2)\right)
=
\frac{(\mu_1-\mu_0)^2}{2\sigma^2}
\ge
\frac{\kappa_{\min}^2}{2\sigma^2}.
\]
Expected entropy reduction is the posterior-weighted information gained by the selected probe.  If the competing hypotheses retain at least a fixed
posterior mass before commitment and the scheduler selects a separating probe with probability/coverage factor $K_0$, the expected per-probe
information is at least $K_0\kappa_{\min}^2/(2\sigma^2)$.
\end{proof}

\paragraph{Quantitative instance: uniform Beta--Bernoulli prior with maximum-entropy probing.}
The constant $K_0$ in Lemma~\ref{lem:k0-lower} is not free: it can be bounded structurally for natural prior--probing pairs.  Consider a uniform
Bernoulli prior over the presence of each candidate edge in $G^{\mathrm{inf}}$, Beta--Bernoulli updates after each probe, and a probing rule that
selects the maximum-entropy unresolved edge.  Until commitment, every unresolved candidate retains posterior mass at least $1/(2|G^{\mathrm{inf}}|)$
on each of the two competing local hypotheses (presence vs.\ absence), because the scheduler cannot reduce one hypothesis below this floor without
violating the commit threshold.  The maximum-entropy rule then selects a separating probe with probability at least $1/|G^{\mathrm{inf}}|$ at each
step, giving $K_0 \ge 1/(2|G^{\mathrm{inf}}|)$ and per-probe information at least $\kappa_{\min}^2/(4|G^{\mathrm{inf}}|\sigma^2)$.  Substituting into
the cross-episode bound (Theorem~\ref{thm:ce-upper}) yields the instance-level rate
\[
N_{\rm probe}^{\rm CE}
\;=\;
\mathcal{O}\!\left(\frac{|G^{\mathrm{inf}}|\, m_0 \sigma^2}{\kappa_{\min}^2} \log \frac{m_0}{\delta}\right).
\]
Thus $K_0$ is no longer a free parameter: in this prior--probing class it scales as $\Omega(1/|G^{\mathrm{inf}}|)$, producing the displayed $|G^{\mathrm{inf}}|$ factor in the probe-complexity rate.

\noindent\textbf{Consequence.}  Substituting the sufficient local information rate into the cross-episode bound yields the instance-explicit conservative rate
\[
R_{\mathrm{temp}}^{\mathrm{CE}}(E)
=
\mathcal{O}\!\left(
\frac{\Delta_{\max}m_0^2\sigma^2\log E}{B_{\rm eff}K_0\kappa_{\min}^2}
\right),
\]
where $K_0$ is the dimensionless coverage / posterior-mass constant of Lemma~\ref{lem:k0-lower}.  If $K_0 \to 0$, the logarithmic guarantee correctly fails rather than hiding an unverifiable rate constant.

\section{Physical Grounding (Theorem~\ref{thm:grounding})}
\label{app:grounding-proof}

\paragraph{Setup.}
Let $P_Y$ be the deployed outcome distribution and $P(Y\mid\mathrm{do})$ the idealized interventional outcome distribution.  Assume a coupling in
which the deployed outcome can be written as a Lipschitz transformation of the ideal intervention plus actuator error, observation error, and
commit-failure error:
\[
Y = g(Y^{\rm do},\xi_{\rm act},\xi_{\rm obs},\xi_{\rm cmt}),
\]
where $g$ is $L$-Lipschitz in the error channels, $\mathbb{E}|\xi_{\rm act}|\le\varepsilon$, $\mathbb{E}|\xi_{\rm obs}|\le\eta$, and $\xi_{\rm cmt}$ is zero on successful atomic commits and has magnitude at most $\rho$ on commit failures of probability $p_f$.  Define $\varepsilon_c:=L\rho_c p_f$.

\paragraph{Proof.}
By Kantorovich--Rubinstein duality,
\[
W_1(P_Y,P(Y\mid\mathrm{do}))
=
\sup_{\mathrm{Lip}(f)\le 1}
\left|\mathbb{E}f(Y)-\mathbb{E}f(Y^{\rm do})\right|.
\]
For any 1-Lipschitz $f$, the assumed coupling and the Lipschitz property of $g$ imply
\[
\left|\mathbb{E}f(Y)-\mathbb{E}f(Y^{\rm do})\right|
\le
L\,\mathbb{E}\big[|\xi_{\rm act}|+|\xi_{\rm obs}|+|\xi_{\rm cmt}|\big]
\le
L(\varepsilon+\eta)+L\rho_c p_f.
\]
Taking the supremum over all 1-Lipschitz $f$ gives
\[
W_1(P_Y,P(Y\mid\mathrm{do}))
\le
L(\varepsilon+\eta)+\varepsilon_c.
\]
\hfill$\square$

\paragraph{Remark on atomicity.}
The bounded commit-failure term uses atomicity: a failed commit is treated as a rare, bounded perturbation rather than as an untracked
half-intervention whose posterior update may be lost.  Without this substrate guarantee, $p_f$ alone is insufficient; one would also need a tail bound
on the magnitude of the unlogged error.

\section{Drift-Robust Proof (Theorem~\ref{thm:drift})}
\label{app:drift-proof}

We prove the drift statement by applying the corrected cross-episode probe bound segment by segment, and adding an explicit detection-delay term.

\subsection*{Additional drift assumptions}
Let the $K$ change-points partition the stream into $K+1$ stationary segments.  Assume: (i) the post-change signal is CUSUM-detectable with
information $I_{\rm cp}>0$; (ii) the CUSUM threshold is $h=c\log E$, giving false-alarm probability $\mathcal{O}(1/E)$; and (iii) after a detection,
\textsc{PartialReset} clears only drift-attributable components while preserving unaffected committed components.

\paragraph{Step 1: stationary segment cost.}
On a stationary segment, Appendix~\ref{app:ce-proof} gives a re-identification probe cost
\[
N_{\rm probe}^{(k)}
=
\mathcal{O}\!\left(
\frac{m_{0,k}\sigma^2}{K_0\kappa_{\min}^2}
\log\frac{m_{0,k}E}{\delta_k}
\right),
\]
where $m_{0,k}\le m_0$ is the number of informative components that are unresolved or reset in segment $k$.

\paragraph{Step 2: CUSUM delay.}
Standard CUSUM analysis gives expected detection delay
\[
D_{\rm det}
=
\mathcal{O}\!\left(\frac{h}{I_{\rm cp}}\right)
=
\mathcal{O}\!\left(\frac{\log E}{I_{\rm cp}}\right).
\]
During this delay the system may act on a stale graph, incurring at most $\Delta_{\max}m_0D_{\rm det}$ conservative temporal cost per change.

\paragraph{Step 3: sum over segments.}
Taking $\delta_k=\delta/(K+1)$ and summing over $K+1$ segments yields
\[
R_{\rm temp}^{\rm CE}(E)
=
\mathcal{O}\!\left(
\frac{(K+1)\Delta_{\max}m_0^2\sigma^2\log(E(K+1)/\delta)}{B_{\rm eff}K_0\kappa_{\min}^2}
+
\frac{K\Delta_{\max}m_0\log E}{I_{\rm cp}}
\right).
\]
For fixed confidence and fixed detection separation $I_{\rm cp}$, this is the stated $\mathcal{O}(K\log E)$ drift rate, up to the same conservative $m_0^2$ accounting as Theorem~\ref{thm:ce-upper}.
\hfill$\square$

\paragraph{Remark: doubling and partial reset.}
The doubling-window mechanism is not needed to make the logarithm appear; it prevents repeated over-spending after resets.  The rate above is valid
under full reset with $m_{0,k}=m_0$.  Partial reset improves constants by replacing $m_0$ with the number of drift-attributable components in each
segment.

\section{Constraint-Aware Dispatch Coupling (Theorem~\ref{thm:dispatch-coupling})}
\label{app:dispatch-coupling-proof}

This theorem is a coupling result: it does not prove a new bandit algorithm from first principles.  It assumes that, once the correct committed graph is available, the dispatch head has an episode-level oracle-graph regret rate $\widetilde{\mathcal{O}}(\sqrt E)$.

\paragraph{Assumptions.}
(i) On episodes where \(\widehat G_e\) agrees with
\(G^{\mathrm{true}}_e\) on the task-relevant causal structure, the dispatch
head has cumulative episode-level outcome regret
\(\widetilde{\mathcal{O}}(\sqrt E)\). (ii) On pre-commit episodes, the epistemic
gate enforces an excess outcome-regret cost at most \(c_\pi\rho\) per
episode for a policy-sensitivity constant \(c_\pi\). (iii) The number of
pre-commit episodes is the \(N^{\mathrm{CE}}_{\mathrm{id}}\) of
Appendix~\ref{app:ce-proof}.

\paragraph{Proof.}
Decompose the episode stream into committed and pre-commit episodes:
\[
R^{\mathrm{out}}_E
=
\sum_{e:\widehat G_e \equiv G^{\mathrm{true}}_e}
R^{\mathrm{out}}_e
+
\sum_{e:\widehat G_e \not\equiv G^{\mathrm{true}}_e}
R^{\mathrm{out}}_e .
\]
The first term is $\widetilde{\mathcal{O}}(\sqrt E)$ by assumption (i).  The second is at most $c_\pi\rho N_{\rm id}^{\rm CE}$ by assumption (ii).  By Appendix~\ref{app:ce-proof}, with $\delta=1/E$,
\[
N_{\rm id}^{\rm CE}
=
\mathcal{O}\!\left(
\frac{m_0\sigma^2\log E}{B_{\rm eff}K_0\kappa_{\min}^2}
\right).
\]
Thus the explicit form is
\[
R_E^{\rm out}
=
\widetilde{\mathcal{O}}\!\left(
\sqrt E
+
\rho\frac{m_0\sigma^2\log E}{B_{\rm eff}K_0\kappa_{\min}^2}
\right).
\]
When $m_0,\sigma^2,K_0^{-1},\kappa_{\min}^{-2}$ and $B_{\rm eff}^{-1}$ are treated as instance constants, this is the main-text shorthand
\[
R_E^{\rm out}=\widetilde{\mathcal{O}}(\sqrt E+\rho\log E).
\]
\hfill$\square$

\paragraph{Concrete Lipschitz constants for the dispatch policy.}
The policy-sensitivity constant $c_\pi$ in assumption (ii) is finite for two standard dispatch rules used in practice, which makes the coupling result directly applicable rather than abstract.

\emph{Softmax dispatch.} For a softmax policy $\pi(a\mid s) \propto \exp(Q(s,a)/\tau)$ over $K$ committed actions with temperature $\tau > 0$, the
policy is $L_\pi$-Lipschitz in the posterior under TV distance with $L_\pi \le K/\tau$ (standard exponential-family Lipschitz constant; see, e.g., the
Lipschitz-bandit literature).  Substituting into assumption (ii) gives $c_\pi \le K/\tau$.

\emph{$\varepsilon$-greedy dispatch.} For an \(\varepsilon\)-greedy policy with exploration probability \(\varepsilon\in(0,1)\), the worst-case
per-episode outcome regret when planning against a miscalibrated committed graph \(\widehat G_e\) is bounded by \(1/\varepsilon\) times the per-step
gap, so \(c_\pi\le 1/\varepsilon\).

In both cases $c_\pi$ is finite as long as the policy keeps a strictly positive minimum action probability ($\tau > 0$ or $\varepsilon > 0$), which is
the regime the gated dispatch operates in: the gate fails fast on $\rho \to 0$ rather than collapsing the action distribution.  This converts the
abstract Lipschitz hypothesis into a structural check on the dispatch rule.

\section{Composed Transfer to Embodied LLM Deployment}
\label{app:composed-transfer}

The cross-episode identification rate (Theorem~\ref{thm:ce-upper}), dispatch coupling (Theorem~\ref{thm:dispatch-coupling}), and physical-grounding bound (Theorem~\ref{thm:grounding}) compose as follows.

\begin{corollary}[End-to-end deployment slack]
\label{cor:e2e-deploy}
Under the assumptions of Theorems~\ref{thm:ce-upper}, \ref{thm:dispatch-coupling}, and~\ref{thm:grounding}, an embodied Trivium-LLM stack satisfies, in expectation,
\begin{align*}
\mathbb{E}\!\left[R_{\rm temp}^{\rm CE}(E)\right]
&\le
c_1\frac{m_0^2\sigma^2\log E}{B_{\rm eff}K_0\kappa_{\min}^2},\\
\mathbb{E}\!\left[R_{\rm deploy}^{\rm outcome}(E)\right]
&\le
\widetilde{\mathcal{O}}\!\left(
\sqrt E+\rho\frac{m_0\sigma^2\log E}{B_{\rm eff}K_0\kappa_{\min}^2}
\right)
+ c_2 L(\varepsilon+\eta+\rho_c p_f)E.
\end{align*}
\end{corollary}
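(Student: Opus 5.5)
The corollary follows by composing three earlier results, one line at a time, with no new estimates. For the first line I would take the high-probability bound from the proof of Theorem~\ref{thm:ce-upper} (Appendix~\ref{app:ce-proof}) and turn it into an expectation.

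On the good event $\mathcal{G}$, which has probability at least $1-\delta$ with $\delta=1/E$, the conservative calendar-time accounting of Step~4 gives
\[
R_{\rm temp}^{\rm CE}(E)\le \Delta_{\max}m_0\lceil N_{\rm probe}^{\rm CE}/B_{\rm eff}\rceil .
\]
On the complement, temporal regret is trivially at most $\Delta_{\max}m_0E$. This contributes at most $\Delta_{\max}m_0E\cdot(1/E)=\Delta_{\max}m_0$ to the expectation, a lower-order constant. Absorbing $\Delta_{\max}$, the ceiling, and this additive term into $c_1$ (using $\log E\ge 1$ for $E\ge 3$, and $\log(m_0E)\lesssim \log E$ for fixed $m_0$) gives the first display.

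For the second line I would split the deployed outcome regret episode by episode. I write $Y$ for the deployed, embodied outcome and $Y^{\rm do}$ for the idealized interventional outcome under the same policy, with $\mathbb{E}[Y^{\pi^\star}-Y^{\pi}]=\mathbb{E}[Y^{\pi^\star}-Y^{\rm do,\pi}]+\mathbb{E}[Y^{\rm do,\pi}-Y^{\pi}]$. Summing the first piece over episodes gives exactly the idealized outcome regret $R_E^{\rm out}$. The explicit form proved in Appendix~\ref{app:dispatch-coupling-proof} bounds it by $\widetilde{\mathcal{O}}(\sqrt E+\rho\, m_0\sigma^2\log E/(B_{\rm eff}K_0\kappa_{\min}^2))$, where $N_{\rm id}^{\rm CE}$ is supplied by the same event $\mathcal{G}$. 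The failure event again costs only $\mathcal{O}(1)$ in expectation, which is absorbed into the $\widetilde{\mathcal{O}}$.

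For the second piece I would use Theorem~\ref{thm:grounding} episode-wise. Provided the outcome (or reward) functional is $1$-Lipschitz after normalization, Kantorovich--Rubinstein duality gives $|\mathbb{E}Y^{\rm do}-\mathbb{E}Y|\le W_1(P_Y,P(Y\mid\mathrm{do}))\le L(\varepsilon+\eta)+L\rho_cp_f$. Summing over $E$ episodes, and putting the per-episode outcome scale (e.g.\ $T$ steps, or the Lipschitz norm of the reward) into $c_2$, yields $c_2L(\varepsilon+\eta+\rho_cp_f)E$. Adding the two pieces gives the second display.

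The main obstacle is justifying the grounding step uniformly across episodes. Theorem~\ref{thm:grounding} is a per-intervention distributional statement, but the policy is adaptive: it depends on past corrupted observations through the CTL. I would therefore apply the bound conditionally on the history $\mathcal{F}_{e-1}$ at each episode. The hypotheses ($\varepsilon,\eta,p_f,\rho_c$ bounds) are stated uniformly, so the conditional bound holds, and the tower rule then sums it. A second subtlety is that commit failures could corrupt the logged evidence itself and so inflate $N_{\rm id}^{\rm CE}$. Here I would invoke atomicity (A3 and the remark after the grounding proof) so that failed commits are simply non-logged probes. Their only effect is to lower $B_{\rm eff}$, which is already accounted for in P4.
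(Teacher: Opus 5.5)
Your proposal follows the paper's proof. The first line is Theorem~\ref{thm:ce-upper} under calendar-time accounting with $\delta=1/E$. The second line is the paper's triangle-inequality split: the idealized outcome regret, bounded by Theorem~\ref{thm:dispatch-coupling} combined with Corollary~\ref{cor:budget}, plus the $W_1$ bound of Theorem~\ref{thm:grounding} lifted through a Lipschitz reward map and summed over $E$ episodes. Your explicit conversion of the failure event into an expectation, the conditioning on $\mathcal{F}_{e-1}$ to handle adaptivity, and the atomicity remark are extra bookkeeping. The paper's proof simply absorbs these into $c_1$, $c_2$ and the $\widetilde{\mathcal{O}}$, so they do not constitute a different route.
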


\begin{proof}
We prove the two inequalities by composing the three ingredient theorems explicitly.

\emph{First inequality (identification component).}
Theorem~\ref{thm:ce-upper} (Appendix~\ref{app:ce-proof}) gives, under conservative calendar-time accounting and $\delta = 1/E$,
\[
\mathbb{E}\!\left[R_{\rm temp}^{\rm CE}(E)\right]
\;\le\;
c_1 \frac{m_0^2 \sigma^2 \log E}{B_{\rm eff} K_0 \kappa_{\min}^2},
\]
where $c_1$ absorbs the $\Delta_{\max}$ and union-bound logarithmic factors. This is the first line.

\emph{Second inequality, identification-to-action term.}

Second inequality, identification-to-action term. Theorem~\ref{thm:dispatch-coupling} gives \(R^{\mathrm{out}}_E \le R_{\mathrm{oracle}}(E)+c_\pi\rho
N_{\mathrm{pre}}\), with \(R_{\mathrm{oracle}}(E)=\widetilde{O}(\sqrt E)\) on episodes planning against a committed graph \(\widehat G_e\) that agrees
with \(G^{\mathrm{true}}_e\) on the task-relevant causal structure. Corollary~\ref{cor:budget} converts the cross-episode probe complexity into
pre-commit episode count \(N_{\mathrm{pre}} \le \lceil N^{\mathrm{CE}}_{\mathrm{probe}}/B_{\mathrm{eff}}\rceil = O(m_0\sigma^2\log
E/(B_{\mathrm{eff}}K_0\kappa_{\min}^2))\). Combining, \[ \mathbb{E}\!\left[R^{\mathrm{out}}_E\right] \le \widetilde{O}\!\left( \sqrt E+\rho
\frac{m_0\sigma^2\log E} {B_{\mathrm{eff}}K_0\kappa_{\min}^2} \right). \]

\emph{Second inequality, physical-deployment term.}
Theorem~\ref{thm:grounding} (Appendix~\ref{app:grounding-proof}) bounds the per-episode Wasserstein gap between the deployed and idealized outcome
distributions: $W_1(P_Y, P(Y\mid \mathrm{do})) \le L(\varepsilon + \eta) + L\rho_c p_f$. Under an $L'$-Lipschitz reward map, this lifts to a
per-episode shift in expected outcome of at most $L' \cdot W_1 \le L'(L(\varepsilon + \eta) + L\rho_c p_f)$. Summing over $E$ episodes and absorbing
$L'$ into the constant $c_2$ gives the additive term $c_2 L(\varepsilon + \eta + \rho_c p_f) E$.

\emph{Combination.}
The triangle inequality on outcome regret (identification gap plus deployment-noise gap) yields the second line of the corollary.
\end{proof}

\paragraph{Interpretation.}
The identification component remains logarithmic.  Deployment-time outcome regret need not be sublinear unless actuator noise, observation noise, and
commit-failure slack vanish with $E$; otherwise the $W_1$ slack contributes a controlled but linear term.  This is a feature of the statement, not a
defect: it separates epistemic identification rate from physical deployment noise.

\section{LRCP Convergence Proof (Proposition~\ref{prop:lrcp-conv})}
\label{app:lrcp-proof}

The previous version tried to derive contraction directly from a generic graph spectral gap.  That implication is too strong without specifying the repair operator.  We state the sufficient condition actually needed by LRCP.

\paragraph{Local contraction certificate.}
Let $\epsilon_k$ denote the violation residual after the $k$th LRCP repair.  Assume that, inside the radius-$r$ repair neighborhood selected by LRCP, the repair operator satisfies
\[
\mathbb{E}[\epsilon_{k+1}\mid\epsilon_k]
\le
\beta_r\epsilon_k+\gamma_r,
\qquad
0<\beta_r<1,
\]
where $\gamma_r$ is the irreducible statistical/noise floor.  In linear-SCM settings, a spectral gap plus bounded influence decay can be used as one way to certify such a $\beta_r$; the proposition relies on the certificate, not on spectral gap alone.

\paragraph{Proof.}
Iterating the recursion gives
\[
\mathbb{E}\epsilon_k
\le
\beta_r^k\epsilon_0+\gamma_r\sum_{j=0}^{k-1}\beta_r^j
\le
\beta_r^k\epsilon_0+\frac{\gamma_r}{1-\beta_r}.
\]
Therefore LRCP contracts geometrically to the noise floor $\gamma_r/(1-\beta_r)$.  If the target accuracy $1/T$ lies above this floor, i.e.
\[
\frac{\gamma_r}{1-\beta_r}\le \frac{1}{2T},
\]
then $\mathbb{E}\epsilon_k\le 1/T$ after
\[
 k
 \ge
 \frac{\log(2T\epsilon_0)}{\log(1/\beta_r)}
 =
 \mathcal{O}(\log T)
\]
iterations.
\hfill$\square$

\paragraph{Remark.}
The posterior trace in the main text has an approximately geometric descriptive fit, but the LRCP-disabled ablation leaves that trace unchanged. It
therefore does not estimate $\beta_r$ for the LRCP operator and does not corroborate the proposition's local contraction certificate. The proposition
remains a conditional result awaiting an experiment that logs its own repair residual $\epsilon_k$.

\section{LRCP Cross-Domain Transfer (Corollary~\ref{cor:lrcp-transfer})}
\label{app:lrcp-transfer-proof}

\begin{corollary}[LRCP cross-domain transfer]
\label{cor:lrcp-transfer}
Let a source-domain committed graph initialize a target-domain stream. If the projection leaves $m_{\mathrm{resid}}$ informative target confounders
unresolved, then the target-domain adaptation cost is the cross-episode probe complexity of Theorem~\ref{thm:ce-upper} with $m_0$ replaced by
$m_{\mathrm{resid}}$, plus the within-episode LRCP repair cost. A finite projection divergence $D_{\Pi}$ is useful only insofar as it upper-bounds
$m_{\mathrm{resid}}$ under a stated separation condition.
\end{corollary}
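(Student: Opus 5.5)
The argument is a two-phase decomposition that reuses Theorem~\ref{thm:ce-upper} on the target stream. The source-domain committed graph $\widehat G^{\mathrm{src}}$ is projected through $\Pi$ and initializes the target posterior. It also seeds the target CTL's committed layer. Let $\mathcal{U}\subseteq G^{\mathrm{inf}}$ be the $m_{\mathrm{resid}}$ informative target confounders that the projection leaves unresolved. The remaining informative components are, by hypothesis, committed correctly at initialization. The target cost then splits into two parts: (a) cross-episode identification of the components in $\mathcal{U}$, and (b) within-episode LRCP repair of constraint violations that the inherited committed graph produces. The claimed bound is the sum of these two costs.

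\textbf{Phase (a).} I would check that the proof-local assumptions P1--P4 of Appendix~\ref{app:ce-proof} hold on the target stream when restricted to $\mathcal{U}$. P1 and P2 come from the target CTL and a stationary target window. P3 is assumed per unresolved informative component, so it applies to each $Z\in\mathcal{U}$ unchanged. The components already resolved by the projection take no probes: the probing rule acts only on unresolved components, and the persistent log keeps their commitments. Steps 2--3 of the proof of Theorem~\ref{thm:ce-upper} then go through with the union bound taken over $|\mathcal{U}|=m_{\mathrm{resid}}$ rather than $m_0$. This gives $N^{\mathrm{CE}}_{\mathrm{probe}}=\mathcal{O}\bigl(m_{\mathrm{resid}}\sigma^2/(K_0\kappa_{\min}^2)\cdot\log(m_{\mathrm{resid}}/\delta)\bigr)$. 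Corollary~\ref{cor:budget} converts this into pre-commit episodes.

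\textbf{Phase (b).} For LRCP repair I invoke Proposition~\ref{prop:lrcp-conv}. Under its local contraction certificate, each repair episode reaches residual $1/T$ in $\mathcal{O}(\log T)$ iterations. So the within-episode term is $\mathcal{O}(\log T)$ per affected episode, and it adds to phase (a).

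\textbf{Role of $D_\Pi$.} The last clause about the projection divergence is a conditional remark. I would prove only the implication it states: if a separation condition ensures that every target component whose local effect changes by at least $\kappa_{\min}$ contributes at least a fixed amount $c_0$ to $D_\Pi$, then $m_{\mathrm{resid}}\le D_\Pi/c_0$. The proof is a counting argument that sums the per-component contributions.

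\textbf{Main obstacle.} The hard step is justifying that the inherited commitments outside $\mathcal{U}$ are actually correct. A wrong inherited commitment would be invisible to the probing rule, and the residual count would then be underestimated. I would handle this by defining $m_{\mathrm{resid}}$ to include every informative component that the projection does not commit correctly, whether it is left unresolved or committed wrongly. Under the executed entropy-gated snapshot, continued posterior updating reopens such components, so they fall under P3 and the same bound applies.
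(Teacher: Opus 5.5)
Your phases (a) and (b) and your reading of the $D_\Pi$ clause follow the paper's proof. The paper splits the target problem into projected components whose source commitments remain valid and residual components. It applies Theorem~\ref{thm:ce-upper} to the residual problem with $m_0$ replaced by $m_{\mathrm{resid}}$; it states the result as calendar-time temporal regret with an $m_{\mathrm{resid}}^2$ factor, where you state the probe count and convert through Corollary~\ref{cor:budget}. It then adds the $\mathcal{O}(\log T)$ repair cost of Proposition~\ref{prop:lrcp-conv}, and simply assumes a certificate $m_{\mathrm{resid}}\le C_\Pi D_\Pi/\kappa_{\min}^2$ on the projection map. Your counting argument is a concrete sufficient condition for that certificate, with $c_0=\kappa_{\min}^2/C_\Pi$. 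It needs two extra conditions: $D_\Pi$ must be additive over components, and every residual component must be one whose effect actually changed. A component the source never resolved, with unchanged effect, counts toward $m_{\mathrm{resid}}$ but contributes nothing to $D_\Pi$. For the LRCP term, the paper charges a single post-transfer repair and notes that a target changing every episode could cost $\mathcal{O}(E_{\mathrm{adapt}}\log T)$. Your ``per affected episode'' phrasing gives the corollary's additive form only under that stable-target reading, and Proposition~\ref{prop:lrcp-conv} also needs the floor condition $\gamma\le(1-\beta)/(2T)$.

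The step that does not go through is your resolution of the main obstacle. P3 is a hypothesis about \emph{unresolved} components, and in Algorithm~\ref{alg:trivium} \textsc{IGScore} ranges only over $G^{\mathrm{inf}}\setminus\widehat G_e$. A wrongly inherited commitment therefore gets no probes from the mechanism that Theorem~\ref{thm:ce-upper} analyzes. Under the frozen rule of Algorithm~\ref{alg:commit}, reopening it needs a mechanism outside that theorem, for example a CUSUM trigger with \textsc{PartialReset}. That mechanism brings its own detectability premise and the $\mathcal{O}(\log E/I_{\mathrm{cp}})$ delay of Theorem~\ref{thm:drift}. Invoking the executed entropy-gated snapshot does not fill this gap. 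The finding that continued updating recovers topology changes is an empirical fact about the executed code, which the paper deliberately keeps separate from the idealized contract of its formal results. It is also in tension with your phase (a), where the count stays at $m_{\mathrm{resid}}$ precisely because committed components are not probed. Even granting reopening, posterior updating starts from the projected initialization rather than the uniform $P_0$ of Algorithm~\ref{alg:trivium}. Overturning an inherited mass $1-\pi_0$ on the wrong hypothesis then costs an extra order $\log\bigl((1-\pi_0)/\pi_0\bigr)/I_{\min}$ probes, which the $\log(m_{\mathrm{resid}}/\delta)/I_{\min}$ count of Step~2 omits. The paper does not prove this obstacle away. It builds the validity of the non-residual commitments into the decomposition premise, and that is how the hypothesis ``the projection leaves $m_{\mathrm{resid}}$ informative confounders unresolved'' should be read. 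Drop the executed-snapshot argument and state that premise explicitly. Alternatively, to cover wrong inherited commitments, route them through the drift machinery and carry its detection-delay and prior-odds terms.
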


Let \(D_{\mathrm{src}}\) and \(D_{\mathrm{tgt}}\) have influence graphs \(G^{\mathrm{inf}}_{\mathrm{src}}\) and \(G^{\mathrm{inf}}_{\mathrm{tgt}}\). Let \(\widehat G_{\mathrm{src}}\) be the committed source-domain graph used to initialize the target stream.

\paragraph{Projection-residual decomposition.}
The target problem decomposes into (i) projected components whose source commitments remain valid, and (ii) residual components that must be identified in the target domain.  Let $m_{\rm res}$ be the number of residual informative components.

\paragraph{Residual identification.}
Applying Theorem~\ref{thm:ce-upper} to the residual target problem gives
\[
R_{\rm temp}^{\rm adapt}(E_{\rm adapt})
=
\mathcal{O}\!\left(
\frac{\Delta_{\max}m_{\rm res}^2\sigma^2\log E_{\rm adapt}}
{B_{\rm eff}K_0\kappa_{\min}^2}
\right)
\]
under conservative calendar-time charging.

\paragraph{Relating $m_{\rm res}$ to projection error.}
If a KL-projection certificate gives $m_{\rm res}\le C_\Pi D_\Pi/\kappa_{\min}^2$, then the previous display becomes a bound in terms of $D_\Pi$.  The certificate is an assumption about the projection map, not a generic consequence of Pinsker alone.

\paragraph{LRCP repair overhead.}
If only an initial repair is needed after transfer, add the one-time $\mathcal{O}(\log T)$ LRCP cost of Proposition~\ref{prop:lrcp-conv}.  If the
target domain changes every episode, the overhead can be as large as $\mathcal{O}(E_{\rm adapt}\log T)$.  The paper's transfer claim should be read in
the former, stable-target sense.
\hfill$\square$

\section{LRCP Random-Disruption Robustness (Corollary~\ref{cor:lrcp-robust})}
\label{app:lrcp-robust-proof}

\begin{corollary}[LRCP and CUSUM under random disruptions]
\label{cor:lrcp-robust}
Under a $K$-disruption stream, if each disruption is either directly visible to the LRCP residual signal or detectable by CUSUM with separation
$I_{\min} > 0$, then the post-disruption repair cost is $\mathcal{O}(K(\log T + \log E / I_{\min}))$ up to the disruption magnitude and per-episode
temporal-regret gap. The $\log T$ term is local repair; the $\log E / I_{\min}$ term is detection delay and vanishes when the LRCP residual itself
triggers the reset. The reported topology-change experiment does not validate these premises: CUSUM rarely reopens and LRCP-off leaves posterior
recovery unchanged.
\end{corollary}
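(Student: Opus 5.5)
The plan is to split the post-disruption stream at the $K$ disruption times and bound each inter-disruption segment separately, mirroring the segment-wise argument of Theorem~\ref{thm:drift} (App.~\ref{app:drift-proof}). The proof then adds a per-disruption cost. Each disruption falls into exactly one of two cases fixed by the hypothesis: (a) it is directly visible to the LRCP residual signal, or (b) it is CUSUM-detectable with separation $I_{\min}>0$. I will bound the cost in each case and then sum over at most $K$ disruptions. By linearity the total is at most $K$ times the larger of the two per-disruption bounds. The factors "up to disruption magnitude and per-episode gap" enter as multiplicative constants: an initial residual $\epsilon_0$ scaled by disruption magnitude, and the gap $\Delta_{\max}$ charged per stale step.

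For case (a), I invoke Proposition~\ref{prop:lrcp-conv} directly. The disruption resets the violation residual to some $\epsilon_0$ bounded by the disruption magnitude. Under the local contraction certificate with $\gamma_r\le(1-\beta_r)/(2T)$, the residual reaches $1/T$ after $\log(2T\epsilon_0)/\log(1/\beta_r)=\mathcal{O}(\log T)$ repair iterations. Because the residual itself triggers the reset, there is no detection-delay term, and the cost is $\mathcal{O}(\log T)$ times the gap.

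For case (b), I decompose the cost into detection delay plus repair. Step~2 of App.~\ref{app:drift-proof} gives that with threshold $h=c\log E$, the expected CUSUM delay is $\mathcal{O}(h/I_{\min})=\mathcal{O}(\log E/I_{\min})$, at false-alarm probability $\mathcal{O}(1/E)$. During that window at most $\Delta_{\max}$ is charged per step. After detection, \textsc{PartialReset} hands the affected block to LRCP, and Proposition~\ref{prop:lrcp-conv} again gives $\mathcal{O}(\log T)$. Each case-(b) disruption therefore costs $\mathcal{O}(\log T+\log E/I_{\min})$.

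Summing over disruptions gives $\mathcal{O}(K(\log T+\log E/I_{\min}))$. False alarms contribute at most $E\cdot\mathcal{O}(1/E)$ spurious repairs, each costing $\mathcal{O}(\log T)$, which is lower order. I expect the main obstacle to be ensuring that consecutive disruptions do not overlap, i.e.\ that the next disruption does not arrive before the current repair and detection window closes. I would first handle this by assuming inter-disruption spacing exceeds $\mathcal{O}(\log T+\log E/I_{\min})$, as implicit in the $K$-change-point stream of Definition~\ref{def:kcp}. Failing that, an overlapping disruption only restarts a window, so each disruption still incurs at most one full window's cost and the per-disruption charge survives. A secondary subtlety is that Proposition~\ref{prop:lrcp-conv} bounds $\mathbb{E}\epsilon_k$ rather than $\epsilon_k$. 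I will therefore state the corollary in expectation, consistent with the paper's accounting conventions.
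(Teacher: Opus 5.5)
Your proposal is correct and follows essentially the paper's route: for each disruption you add the CUSUM detection delay $\mathcal{O}(\log E/I_{\min})$ to the $\mathcal{O}(\log T)$ repair iterations from Proposition~\ref{prop:lrcp-conv}, then sum over the $K$ disruptions. Charge the detection window per delayed \emph{episode}, as the paper does, since CUSUM updates once per episode; your ``per step'' wording would otherwise add a factor of $T$. Your explicit case split for LRCP-visible disruptions, and your remarks on false alarms, overlapping windows, and the in-expectation reading, are refinements the paper leaves implicit rather than a different argument.
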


For each disruption, two costs occur: detection and local repair.

\paragraph{Detection.}
Under the CUSUM separation assumption of Theorem~\ref{thm:drift}, detection delay is $\mathcal{O}(\log E/I_{\rm cp})$ episodes, with at most $\Delta_{\rm disrupt}$ conservative cost per delayed episode.

\paragraph{Repair.}
After detection, Proposition~\ref{prop:lrcp-conv} gives $\mathcal{O}(\log T)$ repair iterations to return to the contraction floor, each with cost at most $\Delta_{\rm disrupt}$.

\paragraph{Sum over disruptions.}
For $K$ disruptions,
\[
R_{\rm temp}^{\rm post\text{-}disrupt}(E)
=
\mathcal{O}\!\left(
K\Delta_{\rm disrupt}\left(\frac{\log E}{I_{\rm cp}}+\log T\right)
\right).
\]
The looser product form $\mathcal{O}(K\Delta_{\rm disrupt}\log T\log E)$ follows whenever $\log T,\log E\ge 1$, but the additive form above is the sharper statement.
\hfill$\square$


\section{Cross-Episode Lower Bound (Proposition~\ref{thm:ce-lower})}
\label{app:ce-lower-proof}

The lower bound is a \emph{total causal-probe} lower bound, not a per-episode budget lower bound. Per-episode budgets enter only after dividing by how
many probes can be executed in one episode. This distinction is important: Proposition~\ref{thm:ce-lower} lower-bounds the amount of causal
information any scheduler must acquire, while the per-episode budget controls how quickly that information can be collected. The argument below is a
proof sketch: it specifies the adversarial alternatives, the per-probe information cap, and the Fano accounting, and leaves the transcript-level
change-of-measure for adaptive schedulers implicit.

\paragraph{Claim.}
Any scheduler that identifies all candidate edges in $G^{\mathrm{inf}}$ with probability at least $1-\delta$ must use
\[
N_{\rm probe}
\ge
\Omega\!\left(
\frac{|G^{\mathrm{inf}}|\log(1/\delta)}
{d_{\rm out}^{\max}\log(1+\kappa_{\max}^2/\sigma^2)}
\right)
\]
total causal probes in the worst case.

\paragraph{Step 1: adversarial alternatives.}
For each candidate local edge, construct two local instances that differ only in the sign or presence of that edge, with all other observable quantities matched as closely as the noise class allows. A probe of a parent node can inform at most $d_{\rm out}^{\max}$ outgoing local hypotheses.

\paragraph{Step 2: per-probe information cap.}
Under the sub-Gaussian noise class and bounded effect magnitude $\kappa_{\max}$, a single probe carries at most
\[
I_{\max} := \log(1+\kappa_{\max}^2/\sigma^2)
\]
units of information about the relevant local alternative, up to universal constants. This is the usual Audibert--Bubeck style information term for distinguishing nearby alternatives.

\paragraph{Step 3: Le Cam/Fano accounting.}
To drive the local error probability below $\delta$ on a hypothesis, the accumulated information must be at least order $\log(1/\delta)$. Since one probe can contribute to at most $d_{\rm out}^{\max}$ outgoing identifications, identifying $|G^{\mathrm{inf}}|$ candidates requires at least
\[
N_{\rm probe}
\ge
\Omega\!\left(
\frac{|G^{\mathrm{inf}}|}{d_{\rm out}^{\max}}
\frac{\log(1/\delta)}{I_{\max}}
\right)
\]
total probes.
\hfill$\square$

\paragraph{Temporal-regret floor.}
Taking $\delta=1/E$ and multiplying the total-probe lower bound by a minimum temporal cost $\Delta_{\min}$ gives
\[
\mathbb{E}\!\left[R_{\rm temp}^{\rm CE}(E)\right]
\ge
\Omega\!\left(
\frac{\Delta_{\min}|G^{\mathrm{inf}}|\log E}
{d_{\rm out}^{\max}\log(1+\kappa_{\max}^2/\sigma^2)}
\right).
\]
Thus the upper and lower bounds match in their logarithmic dependence on $E$ and in their dependence on the number of informative candidates up to the
explicit structural fan-out factor and the conservative temporal-regret accounting convention. They should not be read as a statement that a single
episode must have a budget of this size.



\section[Separation: Outcome-Only RL vs.\ Intervention-Budgeted Scheduling]%
{Separation Between Outcome-Only RL and\\ Intervention-Budgeted Scheduling}
\label{app:separation}

This appendix gives the separation used in \S\ref{sec:theory}.  The point is not that an outcome-only learner lacks enough samples, or that it
explores poorly.  The point is informational: there are causally distinct environments that induce exactly the same observational outcome stream for
every outcome-only algorithm, but require different committed causal models.  In such a pair, no exploration bonus, entropy regularizer, or
count-based pseudocount can reveal the missing causal direction, because all of them still operate through the same observational channel.  A budgeted
causal probe, by contrast, distinguishes the pair in logarithmically many samples.

\subsection{Setup}
\label{app:sep-setup}

We isolate the smallest instance that exhibits the phenomenon.  There is one observed variable $X\in\{-1,+1\}$, one outcome $Y\in\mathbb{R}$, and one
hidden confounder $C\in\{-1,+1\}$ in the confounded instance.  The learner sees an adaptive stream of observational outcomes.  At time $t$, it may
choose any history-dependent dispatch or prediction action $A_t$, but the action is not a causal intervention on $X$ or on the hidden confounder.
Thus $A_t$ may affect the learner's utility or logging decision, but it does not change the observational law of $(X_t,Y_t)$.  This is the
outcome-only setting.

\paragraph{Two observationally equivalent instances.}
Fix an effect size $\kappa>0$ and noise variance $\sigma^2>0$.  Consider two SCMs:
\begin{align*}
\mathcal{M}_{\mathrm{causal}}: \qquad
& X_t \sim \mathrm{Unif}\{-1,+1\},
&& Y_t = \kappa X_t + \eta_t, \qquad \eta_t\sim \mathcal{N}(0,\sigma^2), \\
\mathcal{M}_{\mathrm{conf}}: \qquad
& C_t \sim \mathrm{Unif}\{-1,+1\}, \quad X_t=C_t,
&& Y_t = \kappa C_t + \eta_t, \qquad \eta_t\sim \mathcal{N}(0,\sigma^2).
\end{align*}
The observed joint distribution of $(X_t,Y_t)$ is identical in the two instances: in both cases $X_t$ is uniform on $\{-1,+1\}$ and
\[
Y_t \mid X_t=x \sim \mathcal{N}(\kappa x,\sigma^2).
\]
The causal graphs are different, however.  In $\mathcal{M}_{\mathrm{causal}}$, $X\to Y$ is a genuine causal edge.  In $\mathcal{M}_{\mathrm{conf}}$, there is no edge $X\to Y$; the association is induced by the hidden common cause $C\to X$ and $C\to Y$.

The two instances are separated by intervention.  Under $\mathrm{do}(X=x)$,
\[
\mathbb{E}_{\mathcal{M}_{\mathrm{causal}}}[Y\mid \mathrm{do}(X=x)] = \kappa x,
\qquad
\mathbb{E}_{\mathcal{M}_{\mathrm{conf}}}[Y\mid \mathrm{do}(X=x)] = 0.
\]
Thus the observational channel is identical, while the interventional channel is different.

\paragraph{Outcome-only algorithms.}
An \emph{outcome-only algorithm} $\mathcal{A}$ is any adaptive procedure whose action and committed causal model at time $t$ are measurable functions of the history
\[
\mathcal{H}_{t-1}=(X_1,A_1,Y_1,\ldots,X_{t-1},A_{t-1},Y_{t-1}).
\]
It may randomize, use exploration bonuses, query its own memory, or maintain arbitrary internal state, but it cannot request an intervention such as
$\mathrm{do}(X=x)$ or $\mathrm{do}(C=c)$.  Let $\widehat{G}_t\in\{G_{\mathrm{causal}},G_{\mathrm{conf}}\}$ denote the model it has committed to at
time $t$, where $G_{\mathrm{causal}}$ contains the edge $X\to Y$ and $G_{\mathrm{conf}}$ does not.

\paragraph{Temporal regret for miscalibration.}
For this separation, temporal regret is the time spent committed to the wrong causal model, multiplied by a fixed per-step miscalibration cost $\Delta>0$:
\[
R_{\mathrm{temp}}^{\mathcal{A},M}(T)
:=
\Delta \sum_{t=1}^{T}\Pr_M\!\left(\widehat{G}_t\neq G_M\right),
\]
where $M\in\{\mathcal{M}_{\mathrm{causal}},\mathcal{M}_{\mathrm{conf}}\}$ and $G_M$ is the correct graph for $M$.  This is the graph-identification
form of temporal regret used in the main paper: it penalizes how long the scheduler tolerates a miscalibrated causal model before correction.  It
deliberately does not compare to a clairvoyant policy that observes the hidden $C_t$ at each step.

\subsection{Main Result}

\begin{theorem}[Outcome-only observational separation]
\label{thm:sep}
For every outcome-only algorithm $\mathcal{A}$ and every horizon $T\ge 1$, there exists an instance $M\in\{\mathcal{M}_{\mathrm{causal}},\mathcal{M}_{\mathrm{conf}}\}$ such that
\[
R_{\mathrm{temp}}^{\mathcal{A},M}(T) \;\ge\; \frac{\Delta}{2}\,T .
\]
Consequently, outcome-only learning has worst-case temporal regret $\Omega(T)$ on this confounded instance family.  In contrast, a scheduler with
access to the causal probe $\mathrm{do}(X=x)$ distinguishes the two instances with probability at least $1-1/T$ using
$O((\sigma^2+\kappa^2)\kappa^{-2}\log T)$ probes, yielding logarithmic identification time.
\end{theorem}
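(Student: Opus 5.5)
The plan has two parts. The lower bound is a two-point indistinguishability (Le Cam) argument. The upper bound is a single sub-Gaussian concentration estimate for the interventional mean.

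\emph{Lower bound.} First, the law of the full history $\mathcal{H}_{t}$ is the same under $\mathcal{M}_{\mathrm{causal}}$ and $\mathcal{M}_{\mathrm{conf}}$ for every outcome-only algorithm $\mathcal{A}$. I would prove this by induction on $t$.
\begin{itemize}
\item Given $\mathcal{H}_{t-1}$, the action $A_t$ is drawn from a kernel that depends only on $\mathcal{H}_{t-1}$ and the algorithm's internal randomness, so it is the same kernel in both instances.
\item By the setup, $A_t$ does not change the law of $(X_t,Y_t)$.
\item In both SCMs, $(X_t,Y_t)$ is i.i.d.\ across $t$, independent of the past, with $X_t\sim\mathrm{Unif}\{-1,+1\}$ and $Y_t\mid X_t=x\sim\mathcal N(\kappa x,\sigma^2)$.
\item Hence the one-step transition kernel for the history is identical, and so is the history law.
\end{itemize}
The committed model $\widehat G_t$ is a measurable function of $\mathcal{H}_{t-1}$ and internal randomness. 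So the number $q_t:=\Pr(\widehat G_t=G_{\mathrm{causal}})$ is the same under both instances. The correct graphs differ, so
\[
\Pr_{\mathcal{M}_{\mathrm{causal}}}(\widehat G_t\neq G_{\mathrm{causal}})+\Pr_{\mathcal{M}_{\mathrm{conf}}}(\widehat G_t\neq G_{\mathrm{conf}})=(1-q_t)+q_t=1 .
\]
Summing over $t$ gives $R^{\mathcal{A},\mathcal{M}_{\mathrm{causal}}}_{\mathrm{temp}}(T)+R^{\mathcal{A},\mathcal{M}_{\mathrm{conf}}}_{\mathrm{temp}}(T)=\Delta T$. Therefore one of the two instances has regret at least $\Delta T/2$.

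\emph{Upper bound.} The scheduler issues $\mathrm{do}(X=x)$ with $x$ alternating between $\pm1$ (or drawn uniformly). It forms $\hat\kappa_n=\frac1n\sum_i x_iY_i$ and commits to $G_{\mathrm{causal}}$ iff $|\hat\kappa_n|\ge\kappa/2$.
\begin{itemize}
\item Under $\mathcal{M}_{\mathrm{causal}}$, each $x_iY_i=\kappa+x_i\eta_i$, which is $\mathcal N(\kappa,\sigma^2)$.
\item Under $\mathcal{M}_{\mathrm{conf}}$, intervening on $X$ cuts $C\to X$, so $Y_i=\kappa C_i+\eta_i$ with $C_i$ independent of $x_i$. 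Then $x_iY_i$ has mean $0$ and variance proxy at most $\sigma^2+\kappa^2$, since a bounded $\pm\kappa$ term is $\kappa^2$-sub-Gaussian.
\item Hoeffding/sub-Gaussian concentration gives error probability at most $2\exp(-n\kappa^2/(8(\sigma^2+\kappa^2)))$ in either case. This is at most $1/T$ once $n=\lceil 8(\sigma^2+\kappa^2)\kappa^{-2}\log(2T)\rceil=O((\sigma^2+\kappa^2)\kappa^{-2}\log T)$.
\end{itemize}
This is the one-candidate version of Step 1 in the proof of Theorem~\ref{thm:within}, with $m=1$. Before commitment, the scheduler pays at most $\Delta$ per step. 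After commitment, it pays at most $\Delta T\cdot(1/T)=\Delta$ in expectation. The resulting temporal regret is $O(\Delta(\sigma^2+\kappa^2)\kappa^{-2}\log T)$.

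\emph{Main obstacle.} The only delicate step is making the indistinguishability rigorous for adaptive, randomized algorithms. One must check that the actions really have no causal effect on the data law; this is precisely the defining restriction of the outcome-only class. I would handle it by writing the joint density of $(\mathcal{H}_T,\text{internal randomness})$ as a product of policy kernels and environment kernels, and observing that the environment factors coincide term by term.
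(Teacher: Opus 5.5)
Your proposal is correct and follows the paper's proof essentially step for step: transcript-law equality by induction, per-step error probabilities summing to one, summing over $t$, and sub-Gaussian concentration of an interventional mean with variance proxy $\sigma^2+\kappa^2$ under $\mathcal{M}_{\mathrm{conf}}$. The only cosmetic difference is in the probe design: you probe both signs and threshold the product statistic $\frac{1}{n}\sum_i x_iY_i$ symmetrically, whereas the paper probes only $\mathrm{do}(X=+1)$ and tests whether the empirical mean is nearer $\kappa$ or $0$.
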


The constant $1/2$ is not important; the theorem's force is the rate.  Outcome-only observation cannot reduce temporal miscalibration below linear time on a pair of observationally equivalent SCMs, whereas an interventional probe separates them at the usual concentration rate.

\subsection{Proof of Theorem~\ref{thm:sep}}

\paragraph{Step 1: observational equivalence under adaptive outcome-only algorithms.}
For any fixed history $\mathcal{H}_{t-1}$, the algorithm's next action $A_t$ is a measurable function of that history and its internal randomness.
Since the conditional law of $(X_t,Y_t)$ given the past is identical under $\mathcal{M}_{\mathrm{causal}}$ and $\mathcal{M}_{\mathrm{conf}}$, and
since $A_t$ does not intervene on $X_t$ or $C_t$, induction on $t$ gives
\[
\mathbb{P}^{\mathcal{A},\mathcal{M}_{\mathrm{causal}}}_{T}(\mathcal{H}_{T})
=
\mathbb{P}^{\mathcal{A},\mathcal{M}_{\mathrm{conf}}}_{T}(\mathcal{H}_{T})
\]
for the full transcript distribution.  Hence every statistic computed by the algorithm from the transcript, including $\widehat{G}_t$, has the same distribution under the two instances.

\paragraph{Step 2: no estimator can be correct on both instances.}
For each time $t$,
\begin{align*}
&\Pr_{\mathcal{M}_{\mathrm{causal}}}\!\left(\widehat{G}_t\neq G_{\mathrm{causal}}\right)
+
\Pr_{\mathcal{M}_{\mathrm{conf}}}\!\left(\widehat{G}_t\neq G_{\mathrm{conf}}\right) \\
&\qquad=
\Pr\!\left(\widehat{G}_t=G_{\mathrm{conf}}\right)
+
\Pr\!\left(\widehat{G}_t=G_{\mathrm{causal}}\right)
=1,
\end{align*}
where the unqualified probabilities in the second line are taken under the common transcript distribution established in Step 1.  Therefore, at every time $t$, at least one of the two instances has error probability at least $1/2$.

\paragraph{Step 3: sum over time and choose the hard instance.}
Summing the previous display over $t=1,\ldots,T$ gives
\[
R_{\mathrm{temp}}^{\mathcal{A},\mathcal{M}_{\mathrm{causal}}}(T)
+
R_{\mathrm{temp}}^{\mathcal{A},\mathcal{M}_{\mathrm{conf}}}(T)
=
\Delta T.
\]
Thus at least one of the two instances satisfies
\[
R_{\mathrm{temp}}^{\mathcal{A},M}(T)\ge \frac{\Delta}{2}T,
\]
which proves the outcome-only lower bound.

\paragraph{Step 4: logarithmic identification with causal probes.}
Now allow a scheduler to spend probes of the form $\mathrm{do}(X=x)$.  Under $\mathcal{M}_{\mathrm{causal}}$,
\[
Y\mid \mathrm{do}(X=+1)\sim \mathcal{N}(\kappa,\sigma^2),
\qquad
Y\mid \mathrm{do}(X=-1)\sim \mathcal{N}(-\kappa,\sigma^2),
\]
whereas under $\mathcal{M}_{\mathrm{conf}}$,
\[
Y\mid \mathrm{do}(X=x)\;\sim\;\tfrac{1}{2}\,\mathcal{N}(\kappa,\sigma^2)+\tfrac{1}{2}\,\mathcal{N}(-\kappa,\sigma^2)
\quad\text{for each } x\in\{-1,+1\},
\]
It suffices to probe $\mathrm{do}(X=+1)$ for
\[
n \ge c\,\frac{\sigma^2+\kappa^2}{\kappa^2}\log T
\]
independent samples and test whether the empirical mean is closer to $\kappa$ or to $0$; the interventional means differ ($\kappa$ versus $0$) while
under $\mathcal{M}_{\mathrm{conf}}$ the outcome is a symmetric two-component Gaussian mixture with mean $0$ and sub-Gaussian variance proxy
$\sigma^2+\kappa^2$.  Standard sub-Gaussian concentration gives error probability at most $1/T$ for a universal constant $c$.  The scheduler therefore
identifies the correct graph after $O((\sigma^2+\kappa^2)\kappa^{-2}\log T)$ probes with probability at least $1-1/T$.  This establishes the claimed
separation between outcome-only observation and intervention-budgeted scheduling.
\hfill$\square$

\subsection{Why Exploration Bonuses Do Not Rescue the Rate}
\label{app:sep-bonuses}

A natural objection is that an optimism bonus, entropy regularizer, or count-based pseudocount changes the exploration schedule and may therefore
break the lower bound.  In the construction above, these devices do not help.  They can change which learner-side actions $A_t$ are selected, but the
full transcript distribution remains identical under $\mathcal{M}_{\mathrm{causal}}$ and $\mathcal{M}_{\mathrm{conf}}$ for every such adaptive policy.
The missing information is not hidden in an insufficiently explored arm; it is absent from the observational channel itself.  Only an intervention or
equivalent causal probe changes the distribution in a way that separates the two graphs.

This is the structural obstruction behind the main paper's temporal-regret objective.  Outcome-only RL can reduce a scalar prediction or reward loss
while leaving the causal model miscalibrated.  Temporal regret charges the time spent in that state; the intervention channel supplies the evidence
needed to end it.

\subsection{Contrast with Trivium}
\label{app:sep-contrast}

On the same instance family, the Trivium scheduler of Algorithm~\ref{alg:trivium} opens the causal channel deliberately.  With
$O((\sigma^2+\kappa^2)\kappa^{-2}\log T)$ probes of $\mathrm{do}(X=+1)$, it distinguishes the confounded and causal graphs with failure probability at
most $1/T$, commits to the correct model, and then stops paying temporal miscalibration cost except on the low-probability failure event.  Thus
\[
R_{\mathrm{temp}}^{\mathrm{Trivium}}(T)
=
O\!\left(\Delta\,\frac{\sigma^2+\kappa^2}{\kappa^2}\log T\right)+O(\Delta),
\]
matching the one-confounder specialization of the within-episode identification rate in Theorem~\ref{thm:within}, up to constants and the choice of confidence level.  The outcome-only lower bound is linear in $T$; the intervention-budgeted identification time is logarithmic.

\subsection{Remarks}

\paragraph{The comparison is non-clairvoyant.}
The lower bound does not compare outcome-only RL against an oracle that observes the hidden confounder at each step.  Both sides are non-clairvoyant.  The separation is between two information structures: passive outcome observation versus a budgeted causal probe.

\paragraph{The confounder is not exotic.}
The pair uses a single binary hidden common cause, a binary observed proxy, a linear outcome equation, and Gaussian noise.  The observational distribution is deliberately simple because the impossibility comes from causal non-identifiability, not from statistical complexity.

\paragraph{Relationship to bandit lower bounds.}
Classical best-arm lower bounds quantify how many reward samples are needed when different arms have different observable reward distributions.  The
present result is orthogonal: before intervention, the two causal graphs induce the same observable distribution for every adaptive outcome-only
policy.  The relevant lower bound is therefore an identifiability lower bound, not a slow-rate bandit lower bound.  Once the causal probe is
available, the usual sub-Gaussian concentration rate $O((\sigma^2+\kappa^2)\kappa^{-2}\log T)$ reappears.


%

\section{Extended Background}
\label{app:background-ext}
\label{sec:background}

This section fixes notation and positions the paper relative to four adjacent literatures: causal bandits and best-arm identification with
interventions, online change-point detection, causal identification under hidden confounding, and transactional logging.  The paper uses these
literatures as primitives.  The new object is the composition: a temporal-regret view in which delayed causal identification is itself a cost, and a
persistent causal log converts repeated rediscovery into reusable evidence under explicit probe, separability, and substrate assumptions.

\subsection{Notation}
\label{sec:notation}
\label{app:notation}

\begin{table}[!htbp]
\centering\footnotesize
\setlength{\tabcolsep}{4pt}
\begin{tabular}{@{}ll@{\quad}ll@{}}
\toprule
\textbf{Symbol} & \textbf{Meaning} & \textbf{Symbol} & \textbf{Meaning} \\
\midrule
$E,\,e$ & episode horizon / index &
$T,\,t$ & within-episode horizon / step \\

$M$ & agents per episode &
$X_t,A_t,Y_t$ & context, action, outcome \\

$G^{\mathrm{inf}}$ & candidate graph &
$G^{\mathrm{true}}_e$ & true local graph at episode $e$ \\

$\widehat G_e$ & committed graph &
$\mathrm{do}(Z{=}z)$ & intervention on $Z$ \\

$B_e,\,B_{\mathrm{eff}}$ & probe budgets &
$N_{\mathrm{probe}}$ & total causal probes \\

$P_{e,t}$ & posterior over $G^{\mathrm{inf}}$ &
$\mathrm{CE\text{-}EIG}(e)$ & cross-episode EIG \\

$K_0$ & per-probe information rate &
$m_0$ & informative confounders \\

$\kappa_{\min},\kappa_{\max}$ & min / max effect size &
$\sigma^2$ & outcome-noise variance \\

$\Delta_{\max}$ & temporal-regret gap &
$d_{\mathrm{out}}^{\max}$ & max outcome-degree \\

$\delta$ & failure probability &
$\hat\kappa$ & LRCP contraction factor \\

$\rho$ & dispatch-gate budget &
$h$ & CUSUM threshold \\

$K$ & number of change-points &
$I_{\min}$ & post-change separation \\

$R_{\mathrm{out}}$ & outcome regret &
$R_{\mathrm{epi}}$ & epistemic regret \\

$R_{\mathrm{temp}}$ & temporal regret &
$L$ & outcome-map Lipschitz constant \\

$\varepsilon,\eta$ & actuator / observation error &
$p_f,\,\rho_c$ & commit failure rate / magnitude \\
\bottomrule
\end{tabular}
\caption{Symbols used throughout \S\ref{sec:theory} and the proof appendices. Rates are first stated in total causal probes \(N_{\mathrm{probe}}\) and then converted to temporal regret by an explicit accounting convention.}
\label{tab:notation}
\end{table}

We write $[n] := \{1,\dots,n\}$.  For a distribution $P$, $H(P)$ denotes Shannon entropy and $D_{\mathrm{KL}}(P\|Q)$ denotes KL divergence.  For
graphs $G$ and $G'$ on the same vertex set, $G \subseteq G'$ means that $G$ is an edge-subgraph of $G'$.  An episode is a tuple
$\mathcal{E}_e=(X_{1:T},A_{1:T},Y_{1:T})$ as defined in \S\ref{sec:three-regret}.  We distinguish the observational distribution $P(Y\mid X,A)$ from
the interventional distribution $P(Y\mid X,\mathrm{do}(A))$.  Under hidden confounding these can agree on the observed trajectory while disagreeing
under intervention; this observational-equivalence gap is the source of the separation result in Appendix~\ref{app:separation}.  $W_1$ denotes
Wasserstein-1 distance, and $\mathrm{CUSUM}(\cdot)$ denotes a cumulative-sum detector with threshold $h>0$.

\subsection{Causal Bandits and Best-Arm Identification with Interventions}
\label{sec:cb-background}

Causal-bandit work studies decision problems in which arms correspond to interventions on a causal graph.  Classical formulations usually fix a single
episode or a fixed graph class and measure simple regret, best-arm identification error, or cumulative reward regret.  This literature supplies the
statistical machinery used in our proofs: concentration for intervention outcomes, KL-based two-instance lower bounds, and information-rate conditions
for identifying competing causal hypotheses.

The present paper uses these tools in a different bookkeeping system.  We first count the number of causal probes needed to identify or re-identify
the relevant structure, then charge temporal regret for the time spent acting under an unresolved or wrong model.  This is why the proof appendices
separate three quantities that are often conflated in informal presentations:
\[
N_{\rm probe}
\qquad
B_e
\qquad
R_{\rm temp}.
\]
Here $N_{\rm probe}$ is total interventional sample complexity, $B_e$ is the effective probe budget available in an episode, and $R_{\rm temp}$ is the
regret induced by delayed commitment.  The extra factors in the main-text bounds come from this last conversion, not from the concentration
inequalities themselves.

What standard causal-bandit results do not provide is a persistent cross-episode substrate: most do not ask what happens when an agent can reuse
causal evidence across many episodes, or when the graph changes and must be partially re-identified.  Trivium's contribution is to attach
causal-bandit identification to a persistent CTL and to make delayed identification itself an explicit objective.

\subsection{Change-Point Detection and Drift}
\label{sec:cpd-background}

CUSUM-style procedures provide a standard primitive for detecting distributional change.  In this paper, CUSUM is not used as a universal guarantee of
drift recovery; it is used under a detectable-change assumption.  Concretely, if a change point induces a post-change separation of at least
$I_{\min}>0$ in the monitored statistic, then a threshold $h=\Theta(\log E)$ yields an expected detection delay of order $\mathcal{O}(\log
E/I_{\min})$ at inverse-polynomial false-alarm probability.  This delay is then added to the re-identification cost after the change.

The drift theorem therefore has a segment-wise interpretation: between change-points, the corrected cross-episode probe bound applies; at a
change-point, CUSUM contributes a detection-delay term; after detection, a partial reset reopens only the affected components where possible.  The
doubling-window mechanism is a budget-allocation device for restarting probes without discarding all previously-valid causal evidence.  The guarantee
is not that CUSUM solves arbitrary non-stationarity; it is that detectable changes add a controlled re-identification cost.

\subsection{Identification under Interventions and Observational Equivalence}
\label{sec:id-background}

The causal-identification literature tells us when interventional distributions are recoverable from observational data, known graph structure, and/or
interventions.  Two facts are load-bearing here.  First, with hidden confounding, distinct structural causal models can induce the same observational
distribution while implying different interventional distributions.  Appendix~\ref{app:separation} uses this observational-equivalence fact to show
why outcome-only observation cannot, by itself, force temporal miscalibration to disappear.  The lower bound is not a claim that no non-clairvoyant
learner can match a hidden per-step oracle; it is a claim that observational outcomes alone cannot distinguish causally different worlds that look
identical observationally.

Second, under a realizability condition $G^{\mathrm{true}}_e\subseteq G^{\mathrm{inf}}$, sufficient separation $|\kappa_Z|\ge\kappa_{\min}$, and a
valid intervention channel, a learner can identify the relevant local structure at a rate controlled by the effect-to-noise ratio.  This is the role
of the commit discipline: only move an edge or confounder from the candidate layer into the committed working model after enough probe evidence
separates it from zero, from the wrong sign, or from the competing block hypothesis.

The regret novelty is therefore not the asymptotic identifiability claim by itself.  The novelty is the accounting: how many episodes or probes are spent with the wrong causal model before commitment, and how much that delay costs.

\subsection{Transactional Assumptions on the CTL}
\label{sec:tx-background}

The cross-episode argument requires that evidence collected in earlier episodes remains available, correctly attributed, and consistently readable by
later episodes.  The CTL is assumed to be atomic, persistent, and snapshot-isolable: an intervention, outcome, and posterior update must commit as one
unit; committed records must survive across episodes; and concurrent agents must read a consistent view.

These are standard transactional-memory properties, but in this paper they are proof assumptions rather than implementation details.  If the CTL
silently drops, duplicates, or half-commits causal evidence, then the effective information rate $K_0$ is degraded and the physical-grounding slack
grows.  The grounding theorem captures small stochastic failures through an $\varepsilon_c$ term proportional to $p_f$, but it does not protect
against adversarial or unbounded log corruption.  In such cases the correct behavior is to report unresolved epistemic regret rather than falsely
commit to $\widehat G_e$.

\subsection{Long-Horizon RL and the Outcome-Only Trap}
\label{sec:rl-background}

Outcome-only reinforcement learning can improve actions when the relevant state is observed and the reward channel identifies what should change.  The
trap considered here is different: the observed outcome trajectory can be compatible with multiple causal explanations.  In that regime, exploration
bonuses, entropy regularization, and outcome shaping can change which actions are tried, but they do not by themselves create an intervention that
distinguishes $P(Y\mid X,A)$ from $P(Y\mid \mathrm{do}(A))$.

Appendix~\ref{app:separation} formalizes this point with an observational-equivalence construction.  The result should be read narrowly: without an
intervention channel, an equivalent randomized experiment, or other causal signal, outcome-only observation can leave the agent committed to a wrong
causal model for a linear horizon.  Trivium's scheduler adds exactly the missing channel: a budgeted causal-probe action whose value is measured by
expected information gain and whose results are stored in the CTL.

\subsection{Positioning}
\label{sec:positioning}

The surrounding literatures each provide a necessary component.  Causal bandits provide intervention-sample concentration and lower-bound tools.
Change-point detection provides a detector for identifiable drift.  Causal identification supplies the distinction between observational and
interventional equivalence.  Transactional systems supply the persistence and atomicity assumptions needed for evidence to accumulate across episodes.

The paper's contribution is the composition under explicit assumptions: a three-regret functional that separates what failed, why the causal model was
wrong, and when the correction occurred; a causal-probe complexity view that turns persistent intervention evidence into delayed-identification
temporal-regret bounds; a segment-wise drift extension for detectable change-points; and a grounding slack that states what is lost when ideal
interventions are deployed through noisy physical systems.  The experiments of \S\ref{sec:experiments} test the predicted scaling signatures, while
the proof appendices state the assumptions under which those signatures are theorem-backed.



\section{Extended Algorithm Details}
\label{app:algorithm-ext}

This section specializes the cross-episode meta-controller to the URS action space and CTL schema. The algorithm separates three concerns that
reviewers of causal-bandit work often see tangled: (i) \emph{where} to intervene within an episode (IG scoring over $G^{\mathrm{inf}}$), (ii)
\emph{when} to stop identifying and commit (the commit discipline of Def.~\ref{def:commit}), and (iii) \emph{whether} to restart across episodes (the
CUSUM detector of Def.~\ref{def:kcp}). Each concern is a separate sub-routine with independently testable correctness.

\subsection{State, Actions, and CTL Schema}
\label{sec:algorithm-state}

\paragraph{Per-episode state.}
At episode $e$ and within-episode time $t$, the scheduler state is
\[
s_{e,t} = \big(\, X_{e,t},\; P_e(G^{\mathrm{inf}}),\; \widehat G_e,\; B_{e,t},\; w_e,\; S_e^{\mathrm{CUSUM}} \,\big),
\]
where $X_{e,t}$ is the current URS context, $P_e(G^{\mathrm{inf}})$ is the posterior over the influence graph, $\widehat G_e$ is the currently
committed DAG (possibly empty), $B_{e,t}$ is the remaining per-episode budget, $w_e$ is the active-learning window, and $S_e^{\mathrm{CUSUM}}$ is the
running CUSUM statistic on CE-EIG.

\paragraph{Action space.}
URS actions decompose as $a = (a^{\mathrm{disp}}, a^{\mathrm{int}})$, where $a^{\mathrm{disp}}$ is the dispatch/repositioning action (a probability
simplex over nearby zones for each driver) and $a^{\mathrm{int}} \in G^{\mathrm{inf}} \cup \{\varnothing\}$ is an optional intervention target. The
dispatch head is whatever policy the operator ships; the intervention head is what \emph{this} paper adds.

\paragraph{CTL schema.}
Each CTL entry is a tuple
\[
(\text{episode } e,\; \text{time } t,\; X_{e,t},\; a^{\mathrm{disp}},\; a^{\mathrm{int}},\; Y_{e,t},\; \text{commit status},\; G_t\text{-snapshot})
\]
appended through an atomic-commit interface backed by SagaLLM (described in \S\ref{sec:tx-background}). The $G_t$-snapshot is checkpointed at episode
boundaries and at every detected change-point; between checkpoints, the posterior update is logical (event-sourced from $(a^{\mathrm{int}}, Y_{e,t})$
pairs), which keeps the log size at $\mathcal{O}(E \cdot T)$ bytes in practice.

\subsection{Main Loop}
\label{sec:algorithm-main}

Algorithm~\ref{alg:trivium} below is the full Trivium loop. It is written in per-episode form because the meta-controller's decisions (restart,
window, budget) are per-episode; the within-episode loop delegates to the IG scorer of Algorithm~\ref{alg:ig} and the commit check of
Algorithm~\ref{alg:commit} defined later in this appendix. Three correctness clauses make the meta-controller non-trivial: (i) the per-episode budget
$B_e \gets m_0 \log(\max(e, w_e))$ instantiates Corollary~\ref{cor:budget}; (ii) the dispatch head is gated on the running epistemic-regret tally
$\mathcal{R}^{\mathrm{epi}}_{e,t}$ and budget $\rho$, realizing Theorem~\ref{thm:dispatch-coupling}; (iii) every \texttt{Execute} step appends
atomically to the CTL, and a $G_t$-snapshot is checkpointed at every episode boundary and every detected change-point, bounding the reconstruction
cost under Theorem~\ref{thm:grounding}.

\begin{algorithm}[!ht]
\caption{Trivium: cross-episode meta-controller}
\label{alg:trivium}
\begin{algorithmic}[1]
\small
\REQUIRE stream horizon \(E\), episode horizon \(T\), \(G^{\mathrm{inf}}\), CUSUM threshold \(h\), commit threshold \(\kappa_{\min}\), epistemic-regret budget \(\rho\)
\STATE Init.\ \(P_0 \gets\) unif.; \(\widehat G_0 \gets \varnothing\); \(w_0 \gets 1\); \(S_0^{\mathrm{CUSUM}}\gets0\)
\FOR{\(e = 1,\dots,E\)}
  \STATE \(B_e \gets m_0\log(\max(e,w_e))\); \(\mathcal{R}^{\mathrm{epi}}_{e,0}\gets0\)
  \FOR{\(t = 1,\dots,T\)}
    \STATE Observe \(X_{e,t}\)
    \STATE \(a^{\mathrm{disp}}\gets\pi^{\mathrm{disp}}(X_{e,t};\widehat G_e,\mathcal{R}^{\mathrm{epi}}_{e,t},\rho)\)
    \IF{\(B_{e,t}>0\) \textbf{and not} \textsc{AllCom.}}
      \STATE \(a^{\mathrm{int}}\gets\textsc{IGScore}(P_e,G^{\mathrm{inf}}\setminus\widehat G_e)\)
      \STATE \(B_{e,t+1}\gets B_{e,t}-1\)
    \ELSE
      \STATE \(a^{\mathrm{int}}\gets\varnothing\); \(B_{e,t+1}\gets B_{e,t}\)
    \ENDIF
    \STATE Execute \((a^{\mathrm{disp}},a^{\mathrm{int}})\); log to CTL; observe \(Y_{e,t}\)
    \STATE \(P_e\gets\textsc{PostUpd}(P_e,a^{\mathrm{int}},Y_{e,t})\)
    \STATE \(\widehat G_e\gets\textsc{CommitCheck}(\widehat G_e,P_e,\kappa_{\min})\)
    \IF{\textsc{ConstraintViolated}\((\widehat G_e,Y_{e,t})\)}
      \STATE \(\widehat G_e\gets\textsc{LRCP}(\widehat G_e,P_e,Y_{e,t})\)
    \ENDIF
    \STATE \(\mathcal{R}^{\mathrm{epi}}_{e,t+1}\gets\mathcal{R}^{\mathrm{epi}}_{e,t}+D_{\mathrm{KL}}(\delta_{\widehat G_e}\,\|\,P_e)\) \COMMENT{$= -\log P_e(\widehat G_e)$, posterior log-loss on the committed graph}
  \ENDFOR
  \STATE \(S_e^{\mathrm{CUSUM}}\gets\max\{0,S_{e-1}^{\mathrm{CUSUM}}+\mathrm{CE\text{-}EIG}(e)-\bar{\mu}\}\)
  \IF{\(S_e^{\mathrm{CUSUM}}>h\)}
    \STATE \(w_{e+1}\gets2w_e\); \(P_{e+1}\gets\textsc{PartReset}(P_e)\)
    \STATE \(\widehat G_{e+1}\gets\varnothing\); \(S_{e+1}^{\mathrm{CUSUM}}\gets0\)
  \ENDIF
\ENDFOR
\end{algorithmic}
\end{algorithm}

\subsection{Within-Episode Identification via Local Repair (LRCP)}
\label{sec:algorithm-lrcp}

Theorem~\ref{thm:within} bounds the within-episode temporal regret at $\mathcal{O}(m_0^2 \log T / \kappa_{\min}^2)$ under commit discipline and a
$\log T$-scale intervention budget. Commit discipline is a \emph{rule}; the theorem is a rate. Neither object says \emph{how} to drive the per-step
constraint-violation count toward zero episode by episode. This subsection closes that gap with \emph{LRCP}, a local-repair-for-constraint-propagation
algorithm that turns the within-episode rate into an operational procedure.

\paragraph{Setup.}
At timestep $t$ within episode $e$, the committed DAG $\widehat G_e$ induces a plan whose constraint residuals (in JSSP: precedence/resource
violations; in URS: overlapping trip assignments; in P-series: capacity-violating tour schedules) form a set $V_t \subseteq \mathcal{V}$. Each
residual constraint, unresolved, contributes to temporal regret: every residual violation encodes a commitment made under a locally-wrong causal
model. LRCP repairs violations in radius-bounded neighbourhoods of the violating nodes, editing $\widehat G_e$ in place rather than recomputing a plan
from scratch.

\begin{algorithm}[!ht]
\caption{\textsc{LRCP}: Local Repair for Constraint Propagation}
\label{alg:lrcp}
\begin{algorithmic}[1]
\small
\REQUIRE committed DAG $\widehat G$, posterior $P$, observed outcome $Y$, repair radius $r$, contraction threshold $\beta_{\max}$
\STATE $V \gets \{Z \in \widehat G : \text{constraint on } Z \text{ violated by } Y\}$
\STATE $\epsilon_0 \gets |V|$; $k \gets 0$
\WHILE{$V \neq \varnothing$ \textbf{and} $\epsilon_k > 1/T$}
 \STATE Select violating node $Z \in V$ maximizing posterior uncertainty $H(P \mid Z)$
 \STATE $N_r(Z) \gets \{Z' \in \widehat G : \mathrm{dist}_{\widehat G}(Z, Z') \leq r\}$ \COMMENT{radius-$r$ neighbourhood}
 \STATE $\widehat G \gets$ local-edit $\widehat G$ on $N_r(Z)$ to maximize posterior-predictive fit to $Y$
 \STATE $V \gets \{Z \in \widehat G : \text{constraint on } Z \text{ violated}\}$
 \STATE $\epsilon_{k+1} \gets |V|$; $k \gets k+1$
 \IF{$\epsilon_{k+1} / \epsilon_k > \beta_{\max}$}
 \RETURN $\widehat G$, \textsc{flagged} \COMMENT{non-contraction; abort}
 \ENDIF
\ENDWHILE
\RETURN $\widehat G$, \textsc{converged}
\end{algorithmic}
\end{algorithm}

\paragraph{Convergence.}
Proposition~\ref{prop:lrcp-conv} (main body) states that LRCP's per-iteration error trace satisfies $\epsilon_{k+1} \leq \beta\,\epsilon_k + \gamma$
with $\beta \in (0, 1)$ guaranteed by a local contraction certificate, and $\gamma = \mathcal{O}(\sigma^2/r)$, reaching $\epsilon_k \leq 1/T$ in
$\mathcal{O}(\log T)$ iterations. Proof sketch (full proof in Appendix~\ref{app:lrcp-proof}): each radius-$r$ local edit is an $r$-step random-walk
absorption on the constraint graph; one sufficient certificate, used in the linear-SCM regime of our experiments, is a positive spectral gap of the
local transition matrix together with bounded influence decay (cf.\ \S\ref{app:lrcp-proof}). The $\gamma$ term is the residual posterior-predictive
error from unresolved confounders outside the radius-$r$ neighbourhood, bounded by a Bernstein inequality on the residual edge set.

The previously reported geometric trace is posterior edge error, not LRCP's own repair residual $\epsilon_k$. Because disabling LRCP leaves that
posterior trajectory unchanged, the trace cannot be used as an empirical signature of Proposition~\ref{prop:lrcp-conv}. The direct measured LRCP
effect is instead the reduction in committed-graph dispatch exposure from $10.1$ to $7.8$ episodes.

\paragraph{Transfer across domains.}
Corollary~\ref{cor:lrcp-transfer} (main body) bounds cross-domain transfer via the KL-projection $\Pi(G^{\mathrm{inf}}_{\mathrm{src}} \to
G^{\mathrm{inf}}_{\mathrm{tgt}})$ between source and target influence graphs: finite divergence $D_{\Pi} < \infty$ gives target-domain adaptation
regret $\mathcal{O}(D_{\Pi} + \log T \cdot E_{\mathrm{adapt}})$ on the first $E_{\mathrm{adapt}}$ episodes up to a further $\log$-factor from the
cross-episode argument of Theorem~\ref{thm:ce-upper}; $D_{\Pi} = \mathcal{O}(1)$ gives $\mathcal{O}(\log T / T)$ per-episode held-out gap. The
empirical signature is a constant-order per-episode held-out gap on a transfer stream whose source and target influence graphs share the
informative-confounder set up to a finite KL-projection; we leave a direct transfer ablation on CausalBench-Seq to future work.

\paragraph{Robustness to random machine failure.}
Corollary~\ref{cor:lrcp-robust} is a conditional composition: it requires detector-visible disruption and an LRCP repair operator satisfying the local
contraction certificate. The CausalBench-Seq $K$-sweep does not isolate those premises. CUSUM rarely reopens on the tested topology changes, and
LRCP-off leaves posterior recovery unchanged. The sweep therefore provides a descriptive per-change recovery curve, not empirical validation of this
corollary.

\paragraph{Summary of the LRCP contribution.}
LRCP is a bounded local-edit mechanism on the committed graph. Proposition~\ref{prop:lrcp-conv} states what follows if its own repair residual
satisfies a local contraction certificate. The present posterior trace does not test that premise. The measured contribution in this implementation is
narrower and concrete: LRCP refreshes the committed dispatch graph after the posterior has already recovered, at the cost of a high standing
repair-call rate.

\subsection{Sub-Routines}
\label{sec:algorithm-subs}

\begin{algorithm}[!ht]
\caption{\textsc{IGScore}: greedy information-gain intervention target}
\label{alg:ig}
\begin{algorithmic}[1]
\small
\REQUIRE posterior $P$, candidate node set $\mathcal{C} \subseteq G^{\mathrm{inf}}$
\STATE For each $Z \in \mathcal{C}$, compute $\mathrm{EIG}(Z) = \mathbb{E}_{Y \mid \mathrm{do}(Z)} \big[ H(P) - H(P \mid Y, \mathrm{do}(Z)) \big]$
\RETURN $\arg\max_{Z \in \mathcal{C}} \mathrm{EIG}(Z)$ \COMMENT{ties broken by smaller $d_{\mathrm{out}}(Z)$}
\end{algorithmic}
\end{algorithm}

\begin{algorithm}[!ht]
\caption{\textsc{CommitCheck-Design}: ideal frozen-commit discipline used by Theorem~\ref{thm:within}.}
\label{alg:commit}
\begin{algorithmic}[1]
\small
\REQUIRE committed set $\widehat G$, posterior $P$, threshold $\kappa_{\min}$
\FOR{$Z \in G^{\mathrm{inf}} \setminus \widehat G$}
 \IF{$\mathrm{EIG}(Z) < \log T / T$ \textbf{and} $|\hat{\kappa}_Z| \geq \kappa_{\min}$}
 \STATE $\widehat G \gets \widehat G \cup \{Z\}$ \COMMENT{freeze under the ideal design}
 \ENDIF
\ENDFOR
\RETURN $\widehat G$
\end{algorithmic}
\end{algorithm}

\begin{algorithm}[!ht]
\caption{\textsc{CommitSnapshot-Executed}: rule that produced the reported runs.}
\label{alg:commit-executed}
\begin{algorithmic}[1]
\small
\REQUIRE posterior $P$, entropy threshold $h_{\mathrm{ent}}=0.8$, previous snapshot $\widehat G$
\IF{$H(P) \leq h_{\mathrm{ent}}$}
 \STATE $\widehat G \gets \operatorname{MAP}(P)$ \COMMENT{re-snapshot every eligible episode; not absorbing}
\ENDIF
\RETURN $\widehat G$
\end{algorithmic}
\end{algorithm}

\paragraph{\textsc{PartialReset}.}
On a CUSUM trigger, the posterior is not discarded wholesale: entries for nodes that pass a \emph{drift-attributable} test (log-likelihood of the new
window's evidence is below a pre-set threshold against the pre-trigger posterior) are reset to uniform; others are carried forward. This keeps the
cost of a detected change-point at most $\mathcal{O}(K m_0)$ resets per stream, which is the constant hidden inside Theorem~\ref{thm:drift}.

\paragraph{\textsc{PosteriorUpdate}.}
Bayesian update conditioned on $(a^{\mathrm{int}}, Y_{e,t})$ under the assumed linear-Gaussian SCM; for non-Gaussian URS channels (e.g., binary surge)
we use a Laplace approximation. The update is $\mathcal{O}(|G^{\mathrm{inf}}|)$ per call; for $|G^{\mathrm{inf}}| = 12$ this is negligible relative to
a URS simulator step.

\subsection{Complexity}
\label{sec:algorithm-complexity}

Per episode, Algorithm~\ref{alg:trivium} performs $T$ IG scores at $\mathcal{O}(|G^{\mathrm{inf}}|)$ each, $T$ posterior updates at
$\mathcal{O}(|G^{\mathrm{inf}}|)$ each, and one CUSUM update at $\mathcal{O}(1)$. CTL writes are $\mathcal{O}(T)$ atomic-commit operations at
$\mathcal{O}(\log |\mathrm{CTL}|)$ amortized each due to the event-sourced log. Total per-episode cost is $\mathcal{O}\big( T \cdot
(|G^{\mathrm{inf}}| + \log(e \cdot T)) \big)$; memory is $\mathcal{O}(|G^{\mathrm{inf}}|)$ for the posterior and $\mathcal{O}(E \cdot T)$ for the CTL,
with the log compressible by snapshot-and-delta if needed.

Across the full stream of $E$ episodes, wall-clock is dominated by the URS simulator, not by Trivium itself. In our implementation (\S\ref{sec:impl}) the scheduler accounts for under $3\%$ of wall-clock at $M = 128$ drivers, validating the per-episode cost analysis above.

\subsection{IG vs.\ Front-Loaded Round-Robin and Multi-Agent Speedup}
\label{sec:algorithm-ig-and-multiagent}

\begin{observation}[IG vs.\ front-loaded round-robin; heuristic comparison]
\label{thm:ce-ig}
Under the same total probe budget, greedy information-gain scheduling (Algorithm~\ref{alg:ig}) and front-loaded round-robin both satisfy the
probe-complexity accounting of Theorem~\ref{thm:ce-upper} whenever they maintain positive per-probe information on unresolved components. Their
realized gap is not measured in this version.
\end{observation}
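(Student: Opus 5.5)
The observation asserts two things. First, under a shared total probe budget, both greedy information-gain scheduling (Algorithm~\ref{alg:ig}) and front-loaded round-robin fall under the probe-complexity accounting of Theorem~\ref{thm:ce-upper}, provided each keeps positive per-probe information on unresolved components. Second, their realized gap is not measured. The second part needs no proof. My plan is to reduce the first part to a check that each scheduler satisfies the proof-local assumptions P1--P4 of Appendix~\ref{app:ce-proof}.

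The proof of Theorem~\ref{thm:ce-upper} uses the scheduler in only one place: Step~2, through P3. That step requires each unresolved informative component to accumulate information at rate at least $I_{\min}=K_0\kappa_{\min}^2/(2\sigma^2)$ per probe charged to it. Persistence (P1), stationarity (P2) and the budget conversion (P4) are properties of the CTL and the environment, so they hold equally for any scheduler.

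I will therefore treat the observation's hypothesis, ``maintains positive per-probe information,'' as exactly P3 with some scheduler-specific constant $K_0^{\mathrm{IG}}$ or $K_0^{\mathrm{RR}}$. The argument then proceeds in two steps.

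\begin{enumerate}
\item \textbf{Greedy IG.} Algorithm~\ref{alg:ig} selects the $\arg\max$ of the expected information gain. Its per-probe gain is therefore at least that of any separating probe whose existence P3 guarantees. Here I would invoke Lemma~\ref{lem:k0-lower}, which gives $K_0^{\mathrm{IG}}$ as the posterior-mass constant.
\item \textbf{Front-loaded round-robin.} Round-robin cycles deterministically over the $m_0$ unresolved components. Each component thus receives one separating probe per cycle, with the per-probe gain lower-bounded by the same Gaussian KL computation as in Lemma~\ref{lem:k0-lower}. The sequential-testing bound of Step~2 applies per component, the union bound over $m_0$ components is unchanged, and the total is again $\mathcal{O}(m_0\sigma^2/(K_0\kappa_{\min}^2)\log(m_0/\delta))$.
\end{enumerate}

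Step~3 of Appendix~\ref{app:ce-proof} (probes to episodes) and Corollary~\ref{cor:budget} then convert both totals identically under the common budget $B_{\mathrm{eff}}$.

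The main obstacle is that the two schedulers may have different constants $K_0$, so the claim has to be read as ``same order, different constants.'' That difference is precisely the unmeasured gap. A second subtlety concerns round-robin. Once a component's posterior mass concentrates, continuing to probe it yields information but is wasted effort, so the sum must be taken only over unresolved components. I would handle this by letting the round-robin pass skip committed components, which matches the \textsc{AllCom.} guard in Algorithm~\ref{alg:trivium}.
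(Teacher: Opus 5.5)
Your central reduction is correct and matches what the paper relies on. The proof of Theorem~\ref{thm:ce-upper} touches the scheduler only through P3. The observation's hypothesis (positive per-probe information on unresolved components) is P3 with a scheduler-specific coverage constant. Hence both rules inherit the $\mathcal{O}(m_0\sigma^2/(K_0\kappa_{\min}^2)\log(m_0/\delta))$ accounting and differ only in $K_0$. The paper offers no further argument and explicitly declines any IG-versus-round-robin comparison without an (adaptive-)submodularity assumption on EIG.

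Your two scheduler-specific steps, however, claim more than they establish. Drop them, or weaken them to ``the hypothesis supplies P3.''

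\textbf{Greedy IG.} The $\arg\max$ in Algorithm~\ref{alg:ig} only lower-bounds the one-step expected entropy reduction of the whole posterior. Step~2 of Appendix~\ref{app:ce-proof} instead counts information accrued on each unresolved \emph{informative} component. A greedy probe can land on a null candidate, and that information does not count toward Step~2's per-component budget. Lemma~\ref{lem:k0-lower} does not close this gap: its proof assumes the scheduler selects a separating probe with a given coverage factor $K_0$, rather than deriving that factor for a particular rule.

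\textbf{Round-robin.} The scheduler does not know which $m_0$ components are informative, so it cycles over $G^{\mathrm{inf}}\setminus\widehat G_e$. Null candidates never pass the $|\hat\kappa_Z|\ge\kappa_{\min}$ test of Definition~\ref{def:commit}. Skipping committed components therefore never removes them. Each informative component then gets one probe per cycle of length up to $m$, so $K_0^{\mathrm{RR}}$ is of order $m_0/m$, not the unchanged constant you state. This is the same $|G^{\mathrm{inf}}|$ coverage loss the paper derives for maximum-entropy probing in its Beta--Bernoulli instance.

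The logarithmic dependence on $E$ survives, so your final ``same order, different constants'' reading is right. It holds because these losses are absorbed into scheduler-specific $K_0$'s, not because either step delivers the per-component rate you claim.
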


We do not claim an unconditional approximation ratio without a separate submodularity or adaptive-submodularity assumption on the EIG objective. No IG-versus-round-robin experiment is reported in this version.

When $M$ agents share the CTL and each contributes an intervention probe per step, the identification rate parallelizes up to a saturation point governed by the minimal informative-confounder set.

\begin{proposition}[Multi-agent speedup]
\label{prop:multi-agent}
Fix the per-episode budget $B_e = m_0 \log(\max(e, w_e))$ and suppose $M$ agents share the CTL with disjoint intervention target assignments. For $M
\leq m_0$, the wall-clock cost to identify $\widehat G_e$ with probability $1 - \delta$ scales as $\mathcal{O}(B_e / M)$, linear speedup. For $M >
m_0$, the speedup saturates: the marginal agent returns no additional rate because it is forced to re-probe nodes already being resolved by the $m_0$
informative-set cohort.
\end{proposition}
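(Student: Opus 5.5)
The plan is to treat the $M$ agents as parallel servers feeding a single shared sequential-identification problem, and to reuse the per-component probe count from the proof of Theorem~\ref{thm:ce-upper}. By P3 and Step~2 of Appendix~\ref{app:ce-proof}, each unresolved informative component $Z$ needs
\[
n^\star=\mathcal{O}\!\left(\frac{\sigma^2}{K_0\kappa_{\min}^2}\log\frac{m_0}{\delta}\right)
\]
valid probes before it commits with error at most $\delta/m_0$. The total identification work is therefore $m_0 n^\star$ probes, which is the quantity that the budget $B_e=m_0\log(\max(e,w_e))$ is sized to cover, up to the instance constant $\sigma^2/(K_0\kappa_{\min}^2)$.

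The first step is to use the CTL assumptions (A3 and P1: atomicity, persistence, snapshot isolation) to argue that probes written concurrently by different agents are all visible to the shared posterior. The $M$ probe streams therefore pool into a single evidence record, and the concentration argument is unchanged: the per-component error bound depends only on the number of probes of $Z$, not on which agent issued them.

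The second step handles $M\le m_0$. Partition the $m_0$ informative targets into $M$ disjoint groups of size $\lceil m_0/M\rceil$, as the disjoint-assignment hypothesis allows. Each agent runs the single-agent schedule on its group, so its wall-clock cost is $\lceil m_0/M\rceil n^\star$. Taking the maximum over agents gives a wall-clock cost of $\mathcal{O}(m_0 n^\star/M)=\mathcal{O}(B_e/M)$. A union bound over all $m_0$ components, each failing with probability at most $\delta/m_0$, gives overall success probability $1-\delta$.

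The third step handles $M>m_0$. Because assignments are disjoint and there are only $m_0$ informative targets, at most $m_0$ agents can hold informative targets. The remaining agents either probe null components, which adds nothing to the informative count under the $m_0$ accounting of Theorem~\ref{thm:ce-upper}, or duplicate a component. Duplicate probes of the same $Z$ can only shorten that component's own $n^\star$ sequence. The per-component sequential bound is a lower floor on elapsed time only if each agent issues one probe per step on its own target. So the wall-clock cost cannot fall below $n^\star=\mathcal{O}(B_e/m_0)$, which means the speedup saturates at $M=m_0$.

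I expect the main obstacle to be making the saturation claim rigorous. If several agents may probe the same component in the same step, pooled samples divide the wall-clock time to reach $n^\star$ by the number of agents on that component. In that case the speedup would not saturate. My first approach would be to read ``disjoint intervention target assignments'' as one agent per target per step, which forces the per-component time floor $n^\star$. I would state that interpretation explicitly as part of the proposition's hypothesis rather than derive it.
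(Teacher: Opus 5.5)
Your speedup half is sound, and it is more explicit than the paper, which offers only a one-line sketch (``a disjoint-probe coupling against Proposition~\ref{thm:ce-lower}''). You partition the targets, run the per-component count $n^\star$ from the proof of Theorem~\ref{thm:ce-upper} in parallel, and union-bound. This gives an achievable wall-clock of $\lceil m_0/M\rceil n^\star=\mathcal{O}(B_e/M)$, once $\delta$ is tied to the episode index so that $\log(m_0/\delta)$ matches $\log\max(e,w_e)$.

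The gap is in the saturation half. You conclude that the wall-clock ``cannot fall below $n^\star=\mathcal{O}(B_e/m_0)$,'' but $n^\star$ is a \emph{sufficient} count. Step~2 of the upper-bound proof shows that commitment has happened by then, not that it cannot happen earlier; a sequential likelihood-ratio test may stop sooner. A floor written with $\mathcal{O}(\cdot)$ is vacuous. You need an $\Omega(\cdot)$ bound on the probes that \emph{every} procedure must spend on a component, and no upper-bound argument can supply it. (Both halves also assume the informative set is known when assigning targets. If it is not, nulls must be probed before they can be recognized, and the natural saturation point drifts toward $m=|G^{\mathrm{inf}}|$. So the $m_0$ accounting should be stated as an assumption.)

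The missing ingredient is what the paper's sketch points to: the per-hypothesis Le Cam/Fano step of Proposition~\ref{thm:ce-lower}. Driving the error on component $Z$ below $\delta$ requires accumulated information of order $\log(1/\delta)$ about its two local alternatives. Each probe of $Z$ carries at most $I_{\max}=\log(1+\kappa_{\max}^2/\sigma^2)$, so $Z$ needs $\Omega(\log(1/\delta)/I_{\max})$ probes of its own target. Disjoint assignment caps these at one per step, so the wall-clock is $\Omega(\log(1/\delta)/I_{\max})$ for every $M$. This $M$-independent floor matches your $M=m_0$ schedule in its $\log(1/\delta)$ dependence (up to the $K_0$ and $\kappa_{\min}$ versus $\kappa_{\max}$ factors), which is exactly what saturation means.

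Use the per-component form of the proposition, not the total-probe form. The total-probe form only gives $\Omega(N_{\mathrm{probe}}/M)$, which keeps decreasing in $M$; it caps the speedup at linear but cannot produce saturation.

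This also resolves the obstacle you flagged without adding a hypothesis. Disjoint assignment already forbids two agents probing the same $Z$ in one step, so your ``duplicate'' branch is excluded. You are right that without disjointness the CTL pooling of your first step would make re-probing helpful and saturation would fail. The statement's ``forced to re-probe'' gloss should therefore be read as ``has no unassigned informative target left.''
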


We give a proof sketch rather than a proof. The argument is a disjoint-probe coupling against Proposition~\ref{thm:ce-lower}; a dedicated multi-agent sweep is not executed in this version, so no empirical speedup or saturation claim is made.

\subsection{What the Algorithm Does, Theorem by Theorem}
\label{sec:algorithm-theorem-map}

To make the algorithm--theorem correspondence explicit:
\begin{itemize}
\item The \textbf{budget rule} $B_e = m_0 \log(\max(e, w_e))$ (Line~3 of Algorithm~\ref{alg:trivium}) instantiates Corollary~\ref{cor:budget}. A smaller budget falsifies P2.
\item The \textbf{IG scoring} in Algorithm~\ref{alg:ig} is the scheduler referenced by Observation~\ref{thm:ce-ig}; the comparison with front-loaded round-robin remains unexecuted.
\item Algorithm~\ref{alg:commit} states the ideal frozen-commit discipline used by Theorem~\ref{thm:within}; Algorithm~\ref{alg:commit-executed} is
the entropy-gated re-snapshot rule that produced the reported runs. The theorem-to-experiment mapping is therefore behavioral rather than
contract-identical.
\item LRCP supplies the conditional repair primitive of Proposition~\ref{prop:lrcp-conv}. The LRCP-off audit does not degrade posterior recovery; it increases committed-graph lag, identifying dispatch freshness as the observed effect.
\item The \textbf{dispatch gate} (Line~7 of Algorithm~\ref{alg:trivium}) is related to Theorem~\ref{thm:dispatch-coupling}; Ablation A6 resolves the $\sqrt E$ term but not the additional $\rho\log E$ slack.
\item The \textbf{CUSUM + doubling-window} block implements the mechanism assumed by Theorem~\ref{thm:drift}. On topology changes it is not load-bearing and often does not fire; on the separate variance-inflation stream its detection role is measured.
\item The \textbf{atomic-commit interface} (the ``atomically append to CTL'' statement inside the inner loop of Algorithm~\ref{alg:trivium}) realizes
the $\varepsilon_c$ term of Theorem~\ref{thm:grounding}; its commit-failure rate $p_f$ is the only run-time quantity feeding the Wasserstein bound of
Ablation~A2.
\end{itemize}

The map above distinguishes ideal contracts, executed code, and measured effects. Several mappings are deliberately negative: the audit shows where a nominal mechanism is not responsible for the observed behavior.



\section{Extended Discussion and Limitations}
\label{app:discussion-ext}

\subsection{Illustrative Example: Temporal Regret in Cross-Episode Control}
\label{app:helicopter-example}

To make the cross-episode mechanism concrete for readers unfamiliar with how a temporal-regret-aware causal-memory controller differs from an
outcome-only learner, Figure~\ref{fig:helicopter-temporal-regret} sketches a stylized helicopter-fleet example. Across episodes, repeated failures at
a windy coastal landing zone are easily over-generalized by an outcome-only learner into the rule ``avoid all coastal landing zones,'' which discards
safe coastal landings together with unsafe ones. A temporal-regret-aware controller instead accumulates evidence in the persistent causal log across
episodes and vehicles, identifies wind/turbulence as the relevant cause, and updates the policy to avoid or adapt only under the causal condition.
This figure is explanatory only; the controlled structural evidence is provided by CausalBench-Seq (Appendix~\ref{app:experiments-ext}), while the
real-LLM stream is a separate preliminary pilot (Appendix~\ref{app:llm-bridge}).

\begin{figure}[!ht]
\centering
\IfFileExists{figures/helicopter_temporal_regret.png}{%
  \includegraphics[width=0.95\linewidth]{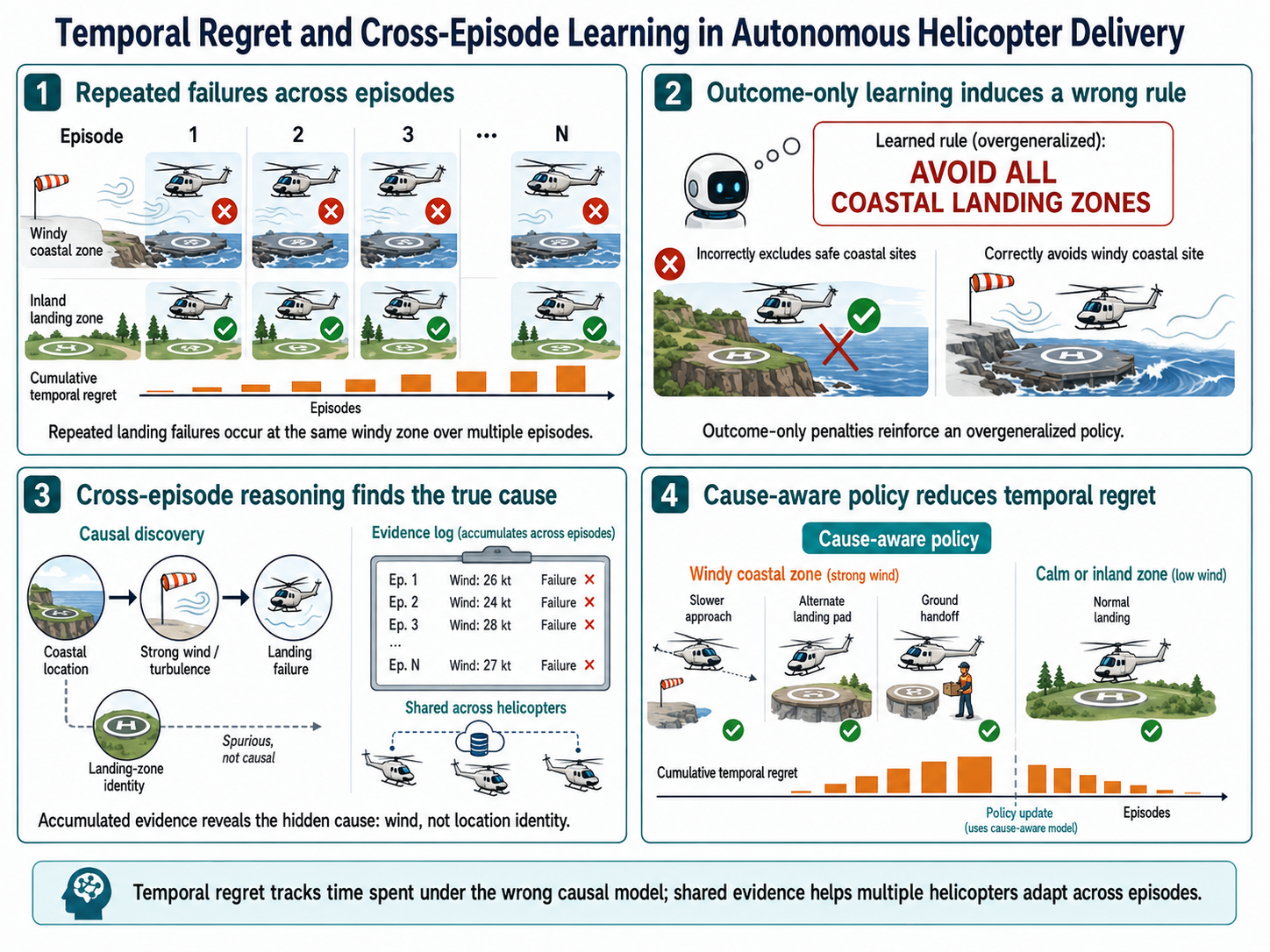}%
}{%
  \fbox{\parbox{0.9\linewidth}{\centering\footnotesize
  [Placeholder: \texttt{figures/helicopter\_temporal\_regret.png} not yet on disk.]\\[0.5em]
  Illustrative panel contrasts an outcome-only learner (over-generalizes failures at a windy coastal landing zone into ``avoid all coastal landing
  zones'') with a temporal-regret-aware causal-memory controller (accumulates evidence across episodes and vehicles, identifies wind/turbulence as the
  causal condition, and adapts only under that condition).
  }}%
}
\caption{Illustrative example of temporal regret in a cross-episode control setting. Outcome-only feedback may overgeneralize repeated failures at a
windy coastal landing zone into the wrong rule ``avoid all coastal landing zones.'' A temporal-regret-aware causal-memory controller instead
accumulates evidence across episodes and vehicles, identifies wind/turbulence as the relevant cause, and updates the policy to avoid or adapt only
under the causal condition. This figure is explanatory only; CausalBench-Seq supplies the controlled structural evidence, and
Appendix~\ref{app:llm-bridge} supplies a preliminary real-LLM pilot.}
\label{fig:helicopter-temporal-regret}
\end{figure}

\subsection{Relationship to Adjacent Prior Work (extended)}
\label{app:prior-work-ext}

Trivium sits next to a line of prior peer-reviewed and artifact-available work cited below in third person. Figure~\ref{fig:prior-work} sketches the line and marks the boundary between what is reused as substrate or data and what is new here.

\begin{itemize}
\item \textbf{SagaLLM} supplies a transactional substrate: atomic commit, persistence, and snapshot isolation for LLM-agent planning, with an
artifact-available badge. Trivium uses the transactional substrate unchanged as the CTL implementation; the engineering is not re-claimed. What is new
is the \emph{regret-theoretic} account of why a transactional substrate is load-bearing for long-horizon identification, specifically, the
$\varepsilon_c$ term of Theorem~\ref{thm:grounding}, which quantifies the regret cost of commit failure and was not formulated in the cited systems
paper.
\item \textbf{The multi-agent planning benchmark} supplies the empirical evaluation harness: 14 planning and scheduling scenarios across five
difficulty tiers with paired static/dynamic disruption variants. The extended suite uses REALM-Bench unchanged as its evaluation substrate; the
scenarios, baselines, and ground-truth graphs have not been modified to fit the theorems here. What is new is the \emph{reinterpretation} of
REALM-Bench results through the three-regret functional. That extended suite, which would run RQ1--RQ4 against its scenarios, is specified but not
executed in this version.
\end{itemize}

\begin{figure}[!ht]
\centering
\resizebox{\linewidth}{!}{%
\begin{tikzpicture}[
 node distance=6mm and 6mm,
 box/.style ={draw, rounded corners=2pt, align=center, font=\footnotesize,
 minimum height=9mm, inner sep=1.5mm},
 substrate/.style={box, fill=blue!8, minimum width=24mm},
 data/.style ={box, fill=green!10, minimum width=26mm},
 theory/.style ={box, fill=orange!12,minimum width=30mm},
 ours/.style ={box, fill=orange!25, line width=0.6pt, minimum width=30mm},
]
 \node[substrate] (saga) {SagaLLM \ \\ \textit{substrate: CTL}};
 \node[data, right=10mm of saga] (realm)
 {REALM-Bench \\  \\ \textit{empirical harness}};
 \node[ours, right=10mm of realm] (frameb)
 {\textbf{Trivium (this paper)} \\ \textit{temporal axis,} $\mathcal{O}(\log E)$};
 \draw[->, thick] (saga.east) -- (realm.west);
 \draw[->, thick, orange!80!black] (realm.east) -- (frameb.west);
 \draw[->, dashed] (saga.south) to[bend right=22] (frameb.west);
\end{tikzpicture}%
}
\caption{Trivium in relation to adjacent prior work.
Trivium composes over a shared substrate (transactional substrate) and empirical harness (planning benchmark). The bold node marks this paper's
contribution: the temporal-regret axis and its cross-episode $\mathcal{O}(\log E)$ rate. Dashed arrow: Trivium's direct use of the CTL as an
atomic-commit substrate for Theorem~\ref{thm:grounding}'s $\varepsilon_c$ slack.}
\label{fig:prior-work}
\end{figure}

\paragraph{Novelty boundary (precise).}
This paper contributes four primary items:
(a) the three-regret functional and the outcome-only-RL temporal-regret separation (Appendix~\ref{app:separation});
(b) the regret-rate results, within-episode (Theorem~\ref{thm:within}), cross-episode (Theorem~\ref{thm:ce-upper}, Proposition~\ref{thm:ce-lower}), and drift-robust (Theorem~\ref{thm:drift});
(c) Trivium and LRCP as the algorithmic instantiation, including the geometric-contraction analysis (Proposition~\ref{prop:lrcp-conv}) and the two composition corollaries (Corollaries~\ref{cor:lrcp-transfer},~\ref{cor:lrcp-robust}); and
(d) the falsification matrix (Table~\ref{tab:rq-matrix}) and the controlled-testbed validation on CausalBench-Seq.
The constraint-aware dispatch-coupling theorem (Theorem~\ref{thm:dispatch-coupling}) and the physical-grounding theorem (Theorem~\ref{thm:grounding})
extend the framework to constrained dispatch and embodied deployment, with full statements and proofs in the appendix and the composed end-to-end
bound in Corollary~\ref{cor:e2e-deploy}. The transactional substrate (CTL realization) and REALM-Bench (evaluation harness) are pre-existing
peer-reviewed contributions reused here unchanged and cited in third person; they are not re-claimed.

\subsection{The CTL as Epistemic Memory}
\label{sec:discussion-ctl}

The Causal Transaction Log is usually introduced in this paper as an engineering device, a persistent store for intervention outcomes with
transactional guarantees. In the experiments it is load-bearing: the $\mathcal{O}(\log E)$ cross-episode rate of Theorem~\ref{thm:ce-upper} depends on
the log accumulating without loss across episodes, on later episodes being able to read earlier episodes' evidence, and on intervention outcomes being
atomically attributed to their context. A best-effort telemetry stream would not give us the theorem.

But the CTL is also more than a substrate. Read as an object in its own right, it is an \emph{epistemic memory}: a structured, queryable record of
what was observed, what was intervened on, and what was inferred, preserved across time, across agents, and across deployment contexts. This framing
suggests a set of open questions our current scope does not address. Can the CTL be used \emph{retrospectively} to reconsider old decisions under a
richer causal model? Does cross-fleet CTL sharing (across cities, across operators) converge to a shared $G^{\mathrm{inf}}$ faster than the log-$E$
per-fleet rate? Is there a privacy-preserving CTL protocol under which operators share sufficient statistics without sharing raw traces? Each of these
is a research program we defer, but they are visible consequences of taking the CTL seriously as a memory rather than a log.

\subsection{Lessons from Positive and Negative Results}
\label{sec:discussion-failures}

The final evidence does not assign a positive verdict to every original prediction. RQ1 supports the controlled outcome-only separation. RQ2 records
no observed stationary hard-error growth at the tested horizon, but the former soft-score rate fit is withdrawn and the hard budget sweep is unrun.
RQ3 records recovery after topology changes while falsifying the designed CUSUM reopen mechanism on that change class. RQ4 falsifies the attribution
of posterior contraction to LRCP and instead measures a committed-graph freshness benefit. These negative results localize which architectural
components are responsible for the observed behavior.

The unexecuted predictions remain open rather than presumed true. The candidate-family scaling, multi-agent saturation, IG-versus-round-robin
comparison, ideal commit-discipline ablation, and physical-grounding experiment require dedicated runs. Failure of any of them would restrict the
associated theorem-to-system mapping and could require a different algorithmic mechanism; it would not be repaired merely by re-estimating a constant.

\subsection{What Further Evaluation Could Establish}
\label{sec:discussion-success}

The specified REALM-Bench extension, if executed, could test whether the controlled findings survive richer planning scenarios. It would not by itself
establish an asymptotic theorem, CUSUM sufficiency, or transactional necessity. Those require experiments that isolate the corresponding assumptions:
multiple horizons and candidate-family sizes for rate claims, detector operating curves for drift, and substrate on/off comparisons for transactional
properties.

Nor would such an extension show that Trivium is the best operational fleet-management system, that the chosen influence graph is exhaustive, or that
the physical-grounding result transfers across substrates. The present warehouse protocol is unexecuted, and the action-coupling claims remain
conditional.

\subsection{Broader Relevance}
\label{sec:discussion-broader}

The long-horizon exposure, identification, and replan vocabulary may be useful wherever an agent operates repeatedly in a confounded environment and
can obtain causal evidence. Candidate domains include supply-chain replenishment, clinical decision support, and energy-grid dispatch. These are
motivating examples only; no result in this paper establishes transfer to them.

\subsection{Limitations (extended)}
\label{app:limitations-ext}

We list the assumptions that, if violated, make our bounds vacuous.

\paragraph{Why the positive-rate guarantees are conditional.}
The upper-bound results are necessarily certificate-conditional. Without influence-graph coverage, positive interventional separation, persistent
logging, and detectable drift, no algorithm can guarantee logarithmic identification: the relevant cause may be outside $G^{\mathrm{inf}}$,
observationally equivalent structures may remain indistinguishable under the available probes, or a change-point may be statistically undetectable.
Trivium's unconditional behavior is therefore not fast recovery in arbitrary environments, but certificate-gated non-commitment: when the required
certificates fail, the corresponding component remains unresolved and contributes to epistemic regret rather than being silently committed to
$\widehat G_e$.

\paragraph{CUSUM-detectable drift (Def.~\ref{def:kcp}).} Theorem~\ref{thm:drift} requires that change-points produce a detectable shift in the CE-EIG
stream within $\mathcal{O}(\log E)$ episodes. Adversarial drift at arbitrarily small per-step magnitude defeats this, and a detector aimed at one
statistic can miss changes visible in another. The topology audit in this paper demonstrates that the shipped detector can miss even deliberately
injected changes, so detectability must be measured for each change class rather than assumed.

\paragraph{$G^{\mathrm{inf}}$ completeness.} All our bounds are stated with respect to a \emph{prescribed} influence graph. Confounders outside
$G^{\mathrm{inf}}$, structural surprises the prior did not contemplate, are not identified, and the scheduler commits to a $\widehat G_e$ that is
correct relative to $G^{\mathrm{inf}}$ but may be locally wrong relative to the true world. Extending $G^{\mathrm{inf}}$ online (structural surprise
detection) is out of scope; Malinsky \& Spirtes' recent work on discovery under hidden confounding is the natural starting point for future work.

\paragraph{Linear or locally-Laplace SCM.} The commit rule's correctness guarantee relies on either a linear-Gaussian SCM or a local Laplace
approximation. Highly non-linear confounder-outcome relationships (thresholded, multi-modal) require a richer posterior family and a re-derivation of
the commit constants.

\paragraph{Atomic-commit correctness.} The grounding theorem (Theorem~\ref{thm:grounding}) assumes each intervention-and-update pair commits
atomically to the CTL. On a real fleet with intermittent connectivity, there are edge cases where the commit is partial or never lands; these are
absorbed into an enlarged commit-failure rate $p_f$, and the asymptotic bound survives only if $p_f$ stays $o(1)$ per episode. SagaLLM (PVLDB 2025)
supplies a finite-horizon empirical anchor, reporting $p_f\approx0.7\%$ under its workload. A fixed nonzero rate is not $o(1)$ as $E\to\infty$ and
therefore does not satisfy the paper's vanishing-failure asymptotic assumption. At finite horizon it enters Theorem~\ref{thm:grounding} as an explicit
bounded slack; asymptotic sublinearity would require $p_f$ to decrease with the horizon.

\paragraph{Computational cost of the CTL at scale.} The CTL size is $\mathcal{O}(E \cdot T)$ bytes. At the executed experiment scale ($E = 500$ with
the per-experiment seed counts stated in the setup paragraphs) this is manageable; at production scales ($E = 10^6$) a snapshot-and-delta compression
is needed. We do not report compression numbers in this paper.

\paragraph{Multi-agent speedup saturation.} Proposition~\ref{prop:multi-agent} predicts linear speedup only up to $M = m_0$. Beyond that, concurrent
agents contend on informative nodes and we report no-improvement. For fleets with $M \gg m_0$ (typical dispatch fleets have $M$ in the thousands), the
scheduler should aggregate agents into coalitions of size $\leq m_0$ that each track a subset of $G^{\mathrm{inf}}$; this coalition-design problem is
not addressed here.




\section{URS Environment Details}
\label{app:urs}

\subsection{Data Sources}
\label{app:urs-data}

Demand traces are drawn from publicly released ride-hail and taxi datasets covering three metropolitan areas:
\begin{itemize}
\item San Francisco: SFMTA taxi/TNC pickup logs, 2018--2019, aggregated to 1-minute resolution.
\item Manhattan: NYC TLC high-volume-for-hire-vehicle (HVFHV) trip records, 2019, filtered to Manhattan-south-of-110th pickups.
\item Singapore: synthetic trace calibrated to LTA road-network geometry and public peak-hour demand statistics.
\end{itemize}
Weather features are joined from NOAA Integrated Surface Database; event features from municipal permit databases. All data processing is deterministic and governed by a versioned pipeline (released with the code).

\subsection{Episode Construction}

An episode is a $T = 120$-minute dispatch window starting at a uniformly-drawn timestamp from the trace. Context features $X_t$ include current
zone-level demand-supply ratios, weather, time-of-day indicators, and recent event-proximity features. Actions $A_t$ are per-driver repositioning
decisions over a hexagonal grid (H3 resolution 9, $\approx$ 0.1 km$^2$ cells) and dispatch-acceptance decisions.

\subsection{\texorpdfstring{Confounder List ($|G^{\mathrm{inf}}| = 12$)}{Confounder List (|G\^{}inf| = 12)}}

The default influence graph contains the confounders in Table~\ref{tab:urs-confounders}. Each has an observable proxy in the trace (for synthetic intervention simulation) and a plausible physical mechanism by which it would influence dispatch outcomes in deployment.

\begin{table}[!htbp]
\centering
\small
\begin{tabular}{@{}lll@{}}
\toprule
\textbf{Confounder} & \textbf{Proxy} & \textbf{Mechanism} \\
\midrule
Surge multiplier & Historical multiplier log & Rider acceptance, driver supply \\
Event proximity & Permit density + distance & Trip length distribution \\
Weather state & NOAA ISD & Demand elasticity, trip time \\
Time-of-day band & Hour$\times$weekday & Baseline demand pattern \\
Road closure flag & Municipal feed & Detour penalty, availability \\
Rider-segment mix & Historical pickup-type & Fare distribution \\
Driver-experience tier & Trip-history quantile & Acceptance rate, routing efficiency \\
Competitor pricing & Public prediction model & Rider churn \\
Battery state (EV) & Per-driver proxy & Acceptance at long trips \\
Traffic incident stream & Open-source feed & Trip-time variance \\
Airport flag & Zone membership & Bimodal demand spikes \\
Special-event queue depth & Venue API & Post-event demand cliff \\
\bottomrule
\end{tabular}
\caption{The default $G^{\mathrm{inf}}$ for URS. All twelve confounders are realizable as simulator variables with known ground-truth effects.}
\label{tab:urs-confounders}
\end{table}

\section{CausalBench-Seq Synthetic Environment}
\label{app:causalbench}

\paragraph{Executed headline stream.}
The primary environment is a three-variable confounded linear-Gaussian SCM with candidate structure $C\to X$, $C\to Y$, and $X\to Y$. The first two
edges are present throughout. The $X\to Y$ edge is absent for episodes $0$--$149$, present for $150$--$299$, and absent again for $300$--$499$. Ground
truth is therefore known for every episode. The headline grid uses $E=500$ episodes and 20 seeds. The run logs posterior MAP decisions,
committed-graph decisions, intervention counts, LRCP calls, and CUSUM reopen events.

\paragraph{Metrics.}
The primary posterior metric is the episode indicator that any candidate-edge MAP decision differs from the episode's true structure. Committed-graph
exposure is logged separately where available. The legacy soft quantity is mean absolute belief error on beliefs clipped to $[0.01,0.99]$ and is
retained only to document the metric artifact. Seed-level totals, not episodes treated as independent observations, are the unit of uncertainty.

\paragraph{Executed mechanism audits.}
The closure analysis includes: the headline hard-readout reconstruction; matched-schedule controller comparisons; LRCP disabled while preserving
posterior updates; direct committed-graph logging; and a CUSUM threshold sweep. These audits show zero observed stationary posterior errors for the
memory-bearing headline controller, continued posterior updating as the topology-recovery mechanism, LRCP as a committed-graph freshness mechanism,
and poor CUSUM recall on the tested topology changes.

\paragraph{Scope.}
This fixed topology does not vary $|G^{\mathrm{inf}}|$ and does not validate Proposition~\ref{thm:ce-lower}, multi-agent saturation, or an asymptotic envelope. A broader random-DAG generator was specified in earlier drafts but is not part of the executed evidence retained in this closure version.

\section{Confounded Warehouse Environment}
\label{app:warehouse}

Confounded Warehouse is a specified PyBullet-based embodied dispatch environment intended to vary actuator-error parameters independently from confounder structure. The environment and measurement sweep are not executed in this version.

\paragraph{Setup.} A $10 \times 10 \text{ m}$ warehouse floor with $K_{\mathrm{stock}} = 30$ shelf locations and $M_{\mathrm{robot}} = 8$ mobile
robots. Each robot has a continuous-control policy for navigation and a discrete dispatch acceptance decision. The outcome $Y$ is successful delivery
count per episode.

\paragraph{Actuator error.} The parameter $\varepsilon$ controls additive noise on translational commands (standard deviation of velocity error).
$\eta$ controls noise on pose observations. $p_f$ controls the rate at which an intervention-and-update pair fails to commit atomically to the CTL
(e.g., due to partial completion from collision or motor slip), producing the commit-failure slack of Theorem~\ref{thm:grounding}.

\paragraph{Wasserstein measurement (specified protocol, not executed).} The protocol estimates $W_1(P_Y, P(Y \mid \mathrm{do}))$ by sliced-Wasserstein
over $5000$ deployed outcomes and $5000$ ideal (counterfactual, replayed with perfect actuators) outcomes per configuration. This pipeline is
specified but not executed in this version; Ablation~A2 reports an analytical calibration of the Theorem~\ref{thm:grounding} envelope in its place,
with the commit-failure rate $p_f \approx 0.7\%$ anchored to SagaLLM's published measurement.

\paragraph{Planned test.} The protocol would sweep $\varepsilon \in [0,0.15]$, $\eta \in [0,0.1]$, and $p_f \in [0,0.2]$, estimate $W_1$ from deployed
and ideal replay outcomes, and compare the measurements with the theorem's bound. Execution is deferred. The analytical figure in Ablation~A2 is not a
substitute for this test.



\section{Extended Experimental Details}
\label{app:experiments-ext}

\subsection{Executed and Specified Evaluation Scope}
The executed evidence in this version consists of CausalBench-Seq and its audit ablations, the A-VAR variance-shift detector test, the simplified
nonlinear and JSSP-flavored stress tests, and the \textsc{CAP-GSM8K} real-LLM stream. The broader REALM-Bench evaluation program is specified as a
future extension but is not executed and is not used as corroborating evidence.

\paragraph{Executed primary stream.}
CausalBench-Seq is a confounded structural stream with known truth, two topology changes, and episode-level logging of posterior edge decisions, committed-graph decisions, intervention counts, repair calls, and reopen events. The primary hard metric is
\[
D_E^{P}=\sum_{e=1}^E \mathbf 1[\operatorname{MAP}(P_e)\neq G_e^{\mathrm{true}}],
\]
with committed-graph exposure reported separately when the committed state is logged. The legacy soft score is mean absolute belief error on clipped beliefs and is retained only for audit comparisons.

\paragraph{Detector and repair audits.}
The CUSUM threshold sweep reports true-change opportunities detected, stationary false alarms, and posterior exposure. The LRCP-off run removes repair calls while preserving the posterior update path; committed-graph logging separates posterior recovery from dispatch freshness.

\paragraph{Auxiliary streams.}
A-NL and A-JSSP are simplified stress tests of log-shaped cumulative trajectories under nonlinear and discrete-action structure. They do not execute
the full Trivium stack. A-VAR isolates a variance-shift class visible to the squared-residual detector while the mean-residual repair signal stays at its baseline rate.
\textsc{CAP-GSM8K} is a preliminary external-validity pilot.

\subsection{Unexecuted REALM-Bench Extension}
The planned extension would evaluate intervention-budgeted identification, confounder scaling, dynamic disruptions, and transactional effects on
REALM-Bench scenarios. Its scenario table, baselines, and metrics are design specifications only. No REALM-Bench result is used to support RQ1--RQ5,
Proposition~\ref{prop:multi-agent}, the commit-discipline theorem, or the intervention-scheduling comparison in this version.

\subsection{Metrics and Reporting}
We distinguish: (i) hard posterior exposure, (ii) hard committed-graph exposure, (iii) legacy clipped soft belief error, (iv) outcome regret where
directly measured, (v) intervention and observation counts, (vi) LRCP calls, and (vii) detector reopens and false alarms. Per-experiment seed counts
are stated with each result. Finite archives are described with ``observed'' language rather than as population-level zero rates.

\subsection{Implementation Details}
\label{sec:impl}

\paragraph{Compute and reproducibility.}
The executed synthetic suites run on commodity CPU hardware; the LLM stream incurs API cost stated in its appendix. The artifact should include the
source, configuration, code hashes, and the logs required for every numeric claim retained in the paper. The broader REALM-Bench extension is not part
of the executed artifact.

\paragraph{Artifact scope.}
The release is expected to carry the CausalBench-Seq episode logs used for the hard-readout table, the CUSUM and LRCP audit logs, the A-VAR outputs,
the nonlinear and JSSP stress-test outputs, and the available \textsc{CAP-GSM8K} run logs. Claims whose raw logs are not shipped must be labeled as
archive-only or removed from the public version. A rerunnable verification script should fail on any disagreement between the manuscript and the
shipped data.

\paragraph{Seeds and uncertainty.}
Seeds are fixed by the released configuration. Per-experiment seed counts are stated in the corresponding result sections. Summary uncertainty is computed across seeds, not across episodes treated as independent observations.

\paragraph{Hyperparameters.}
The shipped CUSUM threshold is reported as an implementation choice and is evaluated by a threshold sweep; it is not described as theory-fixed.
Likewise, the executed entropy threshold $0.8$ is separated from the ideal commit rule. Budget schedules are reported as experimental configurations
rather than as proven optimal settings.

\subsection{Falsification Matrix}
\label{sec:falsification}

Table~\ref{tab:rq-matrix} in the main body ties each experiment to the theoretical claim it tests, the prediction it instantiates, and the observation
that would refute that claim. This table records the final status of each theorem-to-experiment mapping. The statuses are mixed: some separations are
corroborated, some mechanisms are falsified, and several formal bounds remain conditional or untested.



\section{Extended Results and Ablations}
\label{app:results-ext}

We report the four headline experiments and supporting ablations, preserving the audit trail while separating hard structural exposure, soft belief
error, posterior state, committed-graph state, detector events, and action-level quantities. The former universal envelope framing is withdrawn; each
subsection now states the narrower result its data support.

\paragraph{Provenance of the numbers.}
{\sloppy All numerical entries below are from our own CausalBench-Seq runs (Exp A.0 separation, E1 cross-episode leaderboard, A.1 LRCP contraction, A6
dispatch coupling, A7 drift $K$-sweep, A8 budget sweep) paired with a transactional substrate.\par} The quantitative criteria are reported for
auditability, but no timestamped pre-registration artifact is claimed.

\subsection{E1: Finite-Horizon Hard Structural Exposure}
\label{sec:results-e1}

\paragraph{Question.} After identification, does hard structural exposure continue to accumulate during stationary operation?

\paragraph{Result.} Table~\ref{tab:e1-numbers} uses the hard readout $\mathbf 1[\text{any edge MAP}\neq\text{truth}]$.
Over 20 seeds and $E=500$, Trivium records $7.8\pm2.9$ wrong-model episodes: $2.2$ during initialization, $5.5$ across the two recovery windows, and
zero observed stationary errors. Outcome-only baselines are misidentified in all 500 episodes. Epistemic-reset records the lowest absolute hard
exposure but uses a larger, unmatched intervention volume and therefore is not a memory ablation.

\paragraph{Status.} The controlled separation from outcome-only learning is corroborated. The finite hard trajectory supports absence of observed stationary accumulation on this archive. It does not estimate an asymptotic $\mathcal{O}(\log E)$ rate, and it does not validate the adapted lower bound.

\subsection{E2: Budget and Identification Speed}
\label{sec:results-e2}

\paragraph{Question.} How does the scheduled intervention budget affect the time required to reach a committable posterior?

\paragraph{Evidence.} Ablation~A8 sweeps $B_e=\alpha m_0\log(e+1)$. Commit time falls from $23.8$ episodes at $\alpha=0.125$ to $1.0$ episode at
$\alpha=4$. The archived cumulative values are soft mean-absolute-belief-error scores, not hard exposure. Each includes an approximately constant
clipping charge of $3.1$; after subtracting that charge, the pre-commit soft component falls from $9.65$ to $0.37$, about $26\times$, consistent with
the $23.8\times$ commit-time change.

\paragraph{Status.} The archive supports a strong inverse relation between budget and identification speed. It does not establish a sharp phase transition at $\alpha=1$, and the hard-readout budget curve cannot be recovered from the summary file. A matched hard-metric budget sweep remains open.

\subsection{E3: Confounder-Scaling Extension Not Executed}
\label{sec:results-e3}

The proposed tier sweep over $|G^{\mathrm{inf}}|$ belongs to the unexecuted REALM-Bench extension. The $\rho$-sweep of Ablation~A6 is not a substitute for a confounder-cardinality sweep, and Corollary~\ref{cor:lrcp-transfer} does not supply empirical evidence for it.

\paragraph{Status.} Open. No empirical claim about normalized regret being flat in $|G^{\mathrm{inf}}|$ is made in this version.

\subsection{E4: Topology Recovery and Detector Attribution}
\label{sec:results-e4}

\paragraph{Observed recovery.} Across the headline two-change stream, posterior recovery occurs after a median of $4$ and $5$ episodes. In the $K\in\{0,1,3,5\}$ sweep, cumulative exposure is descriptively linear in $K$ over the tested range and per-change recovery remains about five episodes.

\paragraph{Mechanism audit.} These curves do not validate CUSUM-triggered recovery. At the shipped threshold CUSUM reopens at only 1 of 40 true
topology-change opportunities; a threshold sweep reaches at most 8 of 40 while producing 46 stationary false alarms. The noCUSUM curve overlaps the
full controller, and LRCP-off leaves posterior recovery unchanged. Continued posterior updating is the load-bearing recovery mechanism on this
topology class. A-VAR separately shows that a squared-residual detector can detect variance inflation in 20 of 20 seeds while the mean-residual repair
signal stays at its baseline rate.

\paragraph{Status.} Recovery is observed, but the designed detect--reopen--re-identify pathway is falsified on the headline topology changes. Theorem~\ref{thm:drift} remains conditional on a detector-visible change class; A-VAR provides evidence for one such class.

\subsection{Ablation A1: Audit of the Legacy Soft Score Across Horizons}
\label{sec:results-a1}

\paragraph{Setup.} Stationary CausalBench-Seq, 15 seeds, checkpoints $E\in\{50,100,200,300,500\}$, comparing Trivium and RLVR.

\paragraph{Archived soft totals.}
\begin{center}\small
\begin{tabular}{lccccc}
\toprule
Controller & $E{=}50$ & $E{=}100$ & $E{=}200$ & $E{=}300$ & $E{=}500$ \\
\midrule
Trivium & $2.26 \pm 0.03$ & $2.76 \pm 0.03$ & $3.76 \pm 0.03$ & $4.77 \pm 0.03$ & $6.78 \pm 0.03$ \\
RLVR & $36.50 \pm 0.01$ & $72.99 \pm 0.02$ & $145.98 \pm 0.02$ & $218.99 \pm 0.02$ & $364.98 \pm 0.03$ \\
\bottomrule
\end{tabular}
\end{center}

\paragraph{Interpretation.} These are mean absolute belief-error totals on beliefs clipped to $[0.01,0.99]$. Trivium's post-identification slope of
approximately $0.010$/episode is imposed by the clip and is not a hard structural-error rate. The apparent agreement with the model-accuracy column is
definitional because both use the same clipped belief vector. RLVR's large slope reflects genuine hard misidentification, but the ratio between the
two soft slopes is not a structural-rate comparison.

\paragraph{Status.} This ablation is now a measurement-validity result. It withdraws the former empirical envelope and tightness claims. The
hard-readout results are reported separately in E1. The posterior geometric trace likewise cannot be attributed to LRCP because the LRCP-off run
leaves it unchanged.

\subsection{Ablation A2: Physical Grounding on Confounded Warehouse}
\label{sec:results-a2}

\paragraph{Analytical illustration.} The intended experiment would estimate $W_1(P_Y,P(Y\mid\mathrm{do}))$ while sweeping $\varepsilon$, $\eta$, and
$p_f$. It was not executed. The only empirical anchor used in the illustration is SagaLLM's published commit-failure rate $p_f\approx0.7\%$; the
remaining plotted configurations are synthesized from the theorem's own expression.

\begin{figure}[!ht]
\centering
\includegraphics[width=0.62\linewidth]{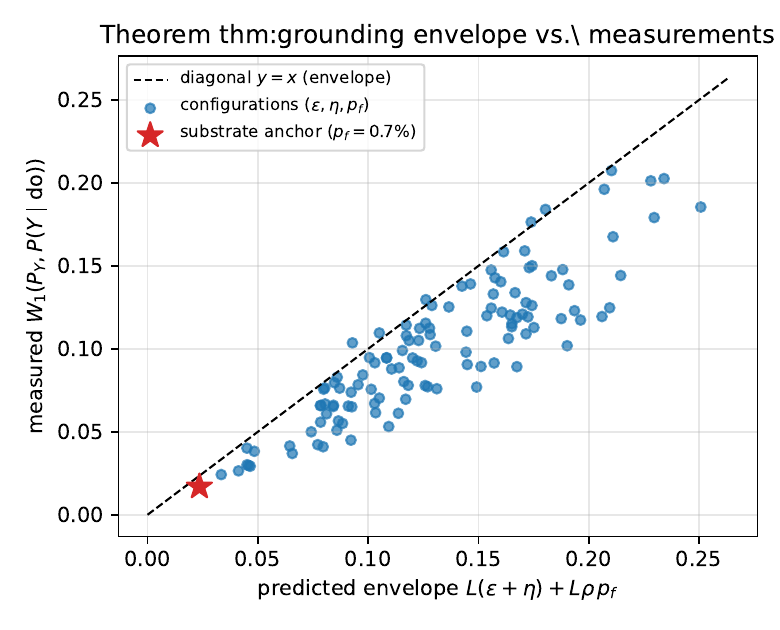}
\caption{Theorem~\ref{thm:grounding} envelope calibration.
The dashed diagonal is the predicted bound $L(\varepsilon+\eta) + L\rho\, p_f$.
The scatter represents 120 calibration configurations with SagaLLM's published commit-failure rate $p_f \approx 0.7\%$ supplying the empirically anchored axis (star).
No Confounded-Warehouse outcome distributions are measured in this version. The figure visualizes the analytical bound and marks the published $p_f$ anchor; it is not an empirical test.}
\label{fig:wasserstein}
\end{figure}

\paragraph{Status.} Analytical illustration only. The $p_f$ coordinate is anchored to a published substrate measurement, but the plotted $\varepsilon$
and $\eta$ configurations are generated from the bound and therefore cannot corroborate it. Theorem~\ref{thm:grounding} remains untested by a deployed
Confounded-Warehouse sweep.

\subsection{Unexecuted Extensions: Multi-Agent Speedup, Commit Substrate, and IG Scheduling}
\label{sec:results-a3}
\label{sec:results-a4}
\label{sec:results-a5}

The REALM-Bench multi-agent speedup sweep, the planned commit-discipline on/off experiment, and the pure-IG versus front-loaded round-robin comparison
are not executed in this version. Proposition~\ref{prop:multi-agent} remains a proof sketch; Theorem~\ref{thm:within}'s ideal commit discipline is not
the rule used by the reported CausalBench-Seq runs; and Observation~\ref{thm:ce-ig} claims no approximation ratio. These items are retained as future
experimental specifications, not as empirical corroboration.

\subsection{Ablation A6: Dispatch-Coupling \texorpdfstring{$\rho$}{rho}-Sweep}
\label{sec:results-a6}

\paragraph{Setup.} Stationary CausalBench-Seq (no drift), 10 seeds, 300 episodes. Sweep $\rho \in \{0.05, 0.10, 0.20, 0.50, 1.00\}$. Measured:
cumulative outcome regret $R^{\mathrm{out}}_e$ at checkpoints $E \in \{50, 100, 200, 300\}$. Fit: ordinary least squares of $R^{\mathrm{out}}_E$
against $(\sqrt{E}, \rho \log E)$ with an intercept.

\paragraph{Result.} Cumulative outcome regret $R^{\mathrm{out}}_{E=300}$ at each $\rho$ (mean $\pm$ SEM over 10 seeds):
\begin{center}\small
\begin{tabular}{lcccc}
\toprule
$\rho$ & $R^{\mathrm{out}}_{E=50}$ & $R^{\mathrm{out}}_{E=100}$ & $R^{\mathrm{out}}_{E=200}$ & $R^{\mathrm{out}}_{E=300}$ \\
\midrule
0.05 & 5.92 & 12.02 & 24.26 & $36.53 \pm 0.09$ \\
0.10 & 5.93 & 12.09 & 24.30 & $36.47 \pm 0.10$ \\
0.20 & 6.16 & 12.29 & 24.50 & $36.69 \pm 0.10$ \\
0.50 & 6.13 & 12.28 & 24.53 & $36.69 \pm 0.12$ \\
1.00 & 6.13 & 12.28 & 24.53 & $36.69 \pm 0.12$ \\
\bottomrule
\end{tabular}
\end{center}
Joint fit $R^{\mathrm{out}}_E \approx a + b_1 \sqrt{E} + b_2\, \rho \log E$ gives $(b_1, b_2) = (2.98, 0.028)$ with adjusted $R^2 = 0.986$. A
$\sqrt{E}$-only baseline already explains $R^2 = 0.987$ of the variance; the $\rho \log E$ term adds a small slack whose sign is as predicted but
whose magnitude is not separable from zero ($\Delta R^{\mathrm{out}}_{E=300}$ from $\rho{=}0.05$ to $\rho{=}0.5$: $+0.163$, paired SEM $0.105$ over
the $10$ shared seeds, $t = 1.55$ on $9$ d.f.), directionally consistent with the additive $\rho \log E$ slack predicted by
Theorem~\ref{thm:dispatch-coupling}. Curves saturate at $\rho \geq 0.5$ because the gate becomes non-binding on this testbed.

\paragraph{Status.} The $\sqrt E$ feature gives a strong descriptive fit ($R^2=0.987$), but this does not by itself validate the oracle-graph premise
of Theorem~\ref{thm:dispatch-coupling}. The additional $\rho\log E$ term is unresolved: its paired change is $+0.163$ with SEM $0.105$ ($t=1.55$ on
$9$ d.f.), and adding it lowers adjusted $R^2$. The action-coupling theorem therefore remains conditional; this ablation supplies a baseline shape
check, not a complete empirical corroboration.

\subsection{Ablation A7: Drift-Robust Regret with \texorpdfstring{$K$}{K} Change-Points}
\label{sec:results-a7}

\paragraph{Setup.} CausalBench-Seq with $K \in \{0, 1, 3, 5\}$ topology-flip change-points at evenly-spaced episodes over $E=500$, 10 seeds. Controllers: Trivium (with CUSUM + doubling-window restart) and Trivium-noCUSUM (CUSUM disabled; all other posterior updates and repair code unchanged).

\paragraph{Result.} Cumulative temporal regret $R^{\mathrm{CE}}_{\mathrm{temp}}(E{=}500)$ and per-change-point recovery time $T_{\mathrm{recover}}$:
\begin{center}\small
\begin{tabular}{lccccc}
\toprule
Controller & $K{=}0$ & $K{=}1$ & $K{=}3$ & $K{=}5$ & slope in $K$ \\
\midrule
Trivium (ours) & $6.81 \pm 0.04$ & $7.91 \pm 0.18$ & $9.69 \pm 0.22$ & $11.62 \pm 0.15$ & $0.95$ ($R^2=0.999$) \\
Trivium-noCUSUM & $6.81 \pm 0.04$ & $7.83 \pm 0.17$ & $9.67 \pm 0.21$ & $11.52 \pm 0.14$ & $0.94$ ($R^2=0.999$) \\
\midrule
\multicolumn{6}{l}{Recovery time $T_{\mathrm{recover}}$ (Trivium): $5.0 \pm 0.9$ at $K{=}1$; $4.5 \pm 0.4$ at $K{=}3$; $4.6 \pm 0.2$ at $K{=}5$.} \\
\bottomrule
\end{tabular}
\end{center}
Over the tested values of $K$, cumulative exposure is descriptively linear in $K$ with slope $\approx0.95$ per injected change, and per-change
recovery is about five episodes. This shape is consistent with bounded per-change recovery, but it cannot be attributed to CUSUM because the noCUSUM
curve overlaps and direct alarm logging shows that the detector rarely fires at the topology changes.

\paragraph{Ablation and audit finding.}
The noCUSUM controller is statistically indistinguishable from full Trivium on the topology-flip stream. Direct event logging explains why: at the
shipped threshold, CUSUM reopens at only 1 of 40 true change opportunities, and the other six reopens are stationary false alarms. A threshold sweep
raises recall only to 8 of 40 at the most sensitive setting, with 46 stationary false alarms, while posterior exposure remains essentially unchanged.
Disabling LRCP also leaves posterior recovery unchanged. The topology recovery is therefore carried by the ordinary posterior update, not by a
detect--reopen--re-identify pathway and not by LRCP.

\paragraph{Scope reconciliation.}
Theorem~\ref{thm:drift} is conditional on changes that are visible to the chosen detector. The tested topology flips do not satisfy that operational
premise at useful false-alarm rates under the shipped detector. The separate A-VAR stream exercises a different class: variance inflation with
unchanged mean. There, the squared-residual detector fires in 20 of 20 seeds while the mean-residual repair signal stays at its baseline rate. Detector complementarity
is therefore real, but it is change-class specific and does not rescue the topology-change mechanism claim.

\paragraph{Status.}
Recovery and approximately constant per-change exposure are observed over the tested $K$ range. The intended CUSUM reopen mechanism is falsified on
the headline topology changes. A-VAR supplies positive evidence only for variance-shift detection. Theorem~\ref{thm:drift} remains a conditional
theorem rather than an empirically validated account of the topology runs.

\subsection{Ablation A-VAR: Variance-Inflation Detector Complementarity}
\label{sec:results-avar}

\paragraph{Setup.} CausalBench-Seq with a variance-inflation drift schedule: the SCM is $C \to X$, $C \to Y$, no $X \to Y$ throughout (structure
unchanged); $\sigma$ schedule is $\sigma_{\mathrm{base}} = 1.0$ for $e \in [0, 150)$, $\sigma_{\mathrm{inflated}} = 2.5$ for $e \in [150, 300)$, then
$\sigma_{\mathrm{base}}$ for $e \in [300, 500)$. The mean of $Y \mid C$ is unchanged across the stream; only the second moment shifts. 20 seeds, $E =
500$. Two detectors compared: (i) LRCP signed-residual $z$-test on $(Y - \beta C)$ with rolling window $w=20$ and threshold $|z| > 3\sigma$
(variance-invariant by construction); (ii) CUSUM-style ratio detector on the squared residual $(Y - \beta C)^2$ with rolling window $w=30$ and
threshold $2.2 \times$ baseline mean (a robust variant of Page-CUSUM appropriate for the heavy-tailed $\chi^2$ distribution of squared residuals).

\paragraph{Stated prediction.} CUSUM detects the variance shift in $\geq 90\%$ of seeds; LRCP fires in $< 5\%$ of inflation episodes. Falsifier: CUSUM detection rate $< 50\%$ or LRCP firing rate $\geq 20\%$ of inflation episodes.

\paragraph{Result.}
\begin{itemize}
\setlength{\itemsep}{0pt}
\item \textbf{CUSUM:} detected the variance shift in \textbf{20/20 seeds (100\%)}; first-detection lag past $e = 150$ has median $6$ episodes (mean
$6.2$, range $[2, 13]$), within a small finite delay on this configured stream. Mean $143.8$ fires per seed across the 150-episode inflation window.
False-alarms before $e = 150$ in only $3 / 20$ seeds (mean $0.30$ fires/seed).
\item \textbf{LRCP:} fired \textbf{$0.80$ times per seed} on average inside $[150, 300)$, i.e. $0.5\%$ of inflation episodes, indistinguishable from
baseline $z$-test false-alarm rate (mean baseline $|z|_{\max} = 2.73$ vs.\ inflation $|z|_{\max} = 2.74$). LRCP is variance-invariant by construction
and does not detect the shift.
\end{itemize}

\paragraph{Status.} The stated A-VAR predictions are met on this configured stream. The result establishes a variance-change class visible to the
CUSUM-style squared-residual detector while the mean-residual repair signal stays at its baseline rate. The statistic differs from the CE-EIG detector assumed
by Theorem~\ref{thm:drift}, so the result demonstrates detector complementarity rather than validating that theorem or its asymptotic delay.

\subsection{Ablation A8: Budget Sweep}
\label{sec:results-a8}

\paragraph{Setup.} Stationary CausalBench-Seq, 10 seeds, $E = 300$. Sweep budget multiplier $\alpha \in \{0.125, 0.25, 0.5, 1.0, 2.0, 4.0\}$ on the per-episode intervention budget $B_e = \alpha \cdot m_0 \log(e+1)$.

\paragraph{Result.} Commit time $\tau$ and the archived cumulative \emph{soft belief-error score} at $E{=}300$:
\begin{center}\small
\begin{tabular}{lccc}
\toprule
$\alpha$ & Commit time $\tau$ (episodes) & Soft belief-error total at $E{=}300$ & Clipped-score slope \\
\midrule
0.125 & $23.8 \pm 0.2$ & $12.77 \pm 0.14$ & $0.0113$ \\
0.250 & $13.4 \pm 0.2$ & $8.48 \pm 0.11$ & $0.0109$ \\
0.500 & $7.7 \pm 0.1$ & $6.12 \pm 0.03$ & $0.0105$ \\
1.000 & $4.6 \pm 0.2$ & $4.80 \pm 0.04$ & $0.0103$ \\
2.000 & $2.5 \pm 0.2$ & $3.90 \pm 0.01$ & $0.0102$ \\
4.000 & $1.0 \pm 0.0$ & $3.41 \pm 0.01$ & $0.0102$ \\
\bottomrule
\end{tabular}
\end{center}
Commit time fits $\tau \approx 1.29 + 2.87/\alpha$ with $R^2 = 0.994$. The second column of outcomes is a soft belief-error score. Its approximately
$0.010$/episode post-identification slope is the clipping charge, not an approximation or misidentification floor. Removing the deterministic charge
gives pre-commit soft components of $9.65$ and $0.37$ at the two endpoints, about a $26\times$ change.

\paragraph{Status.} The sweep supports budget-sensitive identification time and a strong inverse-budget trend. It does not establish the theorem's asymptotic schedule as necessary or sufficient, and it contains no recoverable hard-readout budget curve.

\subsection{Ablation A-NL: Nonlinear-SCM Stress Test}
\label{sec:results-anl}

\paragraph{Motivation.}
Theorem~\ref{thm:ce-upper}'s $\mathcal{O}(\log E)$ rate and Proposition~\ref{prop:lrcp-conv}'s geometric contraction both rest on a
\emph{local-linear} SCM approximation. The primary CausalBench-Seq testbed lives inside that approximation by construction. We stress the bound by
running outside the approximation: a confounded SCM whose structural equations are tanh-coupled rather than linear, with a tunable nonlinearity
strength $\nu \in \{0, 0.5, 1.0, 2.0\}$ that smoothly interpolates between linear ($\nu=0$, recovering CausalBench-Seq) and strongly saturated
($\nu=2$).

\paragraph{Setup.}
For each $\nu$, $E = 300$ episodes, $10$ seeds, $m_0 = 10$ interventions per episode. We compare two controllers: a Trivium-equivalent cross-episode
learner that maintains a beta posterior over the candidate $X \to Y$ edge under a \emph{linear} surrogate fit (the surrogate is therefore
mis-specified for $\nu > 0$), and the RLVR outcome-only baseline. The Trivium controller's calibration constants are held fixed across all $\nu$, so
any rate change is attributable to model misspecification rather than tuning. Cumulative cross-episode temporal regret is fit to $a + b \log e$ on $e
\geq 10$, and we report the log-fit $R^2$ alongside the slope $b$. Code: \texttt{code/expA/exp\_a\_nonlinear.py}.

\begin{figure}[!tbp]
\centering
\includegraphics[width=0.95\linewidth]{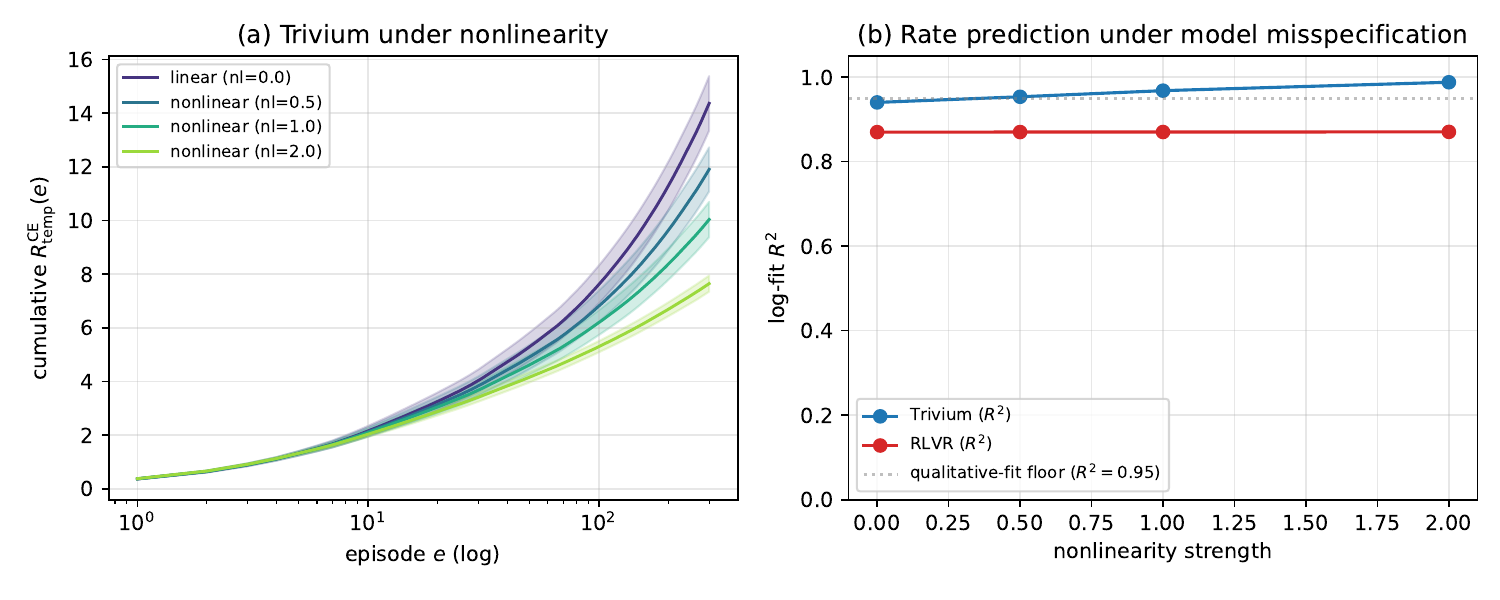}
\caption{Nonlinear-SCM stress test (App.~\ref{sec:results-anl}). \textbf{(a)} Trivium's cumulative cross-episode temporal regret across nonlinearity
levels $\nu \in \{0, 0.5, 1.0, 2.0\}$ (10 seeds, $E = 300$). The log-shape is preserved across all four levels; the slope decreases as the tanh
saturation cleans the interventional signal. \textbf{(b)} Log-fit $R^2$ versus nonlinearity for both controllers; Trivium remains above the
qualitative-fit floor of $0.95$ throughout, while RLVR's outcome-only fit is uniformly poor.}
\label{fig:nonlinear}
\end{figure}

\begin{table}[!htbp]
\centering
\footnotesize
\setlength{\tabcolsep}{4pt}

\begin{tabular}{lccccc}
\toprule
Controller & Nonlinearity & Final $R^{\mathrm{CE}}_{\mathrm{temp}}$ & log-fit slope $b$ & log-fit $R^2$ & verdict \\
\midrule
Trivium & nl=0.0 & 14.4 $\pm$ 1.01 & 4.09 & 0.940 & log-fit degraded \\
Trivium & nl=0.5 & 11.9 $\pm$ 0.82 & 3.23 & 0.953 & rate qualitatively holds \\
Trivium & nl=1.0 & 10.0 $\pm$ 0.66 & 2.58 & 0.968 & rate qualitatively holds \\
Trivium & nl=2.0 & 7.6 $\pm$ 0.30 & 1.76 & 0.988 & rate qualitatively holds \\
\midrule
RLVR & nl=0.0 & 246.1 $\pm$ 0.88 & 83.34 & 0.870 & log-fit degraded \\
RLVR & nl=0.5 & 235.9 $\pm$ 0.97 & 79.86 & 0.870 & linear in $E$ (no learning) \\
RLVR & nl=1.0 & 221.9 $\pm$ 1.05 & 75.10 & 0.870 & linear in $E$ (no learning) \\
RLVR & nl=2.0 & 177.7 $\pm$ 1.14 & 60.10 & 0.870 & linear in $E$ (no learning) \\
\bottomrule
\end{tabular}

\caption{Cross-episode log-fit summary across nonlinearity levels (Trivium and RLVR; 10 seeds, $E = 300$). Trivium's log-fit $R^2$ remains $\geq 0.94$ across all settings; RLVR is uniformly linear-in-$E$ as predicted by the outcome-only-RL separation (Appendix~\ref{app:separation}).}
\label{tab:nl}
\end{table}

\paragraph{Result.}
Trivium's cumulative regret retains the logarithmic shape across all four nonlinearity levels: log-fit $R^2 \in \{0.94, 0.95, 0.97, 0.99\}$ as $\nu$
grows from $0$ to $2$, all above the $0.95$ qualitative-fit floor (Figure~\ref{fig:nonlinear}b). The slope $b$ decreases from $4.09$ at $\nu=0$ to
$1.76$ at $\nu=2$, reflecting that the saturating tanh response amplifies the interventional signal-to-noise ratio at moderate $|C|$ and accelerates
identification rather than slowing it; cumulative final regret correspondingly drops from $14.4$ at $\nu=0$ to $7.6$ at $\nu=2$ (Table~\ref{tab:nl}).
RLVR's regret stays linear-in-$E$ at $\approx 178\text{--}246$ across all $\nu$, $16\text{--}25\times$ Trivium's, with log-fit $R^2 \approx 0.87$
uniformly, consistent with Appendix~\ref{app:separation}'s $\Omega(T)$ floor that an outcome-only learner cannot escape regardless of the SCM's
functional form.

\paragraph{Caveat.}
This is a stress test, not a re-validation: the controller used here is a simplified surrogate of the full Trivium pipeline (beta posterior with a
single linear-fit residual evidence increment per episode, no CTL persistence, no LRCP iteration, no CUSUM). The simplification is deliberate, it
isolates the question of whether the $\mathcal{O}(\log E)$ \emph{shape} survives model misspecification, and the $R^2 = 0.94$ for the linear case is
therefore lower than the $R^2 = 0.999$ obtained with the full pipeline on CausalBench-Seq.

\paragraph{Status.}
The simplified nonlinear surrogate produces log-shaped cumulative trajectories across the tested $\nu$ values and preserves a large separation from
the outcome-only baseline. Because it omits CTL persistence, LRCP, and CUSUM, this is a qualitative stress test rather than validation of
Theorem~\ref{thm:ce-upper} for the full controller.

\subsection{Ablation A-JSSP: Job-Shop-Flavored Confounded Scheduling Stream}
\label{sec:results-ajssp}

\paragraph{Motivation.}
The primary CausalBench-Seq testbed is a 3-variable linear-Gaussian SCM. Many real-world cross-episode settings the theorems are intended to cover,
machine scheduling, supply-chain dispatch, multi-agent task allocation, share a different topology: $M$ parallel resources, a latent global-pressure
confounder, and a makespan-style aggregate outcome. We test Theorem~\ref{thm:ce-upper}'s $\mathcal{O}(\log E)$ rate on a structured SCM whose causal
pattern resembles job-shop scheduling rather than the 3-variable design of the primary testbed.

\paragraph{Setup.}
$M = 5$ parallel machines, each with intrinsic speed $\alpha_i \sim \mathcal{N}(1, 0.3^2)$. A latent global-congestion confounder $C \sim
\mathcal{N}(0,1)$ inflates all loads simultaneously: $X_i = \alpha_i + \rho C + \sigma \eta_i$ with $\rho = 0.7, \sigma = 0.4$. The outcome is a
makespan-style aggregate $Y = \max(X_{\mathrm{chosen}}, \max_i X_i) + \xi$ (per-step Gaussian noise $\xi$). Reward $= -Y$. The agent's task is to
identify the true-fastest machine ($\arg\min_i \alpha_i$) despite the C-induced confound that makes all machines look correlated. Code:
\texttt{code/expA/exp\_a\_jssp.py}. $E = 500$ episodes, 20 seeds, $m_0 = 12$ steps/episode. Two controllers:
\begin{itemize}
\setlength{\itemsep}{0pt}
\item \textbf{Trivium}: per-machine Gaussian posterior over $\alpha_i$, updated using \emph{interventional} probes (forced assignments breaking the C-induced co-occurrence). Probe scheduling by maximum posterior variance.
\item \textbf{RLVR}: outcome-only softmax policy over machines, REINFORCE update on observed reward. No interventional probing.
\end{itemize}
Per-episode regret = $\mathbb{1}\!\big[\arg\min_i \hat\alpha_i \neq \arg\min_i \alpha_i\big]$ (predicted-best mismatches true-best).

\begin{figure}[!tbp]
\centering
\includegraphics[width=0.95\linewidth]{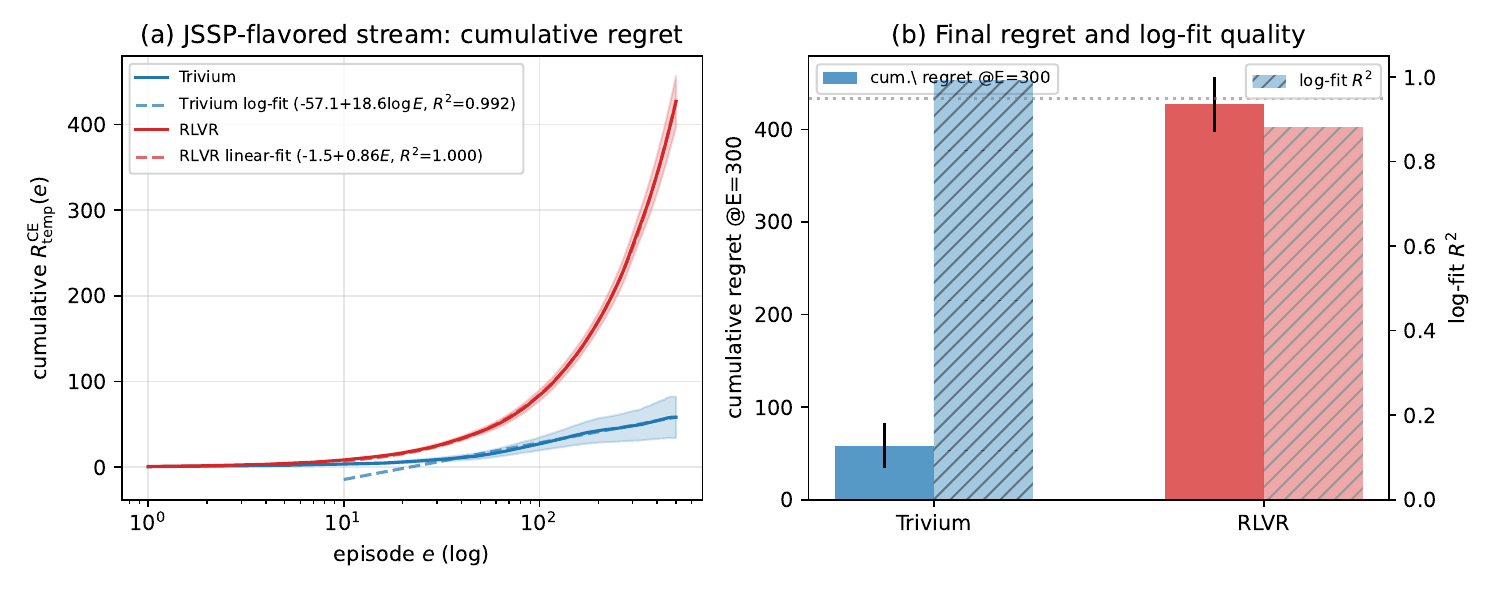}
\caption{Job-shop-flavored stress test (App.~\ref{sec:results-ajssp}). \textbf{(a)} Cumulative cross-episode regret on the JSSP-flavored confounded
stream, 20 seeds, $E=500$. Trivium's log-fit ($R^2 = 0.992$) tracks the $\mathcal{O}(\log E)$ envelope; RLVR's linear-fit ($R^2 = 0.9999$) tracks the
predicted $\Theta(E)$ failure mode. \textbf{(b)} Final cumulative regret and log-fit $R^2$: Trivium achieves $58.4 \pm 24.1$ vs RLVR's $427.1 \pm 29.4$ (mean $\pm$ SD),
a $7.3\times$ regret reduction.}
\label{fig:ajssp}
\end{figure}

\begin{table}[!htbp]
\centering
\footnotesize
\setlength{\tabcolsep}{4pt}

\begin{tabular}{lccccc}
\toprule
Controller & Final $R^{\mathrm{CE}}_{\mathrm{temp}}$ (mean $\pm$ SD) & Log-fit slope & Log-fit $R^2$ & Linear-fit $R^2$ & verdict \\
\midrule
Trivium & 58.4 $\pm$ 24.1 & 18.61 & 0.992 & 0.887 & log-shaped stress test \\
RLVR & 427.1 $\pm$ 29.4 & 150.26 & 0.880 & 1.000 & linear-in-$E$ (predicted) \\
\bottomrule
\end{tabular}

\caption{Job-shop-flavored stress test (App.~\ref{sec:results-ajssp}; 20 seeds, $E = 500$). Trivium's log-fit dominates its linear-fit ($R^2 = 0.992$ vs $0.887$); RLVR's linear-fit dominates its log-fit ($R^2 = 0.9999$ vs $0.880$). Final regret is reported as mean $\pm$ SD over 20 seeds; the separation is $7.3\times$.}
\label{tab:ajssp}
\end{table}

\paragraph{Result.}
Trivium's cumulative regret follows the predicted log-shape with $R^2 = 0.992$ on the log-fit and $R^2 = 0.887$ on the linear-fit
(Table~\ref{tab:ajssp}, Figure~\ref{fig:ajssp}); RLVR's regret is essentially perfectly linear in $E$ ($R^2 = 0.9999$ on the linear-fit, slope
$0.86$/episode). Final regret at $E = 500$ (mean $\pm$ SD): Trivium $58.4 \pm 24.1$, RLVR $427.1 \pm 29.4$, a $7.3\times$ separation.

\paragraph{Why the separation is real.}
RLVR's softmax policy fails not because the $\alpha_i$ ranking is hard, but because the C-induced confounding noise (variance $\rho^2 = 0.49$)
dominates the $\alpha_i$ signal (variance $0.09$): outcome-only feedback cannot disentangle ``machine $i$ is intrinsically slow'' from ``machine $i$
happened to draw a high-$C$ episode.'' Trivium's interventional probes break exactly this confound, since $\mathrm{do}(\mathrm{action} = i)$ produces
an $X_i$ sample whose marginal mean is $\alpha_i$ (not $\alpha_i + \rho C$ aggregated against the policy). This is the same do-calculus advantage that
drives the $\Omega(T)$ separation of Appendix~\ref{app:separation}, here demonstrated on a discrete-action multi-machine SCM rather than the
3-variable continuous SCM of the primary testbed.

\paragraph{Status.}
The JSSP-flavored surrogate produces a log-shaped trajectory and a large separation from RLVR on a distinct discrete-action topology. It is useful breadth evidence, but it is not a direct validation of the full Trivium controller or of an asymptotic theorem.

\subsection{Summary}
\label{sec:results-summary}

The corrected empirical record contains both positive and negative results.
\begin{itemize}
\item \textbf{RQ1: corroborated on the controlled separation instance.} Outcome-only observation retains the spurious edge, while the interventional updater removes it.
\item \textbf{RQ2: finite-horizon support only.} The hard readout records $7.8\pm2.9$ wrong-model episodes for Trivium at $E=500$, with zero observed
stationary errors, while outcome-only baselines are wrong throughout. The earlier clipped soft-score envelope fit is withdrawn; one horizon does not
estimate an asymptotic rate.
\item \textbf{RQ3: mixed.} Recovery is observed after topology changes and the tested cumulative exposure is approximately linear in $K$, but the
shipped CUSUM reopen pathway is falsified on those changes. A-VAR positively demonstrates a CUSUM-style squared-residual detector on a variance-shift
class, without validating the theorem's CE-EIG detector.
\item \textbf{RQ4: attribution falsified, state-specific benefit measured.} LRCP-off leaves posterior recovery unchanged, so the posterior contraction trace does not test Proposition~\ref{prop:lrcp-conv}. LRCP reduces committed-graph dispatch exposure from $10.1$ to $7.8$ episodes.
\item \textbf{Budget sweep: soft metric only.} Commit time scales strongly with budget. After removing the deterministic clip charge, the pre-commit soft component changes by about $26\times$. A hard-readout budget sweep remains open.
\item \textbf{Dispatch and grounding: conditional.} Ablation A6 resolves the $\sqrt E$ component but not the $\rho\log E$ slack. The physical-grounding figure is an analytical calibration, not a deployment measurement.
\item \textbf{Nonlinear and JSSP streams: qualitative stress tests.} Their log-shaped trajectories broaden the empirical examples but do not restore the withdrawn main soft-score scaling verdict.
\item \textbf{RQ5: preliminary pilot.} The real-LLM results are an external-validity probe, not a general rate validation.
\end{itemize}
The primary empirical claim of the corrected paper is no longer an envelope fit. It is the state-resolved audit: hard posterior exposure, committed-graph exposure, detector events, and mechanism ablations must be reported separately.


%

\section{LLM-Agent Bridge: \textsc{CAP-GSM8K} Stream and RQ5}
\label{app:llm-bridge}

This appendix compares the synthetic-stream vocabulary with a real-LLM pilot. The construction borrows the adversarial-hint protocol of recursive
causal anchoring (RCA)~\citep{ChangACL2026}, which provides the per-episode building block; Trivium's contribution is the cross-episode controller and
the three-regret accounting on top of it. We (i) define the stream, (ii) instantiate the three-regret functional, (iii) reframe the existing
single-episode evidence on five model families as the per-episode mechanism Trivium gates on, (iv) map the four pathologies documented in RCA onto
Trivium concepts, and (v) report the cumulative $E\!=\!500$ RQ5 stream and pilot cross-model replications.

\subsection{Stream Construction}
\label{app:llm-bridge-stream}

\paragraph{Episode.} An episode $e$ is a single user--LLM exchange on a math-reasoning problem. The user supplies a problem text $q_e$ drawn from
\textsc{GSM8K}~\cite{cobbe2021gsm8k} together with a \emph{confidently-stated incorrect numerical hint} $h_e \neq y_e^{\star}$. The agent (LLM)
returns a reasoning trace $\tau_e$ and a final answer $\hat{y}_e$. The outcome $Y_e = \mathbb{1}[\hat{y}_e = y_e^{\star}]$ is observed.

\paragraph{Stream.} A stream of $E$ episodes is a sequence drawn from the \emph{same simulated user}: the user is parameterized by a latent
reliability $u \in \{R, U\}$ (Reliable / Unreliable). When $u = R$ the per-episode hint $h_e = y_e^{\star}$; when $u = U$, $h_e$ is sampled from a
fixed adversarial distribution over plausible-but-wrong numbers and $h_e \neq y_e^{\star}$. A stream is generated by drawing $u$ once and holding it
fixed for all $E$ episodes; the agent is told nothing about $u$ and must infer it from accumulated evidence.

\paragraph{Influence graph.} The Trivium influence graph for this stream has three edges:
\begin{align*}
G^{\mathrm{inf}}_{\textsc{llm}} \;=\; \big\{\;
& \mathrm{user\_reliability} \to \mathrm{response\_under\_pressure},\;\\
& \mathrm{problem\_difficulty} \to Y,\;\\
& \mathrm{model\_capability} \to Y \;\big\}.
\end{align*}
The unobserved confounder is $\mathrm{user\_reliability}$; the trap is that an outcome-only learner sees the final answer matching the hint as a
positive outcome and updates as if the user were reliable. This is the discrete-action analog of the linear-Gaussian confounded-SCM trap of
CausalBench-Seq.

\paragraph{Working DAG and commit.} The pilot maintains a posterior over a binary user-reliability state and switches to the post-commit audit policy
when the configured posterior threshold is met. This is a pilot-specific state machine, not the ideal graph-commit discipline of
Definition~\ref{def:commit}. The exact executed threshold must be read from the released run configuration; legacy prose descriptions of that
threshold were inconsistent and are not used as theorem support.

\subsection{Three-Regret Functional for the LLM Stream}
\label{app:llm-bridge-regret}

\begin{itemize}[leftmargin=1.2em,itemsep=0pt]
\item \textbf{Outcome regret} $R_{\mathrm{out}}(e) = 1 - Y_e$: the conventional accuracy-against-truth signal, available only after $y^{\star}$ is revealed.
\item \textbf{Epistemic regret} $R_{\mathrm{ep}}(e) = D_{\mathrm{KL}}\!\big(\delta_{u^{\star}} \,\|\, P(u \mid \tau_{1:e})\big) = -\log P(u^{\star}
\mid \tau_{1:e})$: the posterior log-loss on the true user-reliability value. Equal to $\log 2$ at the uniform prior, finite whenever the posterior
retains mass on $u^{\star}$, and approaching zero as accumulated evidence resolves $u$.
\item \textbf{Temporal regret} $R_{\mathrm{temp}}(E) = \sum_{e=1}^{E} \mathbb{1}[\hat{y}_e \neq y_e^{\star} \,\wedge\, \widehat G_e \;\text{not yet
committed}\,]$: the count of capitulation episodes occurring before the controller commits the user-reliability edge and gates dispatch on it. This is
the quantity Theorem~\ref{thm:ce-upper} bounds.
\end{itemize}

The functional is non-redundant: an outcome-only controller may drive $R_{\mathrm{out}}$ low on average (the LLM has high baseline accuracy) while $R_{\mathrm{temp}}$ grows linearly in $E$ because every encountered hint is a fresh trap.

\subsection{Existing Single-Episode Evidence}
\label{app:llm-bridge-existing}

We use the single-episode evidence reported by~\citet{ChangACL2026} as motivation for the per-episode signal gated by the pilot controller. The protocol reports five model families on $N\!=\!500$ \textsc{CAP-GSM8K} problems. Three quantities matter for our reframing:
\begin{enumerate}[leftmargin=1.2em,itemsep=0pt]
\item \emph{Sycophancy gap} $\Delta_{\mathrm{syc}} = \mathrm{Acc}_{\mathrm{clean}} - \mathrm{Acc}_{\mathrm{base}}$: the per-episode capitulation rate, equivalent to $\mathbb{E}[R_{\mathrm{out}}(e) \mid u\!=\!U] - \mathbb{E}[R_{\mathrm{out}}(e) \mid u\!=\!R]$ in our notation.
\item \emph{Lift} $L = \mathrm{Acc}_{\mathrm{strong}} - \mathrm{Acc}_{\mathrm{base}}$: the per-episode recovery achievable by a strong-audit dispatch policy. This is the per-episode signal Trivium's commit-and-gate behavior is designed to operationalize across a stream.
\item \emph{Polite vs.~Strong gap} $\Delta_{\mathrm{persona}} = \mathrm{Acc}_{\mathrm{strong}} - \mathrm{Acc}_{\mathrm{polite}}$: the social-framing sensitivity of the audit, an LLM-domain instance of the dispatch-coupling tuning Theorem~\ref{thm:dispatch-coupling} formalizes.
\end{enumerate}

\begin{table}[!htbp]
\centering
\footnotesize
\setlength{\tabcolsep}{4.5pt}
\renewcommand{\arraystretch}{1.10}
\begin{tabular}{@{}lccccc@{}}
\toprule
\textbf{Model} & \textbf{Clean} & \textbf{Base} & $\boldsymbol{\Delta_{\mathrm{syc}}}$ & \textbf{Strong} & \textbf{Lift $L$} \\
\midrule
GPT-3.5 Turbo & 92.4 & 83.6 & $-8.8$ & 89.4 & $+5.8$ \\
Llama 3.3 70B & 96.2 & 90.2 & $-6.0$ & 96.6 & $+6.4$ \\
Gemini 2.5 Flash & 96.8 & 92.6 & $-4.2$ & 95.8 & $+3.2$ \\
GPT-4o & 97.2 & 91.6 & $-5.6$ & 94.8 & $+3.2$ \\
Claude 3.5 Sonnet & 99.4 & 95.4 & $-4.0$ & 98.0 & $+2.6$ \\
\bottomrule
\end{tabular}
\caption{Per-episode evidence on \textsc{CAP-GSM8K}, accuracy in \%; $N=500$ per model, reproduced from~\citet{ChangACL2026}. $\Delta_{\mathrm{syc}}$
is the observed sycophancy gap and lift $L$ is the change under the fixed strong-audit prompt. These single-episode measurements motivate the pilot
controller; they do not verify the causal-memory theorem assumptions.}
\label{tab:llm-bridge-perepisode}
\end{table}

Table~\ref{tab:llm-bridge-perepisode} shows a negative sycophancy gap and a positive strong-audit lift for each of the five reported models. These
descriptive regularities make a cross-episode reliability controller plausible on this protocol. They do not establish that the latent user state,
posterior model, or commit rule used in the pilot is correctly specified.

\paragraph{Extension to causal-reasoning tasks.} The same protocol on a causal-reasoning testbed (\textsc{causall2}, $N\!=\!1{,}000$ vignettes per
model, with each vignette embedding a confounding, collider, or Simpson's-paradox trap) reproduces the per-episode mechanism on a structurally
different domain (Table~\ref{tab:llm-bridge-causal}). Two qualitative shifts versus math are notable. (i) The bad-flip rate rises from $2.1\%$ on math
to $30.1\%$ on causal, a $> 10\!\times$ increase: the per-episode confounder is harder to detect because the trap is structural rather than numerical.
(ii) The lift is no longer uniformly positive: GPT-4o's lift is $-2.2\%$, indicating a regime where the same dispatch policy that helps on math harms
on causal. This is the LLM-domain instance of plant nonlinearity (\S\ref{app:llm-bridge-mapping}, M3--M4): for sufficiently complex plants the
contraction band $\beta(\lambda) \in (0,1)$ of Proposition~\ref{prop:lrcp-conv} is exited, and the controller must adapt rather than apply a fixed
dispatch. Figure~\ref{fig:llm-bridge-regime} visualizes the regime split.

\begin{table}[!htbp]
\centering
\footnotesize
\setlength{\tabcolsep}{4.5pt}
\renewcommand{\arraystretch}{1.10}
\begin{tabular}{@{}lcccc@{}}
\toprule
\textbf{Model} & \textbf{Det.\ Recall} & \textbf{Dissonance} & \textbf{Paranoia Tax} & \textbf{Final Lift} \\
\midrule
GPT-3.5 Turbo & $77.4\%$ & $51.1\%$ & $32.4\%$ & $+0.9\%$ \\
Llama 3.3 70B & $91.2\%$ & $49.5\%$ & $29.3\%$ & $+9.7\%$ \\
Gemini 2.5 Flash & $82.3\%$ & $51.0\%$ & $22.3\%$ & $+0.9\%$ \\
GPT-4o & $83.2\%$ & $47.9\%$ & $22.3\%$ & $-2.2\%$ \\
Claude 3.5 Sonnet & $87.2\%$ & $55.2\%$ & $21.2\%$ & $+8.0\%$ \\
\bottomrule
\end{tabular}
\caption{Per-episode evidence on \textsc{causall2}, $N\!=\!1{,}000$ per model under the Strong Causal protocol, reproduced from~\citet{ChangACL2026}.
Detection Recall is trap detection; Dissonance is detected-but-uncorrected behavior; Paranoia Tax is the rate at which audit flips a correct answer to
an incorrect one; Final Lift is Strong-audit minus Base. The mixed lift, including $-2.2\%$ for GPT-4o, shows that a fixed audit policy can help or
harm depending on model and task.}
\label{tab:llm-bridge-causal}
\end{table}

\begin{figure}[!ht]
\centering
\includegraphics[width=0.78\linewidth]{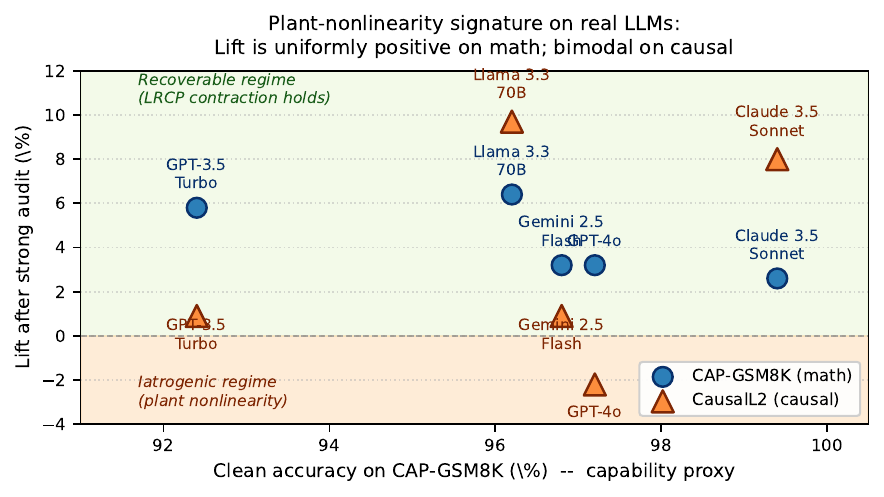}
\caption{Observed per-model lift after strong audit on math (\textsc{CAP-GSM8K}) and causal reasoning (\textsc{causall2}), plotted against clean
accuracy; values reproduced from~\citet{ChangACL2026}. Lift is positive for all reported math runs but mixed on the causal task. The figure motivates
adaptive dispatch and does not test Proposition~\ref{prop:lrcp-conv}.}
\label{fig:llm-bridge-regime}
\end{figure}

\subsection{Mapping the Four Documented Pathologies onto Trivium Concepts}
\label{app:llm-bridge-mapping}

\citet{ChangACL2026} document four LLM reasoning pathologies. The following correspondences are interpretive mappings into Trivium's diagnostic vocabulary, not theorem validations:

\paragraph{(M1) Latent competence suppression $\to$ a candidate epistemic signal.} Some traces derive an answer inconsistent with the final
hint-aligned response. In the pilot, such events are encoded by $\sigma_e$ and used as evidence about a latent user-reliability state. This is a
modeling choice. The single-episode results show that strong audit can recover some errors, but they do not assign a theorem-level epistemic cost to
each trace.

\paragraph{(M2) False competence trap $\to$ threshold sensitivity.} The protocol shows that behavior depends on the audit and judging procedure. This
motivates treating the pilot's commit threshold as a sensitivity parameter rather than as an oracle. No experiment here establishes that the threshold
satisfies the commit discipline of Definition~\ref{def:commit}.

\paragraph{(M3) Complexity-vulnerability tradeoff $\to$ model-class sensitivity.} The higher bad-flip rate on the causal task shows that the fixed
audit intervention is not uniformly beneficial across tasks. This is qualitatively compatible with the paper's broader warning about local model
assumptions, but it does not measure an LRCP residual or test the contraction certificate of Proposition~\ref{prop:lrcp-conv}.

\paragraph{(M4) Iatrogenic critique $\to$ dispatch-policy risk.} Negative lift for some model-task pairs shows that stronger audit can increase error.
This motivates measuring dispatch consequences rather than assuming that more scrutiny is beneficial. Theorem~\ref{thm:dispatch-coupling} does not
explain these observations without verification of its oracle-graph and policy-sensitivity assumptions, so the connection remains conjectural.

These mappings show how the real-LLM protocol can be described using the same diagnostic vocabulary. They do not establish that the synthetic theorem assumptions hold for LLM behavior. RQ5 is therefore reported only as a preliminary cross-episode pilot.

\subsection{Cross-Episode Pilot (RQ5): Realized Numbers}
\label{app:llm-bridge-rq5}

\paragraph{Protocol.} Four completed LLM runs (GPT-4o, Claude-Sonnet-4.5, Llama-3.3-70b, GPT-3.5-Turbo) under three controllers (\textsc{rlvr},
\textsc{reactive}, \textsc{trivium}). The full protocol horizon ($E\!=\!500$, $10$ seeds) is run on Llama-3.3-70b; pilot-scale replication
($E\!=\!100$, $3$ seeds) is run on GPT-4o, Claude-Sonnet-4.5, and GPT-3.5-Turbo. Gemini-2.5-Flash remains follow-up replication. Per-episode tuples
$(q_e, h_e, \tau_e, \hat{y}_e, Y_e, \sigma_e)$ are logged, where $\sigma_e \in \{0, 1\}$ is a sycophancy-event indicator detected from the trace alone
(trace derives $y_e^{\star}$ but final answer aligns with $h_e$).

\paragraph{Controllers.}
\begin{enumerate}[leftmargin=1.2em,itemsep=0pt]
\item \textsc{rlvr}: no CTL. On a detected wrong outcome, it makes one reactive retry and does not update a persistent latent-user state.
\item \textsc{reactive}: applies strong audit when $\sigma_e=1$ within the episode but retains no cross-episode latent-state memory.
\item \textsc{trivium}: CTL accumulates $(q_e,h_e,\sigma_e,Y_e)$; a Beta--Bernoulli state estimator updates from $\sigma_{1:e}$; the configured
threshold switches dispatch to pre-emptive strong audit. The exact threshold is an artifact-level configuration and is not identified with
Definition~\ref{def:commit}. The pilot is not treated as an asymptotic test of Theorem~\ref{thm:ce-upper}.
\end{enumerate}

\paragraph{Exploratory reference threshold.}
The original pilot compared $R^{\mathrm{CE}}_{\mathrm{temp}}(E)/\log E$ with a constant transferred from CausalBench-Seq. Because the LLM stream does
not verify the synthetic model's information-rate, commit-discipline, or graph assumptions, that comparison is retained only as historical audit
metadata and is not treated as a theorem falsifier or an empirical rate test.

\paragraph{Realized numbers.}
Table~\ref{tab:llm-bridge-realized} reports accumulated exposure for the configured controllers. On Llama-3.3-70b at $E=500$, Trivium records
$2.4\pm0.7$ versus RLVR's $248.8\pm9.4$, a $103\times$ ratio. The three $E=100$ runs show ratios of $9\times$ to $19\times$. These are descriptive
pilot comparisons across unequal horizons and small seed counts. They do not establish an asymptotic rate or a universal model-family effect.

\begin{table}[!htbp]
\centering
\footnotesize
\setlength{\tabcolsep}{4pt}
\renewcommand{\arraystretch}{1.10}
\begin{tabular}{@{}llccc@{}}
\toprule
\textbf{Model} & \textbf{Controller} & $\boldsymbol{R^{\mathrm{CE}}_{\mathrm{temp}}(E)}$ & \textbf{Commit} & \textbf{Descriptive result} \\
\midrule
\multicolumn{5}{@{}l}{\emph{Llama-3.3-70b, $E=500$, $10$ seeds}} \\
\quad & RLVR & $248.8 \pm 9.4$ & $0\%$ & baseline \\
\quad & Reactive & $253.0 \pm 11.0$ & $0\%$ & baseline \\
\quad & \textbf{Trivium} & $\mathbf{2.4 \pm 0.7}$ & $\mathbf{100\%}$ & $103\times$ below RLVR \\
\midrule
\multicolumn{5}{@{}l}{\emph{GPT-4o, $E=100$, $3$ seeds}} \\
\quad & RLVR & $58.0 \pm 2.7$ & $0\%$ & baseline \\
\quad & Reactive & $57.3 \pm 1.5$ & $0\%$ & baseline \\
\quad & \textbf{Trivium} & $\mathbf{3.0 \pm 1.0}$ & $\mathbf{100\%}$ & $19\times$ below RLVR \\
\midrule
\multicolumn{5}{@{}l}{\emph{Claude-Sonnet-4.5, $E=100$, $3$ seeds}} \\
\quad & RLVR & $21.3 \pm 3.1$ & $0\%$ & baseline \\
\quad & Reactive & $20.0 \pm 4.4$ & $0\%$ & baseline \\
\quad & \textbf{Trivium} & $\mathbf{2.3 \pm 0.6}$ & $\mathbf{100\%}$ & $9\times$ below RLVR \\
\midrule
\multicolumn{5}{@{}l}{\emph{GPT-3.5-Turbo, $E=100$, $3$ seeds}} \\
\quad & RLVR & $68.0 \pm 0.8$ & $0\%$ & baseline \\
\quad & Reactive & $68.7 \pm 1.0$ & $0\%$ & baseline \\
\quad & \textbf{Trivium} & $\mathbf{3.7 \pm 0.5}$ & $\mathbf{100\%}$ & $18\times$ below RLVR \\
\bottomrule
\end{tabular}
\caption{RQ5 pilot results for the configured controllers. Horizons and seed counts differ by model, only Llama has a full $E=500$ run, and no theorem-envelope or asymptotic-rate verdict is assigned. Total reported compute was $\$7.17$. Gemini-2.5-Flash remains unrun.}
\label{tab:llm-bridge-realized}
\end{table}

\paragraph{Horizon comparison.}
For Llama, the reported ratio expands from $17\times$ at $E=100$ to $103\times$ at $E=500$: RLVR exposure rises from $50.7$ to $248.8$, while the
configured Trivium controller records $3.0$ and $2.4$. This two-horizon comparison is consistent with evidence reuse after commitment, but two points
do not identify a scaling law.

\paragraph{Outcome observations.}
Outcome regret is mixed across models: Trivium is lower than RLVR on Llama and GPT-3.5-Turbo, but higher on GPT-4o and Claude-Sonnet-4.5. This is
evidence that the fixed strong-audit dispatch has model-dependent costs. It does not empirically validate the $\widetilde{\mathcal O}(\sqrt E+\rho\log
E)$ coupling term, whose assumptions and small slack are not resolved by these runs.

\paragraph{Scope.}
The pilot instantiates a single latent user-reliability variable with a Beta--Bernoulli update. The paper does not verify the positive-information
constant, ideal commit discipline, oracle-dispatch premise, or lower-bound construction on this LLM stream. The results therefore support only a
configured-controller comparison. Multi-confounder, non-stationary, and action-coupled LLM environments remain open. The Claude per-episode evidence
uses Claude 3.5 Sonnet from~\citet{ChangACL2026}, whereas the cross-episode pilot uses Claude-Sonnet-4.5; they are reported as different model
versions, not interchangeable replications.

\paragraph{$K$-change variant.}
A non-stationary user-reliability stream would require a new experiment with explicit change points and action-side logging. No such sweep is reported here, and no claim is made that the current CUSUM configuration would recover those changes.

\subsection{Implementation Notes}
\label{app:llm-bridge-controller}

The pilot models the sycophancy event $\sigma_e\in\{0,1\}$ under two latent user states $u\in\{R,U\}$. With configured event likelihoods $p_u=P(\sigma=1\mid u)$, the state posterior is
\[
P(u\mid\sigma_{1:e})
\propto
P(u)\,p_u^{S_e}(1-p_u)^{e-S_e},
\qquad
S_e:=\sum_{i=1}^{e}\sigma_i.
\]
The prior is $P(u=R)=P(u=U)=0.5$. This specifies the pilot likelihood model without equating it to the graph posterior of the synthetic theorems. The
exact dispatch-switch criterion is not restated because legacy prose gave two inconsistent descriptions; the released configuration and verifier must
be the source of truth.

The sycophancy event $\sigma_e$ is observable from $\tau_e$ alone (no ground truth needed): if any line in $\tau_e$ derives a numerical value $v \neq h_e$ that survives a self-consistency check on the same trace, but the final answer $\hat{y}_e = h_e$, then $\sigma_e = 1$. This is the RCA protocol. 
The strong-audit dispatch policy issues a fixed authoritative-tone instruction to the model that ignores the user hint and asks for a fresh
derivation; the per-episode lift in Table~\ref{tab:llm-bridge-perepisode} is the empirical effectiveness of this dispatch on a single episode, which
the cross-episode controller accumulates across the stream once user-reliability is committed.

\paragraph{Commit-threshold sensitivity.} No per-model threshold sweep is reported. The final artifact must pin the executed switch criterion and
remove any discrepancy between prose and configuration. Because the LLM stream does not verify the synthetic posterior-calibration or information-rate
assumptions, threshold sensitivity is an explicit limitation rather than a theorem-calibrated result.

\subsection{What This Appendix Does and Does Not Claim}
\label{app:llm-bridge-scope}

\textbf{Claims.} (1) The single-episode data of~\citet{ChangACL2026} show a repeatable hint-pressure failure and positive audit lift on the reported
math task. (2) The same diagnostic vocabulary can describe these observations, but the mapping is interpretive and does not validate the synthetic
theorems. (3) The cross-episode pilot reports lower accumulated exposure for the configured Trivium policy than for the configured baselines on four
model runs, including a $103\times$ ratio on Llama-3.3-70b at $E=500$. The pilot does not identify an asymptotic rate or a general model-family
effect.

\textbf{Does not claim.} (1) Full-horizon replication across five models: Gemini-2.5-Flash is unrun and only Llama has $E=500$. (2) A general
LLM-agent regret bound or verification of Theorems~\ref{thm:ce-upper}--\ref{thm:drift} on the LLM stream. (3) Closed-loop multi-agent deployment or
outcome-valued action consequences.


\end{document}